\documentclass{article}
\usepackage{lmodern}
\usepackage{graphicx} 

\usepackage[a4paper, total={5.5in, 8in}]{geometry}

\usepackage{tikz}
\usetikzlibrary{positioning}

\usepackage{amsfonts}
\usepackage{amsthm}
\usepackage{amsmath}
\usepackage{changepage}
\usepackage{amssymb}
\usepackage{float}
\usepackage{authblk}

\usepackage{comment}

\usepackage{xcolor}

\usepackage{tikz-cd}

\usepackage[colorlinks=true, linkcolor=blue, citecolor=blue, urlcolor=blue]{hyperref}

\newtheorem{theorem}{Theorem}
\newtheorem{example}{Example}
\newtheorem{lemma}{Lemma}
\newtheorem{observation}{Observation}
\newtheorem{proposition}{Proposition}
\newtheorem{definition}{Definition}
\newtheorem{corollary}{Corollary}

\title{Approximate Homomorphisms and Convergent Representations in Transducers}
\author[1,2]{Santiago Cifuentes}
\date{August 2026}

\affil[1]{Dovetail Research Group}
\affil[2]{ICC CONICET, Universidad de Buenos Aires}

\newcommand{\N}{\mathbb{N}}

\newcommand{\states}{\mathcal{S}}
\newcommand{\inputalph}{\mathcal{A}}
\newcommand{\outputalph}{\mathcal{O}}

\begin{document}

\maketitle

\begin{abstract}
    We study the stability of minimal representations of controlled stochastic processes (in particular, \textit{transducers}) under perturbations. This question is motivated by recent experiments finding predictive-state structure in the latent representations of neural networks. We consider standard, linear and predictive transducers. We introduce notions of approximate homomorphism capturing local structural similarity between them, together with metrics comparing their induced dynamics (which we refer to as \textit{interfaces}), and prove properties such as composability of the approximate homomorphisms. For standard transducers, we show that there exist simple interfaces for which there is no approximate homomorphism between the different implementations of the dynamics. In contrast, for every finite-rank interface $\mathcal I$, we prove that all minimal linear transducers implementing interfaces sufficiently close to $\mathcal I$ have an approximate homomorphism to the minimal implementation of $\mathcal I$, with error linear in the perturbation size. We prove an analogous stability result for predictive transducers under a residual metric using some mild hypothesis regarding the indistinguishability of the belief states. These results identify conditions under which canonical transducer representations are robust to perturbations, while showing that such convergence fails without additional structural restrictions. Under the assumption that these type of abstractions are embedded into the hidden layers of modern AI models, this gives some theoretical support to the hypothesis that their latent representations exhibit structural convergence.
\end{abstract}

\section{Introduction}

Since the beginning of neural networks, many AI architectures have included some type of hidden or intermediate layers between the input and output gates in which the models can encode partial results of their computation. In these layers the models usually learn, through training, to represent latent variables and general information useful for their goals  \cite{bengio2013representation,bereska2024mechanistic,lecun2015deep}. 

Although the training process is not deterministic and depends on things such as the training algorithm, the initial parameters and the choice of hyperparameters, it has been observed from the beginning of the deep learning revolution that some structure of these hidden layers coincides between different models \cite{li2015convergent, morcos2018insights}, even when they are implemented in different architectures. There are different ways of measuring this similarity, but for most of them this ``convergent phenomenon'' can be found \cite{klabunde2025similarity}. To name a few of these metrics, similarity can be measured by comparing the distribution of latent vectors inside each layer \cite{chen2026transferring, kornblith2019similarity}, by comparing functional aspects of the different layers \cite{klabunde2025similarity}, or by finding a linear transformation able to transfer features from one model to another \cite{bansal2021revisiting,chen2026transferring,csiszarik2021similarity,gorbett2026characterizing}. These experimental results have motivated the recent proposal of the ``Platonic Representation Hypothesis'' \cite{huh2024platonic}: the idea that ``Neural networks, trained with different objectives
on different data and modalities, are converging to a
shared statistical model of reality in their representation spaces''. Although it is unclear to what extent this hypothesis may hold \cite{ciernik2024objective,ding2021grounding,groger2026revisiting}, most results suggest that some kind of convergence can sometimes be found in different models trained for a similar task.

Overall, there are three core hypotheses which can help to understand this situation \cite{huh2024platonic}. First, the \textit{simplicity bias} hypothesis states that deep AI architectures trained through stochastic gradient descent have a tendency to converge to structurally simple representations which usually generalize well \cite{berchenko2024simplicity, kalimeris2019sgd, valle2018deep}. Meanwhile, the \textit{capacity hypothesis} states that as models include more parameters they encompass a larger set of possible behaviours, and thus it is more likely for different architectures to have a non-empty intersection regarding the instantiations they allow \cite{huh2024platonic}. Finally, the \textit{multitask hypothesis} states that there are fewer representations capable of performing multiple tasks at the same time \cite{cao2021explanatory,nguyen2020wide}, and thus as we train models for more complex goals the optimal configurations become sparser. 

There exists an ongoing theoretical program trying to give support to these hypotheses. For instance, ideas such as implicit regularization \cite{hu2024understanding} or neural collapse \cite{jiang2023generalized}, or entire frameworks such as singular learning theory \cite{watanabe2018mathematical} try to explain how modern gradient descent finds robust representations in current deep learning architectures although the number  of model parameters allows for overfitting. A different approach relies on the idea of a \textit{world model} 
\cite{ha2018world,hafner2023mastering,
rosas2025ai}. More precisely, different empirical and theoretical results support the idea that modern agents develop an internal mechanism equivalent to a description of the dynamics of the environment surrounding them \cite{cifuentes2026general,nayebi2026capable,richens2025generalagents,shai2026transformers}. If we formalize these structures using some mathematical abstraction, then we can ask the question of whether the set of these abstractions implementing the same dynamics has some shared structure \cite{rosas2025ai}. If the answer is positive, this provides some support for the idea that agents learning from a similar data distribution should share some aspects of their internal representations. See Figure~\ref{fig:diagram_world_model_approach} for a diagrammatic sketch of this idea.

\begin{figure}
    \centering
    \includegraphics[width=1\linewidth]{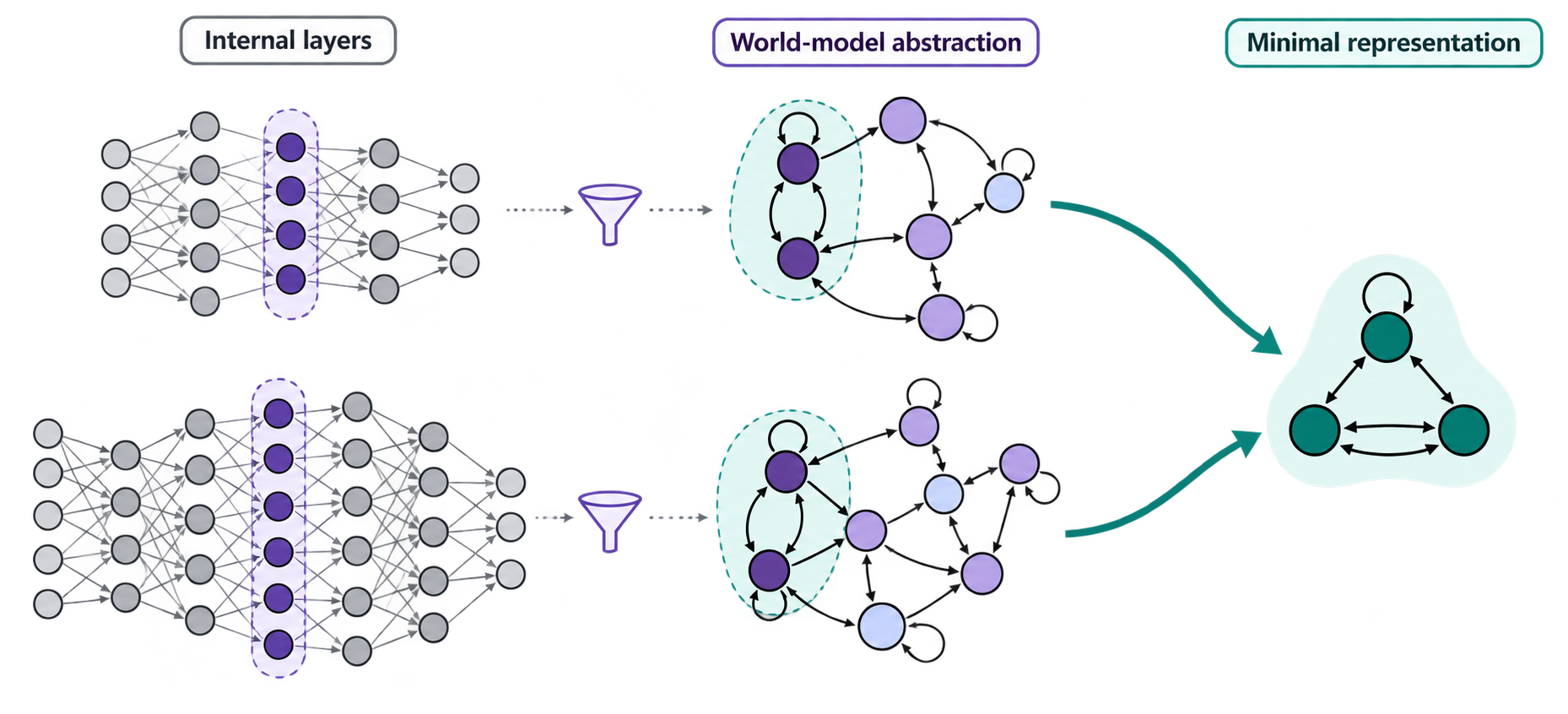}
    \caption{The world model-based approach for understanding convergent structure. We expect that each internal layer of a neural network architecture encodes a world model through some mathematical abstraction (such as a transducer). Then, we search for convergent structure in these abstractions by looking for a minimal model representing the dynamics.}
    \label{fig:diagram_world_model_approach}
\end{figure}

In this work we investigate this idea using \emph{transducers} as an abstraction of world models. A
transducer is a controlled stochastic system with hidden states, inputs, and
outputs, which induces an \emph{interface}: for every finite sequence of
interventions, the interface specifies a probability distribution over the
corresponding sequence of observations in the real world. These types of structures have been studied recently as tools to formalize world models
\cite{boyd2025monoliths,rosas2025ai}, and different experimental results support the idea that modern AI agents implement this type of structure in their residual stream \cite{shai2026transformers,shai2024transformers}. Moreover, in \cite{rosas2025ai} it was proven that in some situations the set of transducers implementing a specified behaviour has a unique minimal implementation such that all other implementations can be structurally mapped into the minimal one. From our perspective, this is a positive result encouraging the possibility of convergent structure.

Our goal is to improve this type of result by weakening some of its hypotheses. More precisely, we aim to improve the result by making it robust to noise and approximation. Consider that we have two transducers implementing a \textit{similar} behaviour (measured through some proper metric). Then, is it the case that they share some structure, measured through some other metric? Note that in practical scenarios we expect different models to learn from slightly different datasets, and thus we need the convergence of transducers to hold also in the case in which they implement slightly different interfaces.

\paragraph{Our Contributions.}
We consider three types of transducers: the ``standard'' ones, linear transducers and predictive transducers, and do the following:

\begin{enumerate}
    \item We give a notion of approximate homomorphism for all types of transducers that allows to decide when a transducer is $\varepsilon$-similar to another one. Our notion is robust with respect to composition and preserves the dynamics that the transducers represent up to an error that scales as $O(\varepsilon)$ under discounted metrics (i.e. metrics that weight differences in long-horizon predictions using a discount factor that decays exponentially with the number of steps).

    \item In the context of standard transducers, we show that there exist interfaces such that the standard transducer implementing these interfaces are structurally far away (using our notion of approximate homomorphism to measure distance). This result shows that the result from \cite{rosas2025ai} regarding the non-existence of a minimal representation for normal transducers cannot be salvaged by introducing an error term in the homomorphism.

    \item For linear transducers, we show that for a significant subset of interfaces it is the case that all minimal linear transducers implementing an interface $\varepsilon$-similar (for a sufficiently small $\varepsilon>0$) to interface $\mathcal{I}$ can be mapped to a common linear transducer introducing some error that scales as $\Gamma_\mathcal{I}\varepsilon$, where $\Gamma_\mathcal{I}$ represents a constant depending only on $\mathcal{I}$. This shows that the minimality of linear implementations is robust to noise in a small neighbourhood of the interface. This result is obtained by inspecting the canonical construction of the minimal linear transducer, which is obtained by working with the \textit{Hankel matrix} of the interface.

    \item Finally, we obtain an analogous result for predictive transducers by leveraging the construction of the minimal predictive implementation based on the notion of a $\epsilon$-machine from computational mechanics \cite{barnett2015computational}.
\end{enumerate}

Taken together, these results show that the existence of minimal representations (and thus of potential convergent structure) is robust to error for the family of linear and predictive transducers. Thus, we provide more theoretical support for the Platonic Representation Hypothesis under the hypothesis that world models show up inside the internal structure of modern AI architectures in the form of these types of abstractions. In the case of predictive transducers, this was partially observed empirically \cite{shai2026transformers,shai2024transformers}. Meanwhile, although there is no previous experiment finding linear transducers within the hidden layers of modern models, we suspect that these architectures should leverage the fact that any representation they contain is embedded in a linear space, and thus it is natural for them to prefer linear transducers over predictive ones. We believe that a fruitful direction for future work would be to reproduce the set-up of~\cite{shai2024transformers} but looking for a minimal linear transducer (or a functionally equivalent mechanism) inside the residual stream.

The remainder of the paper is organized as follows. Section~\ref{sec:old_definitions} reviews
standard, linear, and predictive transducers together with their exact
notions of reduction and minimality. Section~\ref{sec:new_definitions} introduces approximate
homomorphisms, metrics on interfaces, and the composability and continuity
results. Section~\ref{sec:approx_reduction_orders} studies approximate common minima for nearby interfaces,
giving respectively the negative result for unrestricted transducers and the
positive results for linear and predictive transducers. Finally, in Section~\ref{sec:conclusion} we conclude the paper by summarizing our results, discussing their limitations and describing future lines of research. All proofs are deferred to the Appendix to improve readability.

\paragraph{Related work.} Finite-state transducers have a long history in automata theory, beginning
with the Mealy and Moore machines
\cite{mealy1955method,moore1956gedanken}. In their deterministic form, they
describe systems whose internal state is updated in response to an input
while producing an output. Weighted and probabilistic variants replace
deterministic transitions by numerical weights or stochastic kernels, and
have been extensively studied in formal language theory
\cite{mohri1997finite}. The stochastic transducers considered here are closely related to controlled
Markov models, and in particular they resemble Markov Decision Processes (MDPs)
\cite{puterman2014markov}.

To identify convergent structure we use the notion of homomorphism, which corresponds to a  map from one transducer to another that preserves local structure. This type of ``coarse-graining'' operations have a long history in the different abstractions we mentioned before. For Markov chains,
classical lumpability identifies states whose transition probabilities agree
after aggregation \cite{kemeny1969finite}. Probabilistic bisimulation gives
a related behavioural equivalence for labelled probabilistic transition
systems \cite{larsen1991bisimulation}. In the MDP literature, \cite{givan2003equivalence} develops exact state equivalences and
model minimization based on bisimulation, while~\cite{ravindran2003smdp} formulates MDP and semi-MDP homomorphisms as
maps that preserve rewards and aggregate transition probabilities. These
notions have subsequently been organized into broader taxonomies of state
abstraction \cite{li2006towards}.

We will study notions of homomorphism that allow for some error, and thus we refer to them as approximate homomorphisms. In the context of MDPs, such approximate reductions have already been considered 
\cite{abel2016near,ravindran2004approximate,taylor2008bounding}, and our definitions as well as our robustness results (regarding composition and preservation of the interface up to discounted metrics) have analogues in the literature. In probabilistic transition
systems, the notion of approximate bisimulation has a long history~\cite{desharnais2004metrics,ferns2004metrics} and remains an active area of research~\cite{kiefer2021approximate,spork2024spectrum}.

The notion of the Hankel matrix of a process was introduced in the context of weighted automata to construct minimal linear implementations
\cite{schutzenberger1961definition}. In that setting, the minimal realizations obtained are unique up to an
invertible linear change of coordinates (i.e. a base change). Regarding predictive transducers, we use tools from computational mechanics
\cite{crutchfield1989inferring,shalizi2001computational} to obtain minimal representations. In particular, the extension of computational mechanics to input-output processes~\cite{barnett2015computational} can be applied almost directly to our context.

\section{Types of transducers}\label{sec:old_definitions}

In this section we describe the different types of transducers that we will consider in this paper.

\subsection{``Standard'' Transducers}

We use \textit{transducers} to model world models.

\begin{definition}
    A \textbf{transducer} is given by a tuple $(\states, \inputalph, \outputalph, \kappa, p)$ where $\states$ is a set of states, $\inputalph$ is a set of actions (or inputs), $\outputalph$ is a set of reactions (or outputs), $\kappa$ is a Markov kernel\footnote{In this context, a Markov kernel is simply a set of conditional distributions.} of the form $\{\kappa_\tau(s', o|s, a) : s,s'\in \states, a \in \inputalph, o \in \outputalph, \tau \in \N\}$ and $p \in \Delta(\states)$ is an initial distribution over the set of states. 
\end{definition}

Transducers represent a world model indicating, for every possible sequence of actions $a_1 \ldots a_n$, a distribution on the reaction $o_1 \ldots o_n$ of the environment. They use (hidden) states to keep track of the previous events, and the function $\kappa$ describes the relation between actions and outputs and how the state is updated. This function can depend on the timestep $\tau \in \N$, but for simplicity we will assume that $\kappa_\tau = \kappa_0$ for every $\tau$ (this corresponds to assuming \textit{stationary} dynamics). Also, we may omit $\kappa$ in the notation and simply talk about the probabilities of the events. For example, we write $\Pr{}_{T}\left(o| s, a\right)$ to denote the value $\sum_{s' \in \states} \kappa(s', o | s, a)$, or similarly $\Pr{}_T\left(s' | s,a,o\right)$ to denote $\kappa(s', o | s,a ) / \Pr{}_T(o | s,a)$ whenever the denominator is positive. We will assume for simplicity that $\states, \inputalph$ and $\outputalph$ are countable.

\newcommand{\interface}{\mathcal{I}}

A \textit{trace} over $T$ is a finite or infinite sequence of outputs.
As mentioned, any transducer defines a probability for each trace \textit{conditioned} on each sequence of actions. We refer to such a description (i.e. a list of probabilities $\Pr\left( o_1 \ldots o_n | \textbf{a}\right)$ for every finite sequence $o_1\ldots o_n \in \outputalph^n$ and infinite sequence $\mathbf{a} \in \inputalph^\omega$) as an \textbf{interface} $\interface$. We will only be interested in anticipation-free interfaces, i.e. those that satisfy $\Pr\left(o_1\ldots o_n | \textbf{a}\right) = \Pr\left(o_1\ldots o_n | a_1\ldots a_n\right)$. Anticipation-free interfaces coincide exactly with interfaces ``implementable'' by transducers \cite{rosas2025ai}[Lemma 4].

More precisely, given $o_1\ldots o_n$ and $a_1 \ldots a_n$ the (conditioned) probability that the transducer $T$ induces can be computed as
\begin{align}\label{eq:induced_interface_by_transducer}
    \Pr{}_T\left(o_1\ldots o_n | a_1 \ldots a_n\right) = \sum_{s_0 \ldots s_n \in \states^{n+1}} p(s_0) \prod_{t=1}^n \kappa(s_t, o_t | s_{t-1}, a_t)
\end{align}

Whenever $|\states|,|\inputalph|,|\outputalph| < \infty$ this computation can be simplified: if $M_{a,o} \in \mathbb{R}^{|\states| \times |\states|}$ is given by $M_{a, o}(s,s') = \kappa(s', o | s, a)$, then
\begin{align*}
    \Pr{}_T (o_1\ldots o_n | a_1\ldots a_n) = p M_{a_1, o_1} \ldots M_{a_n, o_n} \mathbf{1}
\end{align*}

\begin{example}\label{ex:transducer}
    \normalfont Figure~\ref{fig:example_transducer} shows a transducer with deterministic dynamics (i.e. for every $s\in \states, a\in \inputalph$ there is some $s'\in \states$ and $o \in \outputalph$ such that $\kappa(s', o| s, a) = 1$). The states are $\states = \{s_0, s_1\}$, the actions $\inputalph = \{\texttt{continue}, \texttt{stay}\}$, and the possible outputs $\outputalph = \{0, 1\}$. The initial distribution is concentrated in state $s_0$. It represents a system that outputs $010101\ldots$ indefinitely as long as the action $\texttt{continue}$ is chosen at each step. If $\texttt{stay}$ is employed instead, the dynamics are ``frozen'' for one step.

    \begin{figure}
        \centering
        \begin{tikzpicture}[
            state/.style={circle, draw, minimum size=1.2cm, align=center},
            >=stealth,
            node distance=4cm
        ]
            
            \node[state] (q0) {$s_0$};
            \node[state, right of=q0] (q1) {$s_1$};
            
            \path[->]
                (q0) edge[loop left] node[left] {\texttt{stay}/0} (q0)
                (q1) edge[loop right] node[right] {\texttt{stay}/1} (q1)
                (q0) edge[bend left=20] node[above] {\texttt{continue}/0} (q1)
                (q1) edge[bend left=20] node[below] {\texttt{continue}/1} (q0);
        
        \end{tikzpicture}
        \caption{Example of a ``deterministic'' transducer. Each edge contains an action $a$ followed by an output $o$. An edge from $s$ to $s'$ with label $a/o$ indicates that $\kappa(s', o | s, a) = 1$.}
        \label{fig:example_transducer}
    \end{figure}
\end{example}

We will assume that all the states of a transducer are reachable from some state with initial positive probability. Namely, for every state $s$ there must exist a state $s_0$ such that $p(s_0) > 0$ and a sequence of actions $a_1 \ldots a_k$ such that $\Pr_{T}\left( s| s_0, a_1\ldots a_k\right) > 0$. Also, we will sometimes use $\Sigma = \inputalph \times \outputalph$.

There is a well-defined notion of \textbf{homomorphism} for these objects, which allows us to coarse-grain states as well as input and output symbols.

\begin{definition}\label{def:transducer_homomorphism}
    Given two transducers $T_1 = (\states_1, \inputalph_1, \outputalph_1, \kappa_1, p_1)$ and $T_2 = (\states_2, \inputalph_2, \outputalph_2, \kappa_2, p_2)$, a \textbf{homomorphism} is given by three mappings $\langle \phi:\states_1 \to \states_2, f:\inputalph_1 \to \inputalph_2, g:\outputalph_1 \to \outputalph_2\rangle$ satisfying
    \begin{align}\label{eq:exact_kernel_pushforward}
        \kappa_2(s_2,o_2|\phi(s_1),f(a_1))
        &=
        \sum_{\substack{s'\in \phi^{-1}(s_2)\\ o'\in g^{-1}(o_2)}}
        \kappa_1(s',o'|s_1,a_1),\\
        p_2(s_2) &= \sum_{s_1\in \phi^{-1}(s_2)}p_1(s_1),\label{eq:exact_initial_pushforward}
    \end{align}
    for every $s_1\in \states_1$, $a_1\in \inputalph_1$, $s_2\in \states_2$, and $o_2\in \outputalph_2$.
\end{definition}

\newcommand{\idfun}{\texttt{id}}

Condition~\eqref{eq:exact_kernel_pushforward} says that the joint one-step distribution on the next state and output is preserved after applying the coarse-grainings $\phi$ and $g$ (and translating actions through $f$). Condition~\eqref{eq:exact_initial_pushforward} makes sure that the initial distributions are equivalent up to $\phi$.

If we require $\outputalph_1 = \outputalph_2$ and $\inputalph_1 = \inputalph_2$ then both transducers have the same ``type''. Moreover, if also $f = g = \idfun{}$ and $\phi$ is surjective we say that the homomorphism is a \textit{reduction}.

\begin{example}\label{ex:bigger_transducer}
    \normalfont Consider the transducer from Figure~\ref{fig:example_bigger_transducer}. There is a reduction from this transducer to the one from Figure~\ref{fig:example_transducer}: define $\phi$ as $\phi(s_0) = \phi(s_2) = s_0$ and $\phi(s_1) = \phi(s_3) = s_1$, while taking $f =g =\idfun{}$. In some sense, the transducer from Figure~\ref{fig:example_bigger_transducer} implements the interface in an ``inefficient'' manner.

    \begin{figure}
        \centering
            \begin{tikzpicture}[
            state/.style={circle, draw, minimum size=1.2cm, align=center},
            >=stealth,
            node distance=4cm
        ]
            
                \node[state] (x0) at (0,1.5) {$s_0$};
                \node[state] (x1) at (4,1.5) {$s_1$};
                \node[state] (y1) at (4,-1.5) {$s_2$};
                \node[state] (y0) at (0,-1.5) {$s_3$};
                
                \path[->] (x0) edge[loop above] node {$\texttt{stay}/0$} (x0);
                \path[->] (x1) edge[loop above] node {$\texttt{stay}/1$} (x1);
                \path[->] (y1) edge[loop below] node {$\texttt{stay}/0$} (y1);
                \path[->] (y0) edge[loop below] node {$\texttt{stay}/1$} (y0);
                
                \path[->] (x0) edge node[above] {$\texttt{continue}/0$} (x1);
                \path[->] (x1) edge node[right] {$\texttt{continue}/1$} (y1);
                \path[->] (y1) edge node[below] {$\texttt{continue}/0$} (y0);
                \path[->] (y0) edge node[left] {$\texttt{continue}/1$} (x0);
                
            \end{tikzpicture}
        \caption{Example of a ``deterministic'' transducer that implements the same interface as the one from Figure~\ref{fig:example_transducer}. An edge from $s$ to $s'$ with label $a / o$ indicates that $\kappa(s', o|s, a) = 1$.}
        \label{fig:example_bigger_transducer}
    \end{figure}
\end{example}

Reductions can be composed. Thus, after fixing an interface $\interface$ we can look at the set of transducers implementing $\interface$, and if we quotient them properly (identifying transducers $T_1$ and $T_2$ such that there are reductions both from $T_1$ to $T_2$ and from $T_2$ to $T_1$) then the reduction relation gives the set a \textit{poset} structure.

In \cite{rosas2025ai} some properties of these posets are proven, and in particular the fact that in general they need not have a unique minimum. This situation can be salvaged in at least two ways. First, if we consider linear transducers (which allow for ``negative'' probabilities), then uniqueness of the minimum can be proven \cite{rosas2025ai}[Theorem 2]. Second, we can restrict attention to the subposet of \textit{predictive} transducers (intuitively, those whose state transitions are deterministic given the last state, action and output): in that case, there is a unique minimum, and it coincides with the $\epsilon$-machine from computational mechanics \cite{barnett2015computational} representing the dynamics \cite{rosas2025ai}[Theorem 3].

Before proceeding, we note that the definition of homomorphism we introduced is not exactly the same as the one from \cite{rosas2025ai}. In Appendix~\ref{apx:comparison_homo} we compare them and show nonetheless that they coincide when we restrict to reductions. Later we will see that our proposal is easier to extend to the approximate setting. In particular, Condition~\eqref{eq:exact_kernel_pushforward} states that $\kappa_2$ must be equal to the pushforward of $\kappa_1$ through $\phi$ and $g$. Thus, we can introduce an error term in the homomorphism by comparing $\kappa_2$ to this pushforward using any distance between distributions.

\subsection{Linear transducers}

We will also consider \textit{linear transducers}: a model of a transducer in which the states are embedded in a vector space. From now on we define, for every interface $\interface$, its associated formal series: for every $w \in \Sigma^* = (\inputalph \times \outputalph)^*$ let
\[
    F_\interface(w) = F_{\interface}((a_1, o_1)\ldots (a_n, o_n))
    =
    \Pr{}_{\interface}(o_1\cdots o_n\mid a_1\cdots a_n),
    \qquad F_\interface(\epsilon)=1,
\]
where $\epsilon$ denotes the empty word. 

\begin{definition}
    A \textbf{linear transducer} over $(\inputalph,\outputalph)$ is a tuple $G=(V,\xi,\lambda,\{M_\sigma\}_{\sigma\in\Sigma})$,
    where $V$ is a real vector space, $\xi\in V$ is an initial vector, $\lambda\in V^*$ is a
    linear functional, and each $M_\sigma:V\to V$ is a linear map. For a word
    $w=\sigma_1\cdots\sigma_n$ we define $M_w$ recursively by
    \[
        M_\epsilon=\mathrm{id}_V,
        \qquad
        M_{w\sigma}=M_\sigma M_w.
    \]
    The linear transducer generates the formal series
    \[
        F_G(w)=\lambda(M_w\xi).
    \]
    We say that $G$ implements an interface $\interface$ if $F_G=F_\interface$. We will assume without loss of generality that $V=\operatorname{span}\{M_w\xi:w\in\Sigma^*\}$, i.e. that the whole space $V$ is ``used'' by the transducer\footnote{We add this hypothesis to improve the clarity of our exposition. All results still hold when removing this condition.}.
\end{definition}

The size of a linear transducer is measured by the dimension of the linear space needed
to represent the series, which can be infinite. 

We now introduce the concept of the \textit{Hankel matrix} of the interface.

\begin{definition}
    The \textbf{Hankel matrix} of $\interface$ is the infinite matrix
    \[
        H_\interface(u,v)=F_\interface(uv),
        \qquad u,v\in\Sigma^* .
    \]
    For each prefix $u\in\Sigma^*$ define the row $h_\interface(u):\Sigma^*\to\mathbb R$ such that $h_\interface(u)(v)=F_\interface(uv)$. Then, the dimension of $\interface$ is
    \[
        \dim(\interface)=\operatorname{rank}(H_\interface)
        =\dim \operatorname{span}\{h_\interface(u):u\in\Sigma^*\}.
    \]
    If this rank is finite, we call $\interface$ a finite-rank interface.
\end{definition}

A linear transducer implementing an interface can be obtained from its Hankel matrix. Let
\[
    V_\interface=\operatorname{span}\{h_\interface(u):u\in\Sigma^*\}.
\]
Pick $\xi_\interface=h_\interface(\epsilon)$ and let
$\lambda_\interface:V_\interface\to\mathbb R$ be evaluation at the empty suffix as
\[
    \lambda_\interface(r)=r(\epsilon).
\]
For every $\sigma\in\Sigma$, define the shift operator
$R^\interface_\sigma:V_\interface\to V_\interface$ by
\[
    R^\interface_\sigma h_\interface(u)=h_\interface(u\sigma),
\]
and extend linearly. This is well-defined: if $\sum_i \alpha_i h_\interface(u_i)=0$, then
for every suffix $v$,
\[
    \sum_i \alpha_i h_\interface(u_i\sigma)(v)
    =
    \sum_i \alpha_i F_\interface(u_i\sigma v)
    =
    \sum_i \alpha_i h_\interface(u_i)(\sigma v)
    =0.
\]
Thus, it follows that $G_\interface=(V_\interface,\xi_\interface,\lambda_\interface,\{R^\interface_\sigma\}_{\sigma\in\Sigma})$ is a linear transducer, and $\lambda_\interface(R^\interface_w\xi_\interface)=F_\interface(w)$ for every word $w$.

This representation is minimal: if
$G=(V,\xi,\lambda,\{M_\sigma\})$ implements $\interface$, then
\[
    h_\interface(u)(v)=F_\interface(uv)=\lambda(M_v M_u\xi).
\]
for every word $u, v$. Hence all Hankel rows are obtained from vectors $M_u\xi\in V$, so
\[
    \operatorname{rank}(H_\interface)\leq \dim V.
\]
Note that minimal linear transducers are unique up to invertible
linear changes of coordinates.

In the context of linear transducers we will use \textit{linear reductions} to formalize the idea of homomorphisms between models.

\begin{definition}\label{def:linear_reduction}
Let $G=(V,\xi,\lambda,\{M_\sigma\}_{\sigma\in\Sigma})$ and $G'=(W,\xi',\lambda',\{N_\sigma\}_{\sigma\in\Sigma})$
be linear transducers over the same input and output alphabets. A \textbf{linear
reduction} from $G$ to $G'$ is a surjective linear map $L:V\to W$ satisfying
\begin{enumerate}
    \item $L\xi=\xi'$.
    \item $L M_\sigma=N_\sigma L \text{ for every }\sigma\in\Sigma$.
    \item $\lambda' L=\lambda$.
\end{enumerate}
\end{definition}

The first condition preserves the initial vector, the second says that $L$ translates the
internal dynamics, and the third preserves the ``reading'' of the vectors. These
conditions imply that $\lambda'(N_w\xi')=\lambda(M_w\xi)$ for every word $w$.

By the construction above, it can be proven that any linear transducer $G$ implementing $\interface$ can  be reduced to $G_{\interface}$.

\providecommand{\interfaceJ}{\mathcal{J}}

\begin{lemma}\label{lem:prediction-map}
    Let $G=(V,\xi,\lambda,\{M_\sigma\})$ be a linear transducer implementing
    an interface $\interface$. Then there is a linear reduction $\rho_G:G\to G_\interface$
    given by
    \[
        \rho_G(M_w\xi)=h_\interface(w).
    \]
\end{lemma}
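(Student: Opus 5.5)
The plan is to define $\rho_G$ first on the spanning set $\{M_w\xi : w\in\Sigma^*\}$ of $V$ by $\rho_G(M_w\xi)=h_\interface(w)$, extend it linearly, and check the three conditions of Definition~\ref{def:linear_reduction} together with surjectivity. The crux is well-definedness. Because $V=\operatorname{span}\{M_w\xi\}$, every vector is a finite combination $\sum_i\alpha_i M_{w_i}\xi$, but this representation need not be unique. So I would show that $\sum_i \alpha_i M_{w_i}\xi=0$ forces $\sum_i\alpha_i h_\interface(w_i)=0$. Under that implication, the assignment $\sum_i\alpha_iM_{w_i}\xi\mapsto\sum_i\alpha_i h_\interface(w_i)$ is a well-defined linear map $V\to V_\interface$. (If $V$ is infinite-dimensional this still works, since only finite combinations are involved.)

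For the key implication, take any suffix $v\in\Sigma^*$. Using $M_{uv}=M_vM_u$, which follows from the recursive definition $M_{w\sigma}=M_\sigma M_w$ by induction on $|v|$, we get
\[
\sum_i\alpha_i h_\interface(w_i)(v)=\sum_i\alpha_iF_\interface(w_iv)=\sum_i\alpha_i\lambda(M_vM_{w_i}\xi)=\lambda\Bigl(M_v\sum_i\alpha_iM_{w_i}\xi\Bigr)=0 .
\]
This is exactly the computation displayed before the lemma to justify minimality. The obstacle here is only bookkeeping about the order of composition in $M_w$, so I expect this step to be the main, though mild, difficulty.

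The remaining conditions are checked on spanning vectors.
\begin{enumerate}
\item Initial vector: $\rho_G(\xi)=\rho_G(M_\epsilon\xi)=h_\interface(\epsilon)=\xi_\interface$.
\item Intertwining: $\rho_G M_\sigma(M_w\xi)=\rho_G(M_{w\sigma}\xi)=h_\interface(w\sigma)=R^\interface_\sigma h_\interface(w)=R^\interface_\sigma\rho_G(M_w\xi)$. Linearity then gives $\rho_GM_\sigma=R^\interface_\sigma\rho_G$.
\item Readout: $\lambda_\interface(\rho_G(M_w\xi))=h_\interface(w)(\epsilon)=F_\interface(w)=\lambda(M_w\xi)$, so $\lambda_\interface\rho_G=\lambda$ on a spanning set and hence everywhere.
\item Surjectivity: the image contains every $h_\interface(w)$, and these span $V_\interface$ by definition.
\end{enumerate}
Hence $\rho_G$ is a linear reduction from $G$ to $G_\interface$, as claimed.
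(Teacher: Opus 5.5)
Your proposal is correct and follows the paper's proof essentially step for step. Both establish well-definedness via the implication $\sum_i\alpha_iM_{w_i}\xi=0\Rightarrow\sum_i\alpha_ih_\interface(w_i)=0$, by evaluating at every suffix $v$, then obtain surjectivity from the Hankel rows spanning $V_\interface$ and check the three reduction identities on the spanning vectors $M_w\xi$. The only difference is that you spell out those identities and the composition order $M_{uv}=M_vM_u$, which the paper says follow directly.
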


\begin{example}\label{ex:linear_alternating}
\normalfont The deterministic transducer $T$ from Figure~\ref{fig:example_transducer}
admits a simple linear representation. Let
\[
V=\mathbb{R}^2,
\qquad
\xi=
\begin{pmatrix}
1\\0
\end{pmatrix},
\qquad
\lambda(x_0,x_1)=x_0+x_1,
\]
where the two standard basis vectors represent the states $s_0$ and
$s_1$. Consider the transition maps
\[
M_{\mathsf{continue},0}
=
\begin{pmatrix}
0&0\\
1&0
\end{pmatrix},
\qquad
M_{\mathsf{continue},1}
=
\begin{pmatrix}
0&1\\
0&0
\end{pmatrix},
\]
and
\[
M_{\mathsf{stay},0}
=
\begin{pmatrix}
1&0\\
0&0
\end{pmatrix},
\qquad
M_{\mathsf{stay},1}
=
\begin{pmatrix}
0&0\\
0&1
\end{pmatrix}.
\]
Then, the linear transducer $G = (V, \xi, \lambda, (M_\sigma)_{\sigma \in \Sigma})$ implements the same interface as $T$.
\end{example}

In general, every standard transducer $T$ can be transformed into a linear transducer whose underlying space has dimension equal to the number of states of $T$.

\subsection{Predictive transducers}
\label{subsec:predictive_transducers}

We finally consider predictive transducers, which correspond to transducers whose internal states do not contain predictive information that is unavailable from the observable input--output history. From now on, we say that a history $h = (a_1,o_1) \ldots (a_n, o_n) \in \Sigma^*$ is \emph{admissible} for an interface $\interface$ if
$\Pr_{\interface}(o_1\cdots o_n\mid a_1\cdots a_n)>0$. The empty history is always admissible. For any admissible history $h=(a_1,o_1)\ldots(a_n,o_n)$ of an interface $\interface$, we consider the \textit{residual interface} $\interface^h$ as the interface obtained by conditioning on $h$: for every $u\in\inputalph^m$ and $v\in\outputalph^m$,
\begin{align}
    \Pr{}_{\interface^h}(v\mid u)
    :=
    \frac{
        \Pr{}_{\interface}(o_1\cdots o_n v\mid a_1\cdots a_n u)
    }{
        \Pr{}_{\interface}(o_1\cdots o_n\mid a_1\cdots a_n)
    }.
    \label{eq:residual_interface}
\end{align}

For a transducer $T=(\states,\inputalph,\outputalph,\kappa,p)$ and a state $s\in\states$, let $\interface_{T,s}$ denote the interface generated by the same kernel $\kappa$ with initial distribution $\delta_s$ (i.e. when all probability mass is concentrated on $s$). For an admissible history $h=(a_1,o_1)\ldots (a_n, o_n)$, also write
\[
    q_T(s\mid h)
    :=
    \Pr{}_T(S_n=s\mid o_{1:n},a_{1:n})
\]
for the posterior distribution over the internal state at step $n$ after observing $h$.

\begin{definition}
\label{def:predictive_transducer}
Let $T$ be a transducer. We say that $T$ is \textbf{predictive} if, for every admissible history $h$ and every state $s$ such that $q_T(s\mid h)>0$,
\begin{align}
    \interface_{T,s}=\interface_T^h.
    \label{eq:predictive_transducer}
\end{align}
Equivalently, conditional on the observable history, knowing the current internal state does not change the expected distribution for future events.
\end{definition}

For this class of transducers there is always a minimal implementation of each interface, and it can be constructed explicitly. To do this, identify histories that make exactly the same predictions: for admissible histories $h$ and $h'$, define the \textit{predictive equivalence relation} as
\begin{align*}
    h\sim_{\interface}h'
    \quad\Longleftrightarrow\quad
    \interface^h=\interface^{h'}.
\end{align*}
Denote the equivalence class of $h$ by $[h]_{\interface}$ and let
\[
    \states_{\epsilon}(\interface)
    :=
    \{[h]_{\interface}:h\text{ is admissible for }\interface\}.
\]
The transitions are defined in the expected way in the next definition. This construction corresponds to the notion of an $\epsilon$-machine from computational mechanics \cite{barnett2015computational}.

\begin{definition}
\label{def:epsilon_transducer}
Let $\interface$ be an interface. Its \textbf{$\epsilon$-transducer} is given by
\[
    E(\interface)
    =
    (\states_{\epsilon}(\interface),\inputalph,\outputalph,
      \kappa_{\epsilon},\delta_{[\epsilon]_{\interface}}),
\]
where, for every admissible history $h$, action $a$, and output $o$, we set
\begin{align}
    \kappa_{\epsilon}(s',o\mid[h]_{\interface},a)
    &:=
    \mu_{\epsilon}(o\mid[h]_{\interface},a)
    \mathbf{1}\{s'=\delta_{\epsilon}([h]_{\interface},a,o)\}.
    \label{eq:epsilon_kernel}
\end{align}
where $\mu_{\epsilon}(o\mid[h]_{\interface},a)
    =\Pr_{\interface^h}(o\mid a)$ and $\delta_{\epsilon}([h]_{\interface},a,o)
    =[h(a,o)]_{\interface}$.
\end{definition}

Observe that this transducer evolves \textit{deterministically}: for every state $s$, input $a$ and output $o$ there is a unique next possible state $s'$. This ensures that Eq.~\eqref{eq:predictive_transducer} is satisfied.

The next proposition states that this implementation is the minimal one among the predictive ones.

\begin{proposition} \label{prop:epsilon_transducer_minimal}
    The transducer $E(\interface)$ implements $\interface$ and is predictive. Moreover, if
    $T=(\states,\inputalph,\outputalph,\kappa,p)$ is any predictive transducer implementing $\interface$, then there is a reduction from $T$ to $E(\interface)$.
\end{proposition}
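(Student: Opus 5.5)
The proof has three parts: $E(\interface)$ implements $\interface$, it is predictive, and every predictive $T$ reduces to it.

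\emph{Implementation.} First check that $\kappa_\epsilon$ is well defined, i.e. independent of the representative $h$. If $h\sim_\interface h'$ then $\interface^h=\interface^{h'}$, so $\Pr_{\interface^h}(o\mid a)=\Pr_{\interface^{h'}}(o\mid a)$. For the successor state, note that whenever $h(a,o)$ is admissible, $\interface^{h(a,o)}=(\interface^h)^{(a,o)}$ by \eqref{eq:residual_interface}, so $h(a,o)\sim h'(a,o)$. When $\Pr_{\interface^h}(o\mid a)=0$ the kernel vanishes and the successor is irrelevant. Then prove by induction on $n$ that, from the initial state $[\epsilon]$, reading $(a_1,o_1)\ldots(a_n,o_n)$ with positive probability puts the state deterministically at $[h]$ with $h=(a_1,o_1)\ldots(a_n,o_n)$. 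By \eqref{eq:induced_interface_by_transducer},
\[
\Pr{}_{E}(o_{1:n}\mid a_{1:n})=\prod_t \Pr{}_{\interface^{h_{t-1}}}(o_t\mid a_t),
\]
and this telescopes via \eqref{eq:residual_interface} to $\Pr_\interface(o_{1:n}\mid a_{1:n})$. Anticipation-freeness is needed here so that $\Pr_{\interface^{h}}(o\mid a)$ is a genuine marginal.

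\emph{Predictivity.} The posterior $q_E(\cdot\mid h)$ is the point mass at $[h]$. By the induction above applied to the kernel started at $[h]$, we get $\interface_{E,[h]}=\interface^h$, which is \eqref{eq:predictive_transducer}.

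\emph{Reduction.} Let $T$ be predictive and implement $\interface$. For each state $s$, use reachability to pick an admissible $h$ with $q_T(s\mid h)>0$, and set $\phi(s)=[h]$.
\begin{itemize}
\item \textbf{Well-definedness.} If $q_T(s\mid h)>0$ and $q_T(s\mid h')>0$, predictivity gives $\interface^h=\interface_{T,s}=\interface^{h'}$, so $\phi(s)$ is independent of the choice. The key subtlety is that reachability alone does not guarantee such an admissible $h$ exists. Reaching $s$ with positive probability along some actions does yield positive posterior for some output sequence, and that sequence is admissible. This needs care with countable state spaces, but the summation is finite per path.
\item \textbf{Surjectivity.} Every admissible $h$ has some $s$ with $q_T(s\mid h)>0$.
\item \textbf{Initial condition \eqref{eq:exact_initial_pushforward}.} Every $s$ with $p(s)>0$ has $q_T(s\mid\epsilon)>0$, so $\phi(s)=[\epsilon]$, and all of $p$ is pushed onto $\delta_{[\epsilon]}$.
\item \textbf{Kernel condition \eqref{eq:exact_kernel_pushforward}.} This is the main obstacle. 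Fix $s$ with $\phi(s)=[h]$, an action $a$ and an output $o$.
\begin{itemize}
\item Summing over all $s'$: $\sum_{s'}\kappa(s',o\mid s,a)=\Pr_{\interface_{T,s}}(o\mid a)=\Pr_{\interface^h}(o\mid a)$, which matches the total mass of $\kappa_\epsilon$.
\item It remains to show that every $s'$ with $\kappa(s',o\mid s,a)>0$ satisfies $\phi(s')=[h(a,o)]$. By Bayes, $q_T(s'\mid h(a,o))\ge q_T(s\mid h)\,\kappa(s',o\mid s,a)/\Pr(o\mid h,a)>0$, and $h(a,o)$ is admissible. Hence $\phi(s')=[h(a,o)]=\delta_\epsilon([h],a,o)$ by well-definedness.
\end{itemize}
Consequently, all mass of $\kappa(\cdot,o\mid s,a)$ lands in $\phi^{-1}(\delta_\epsilon([h],a,o))$, which gives exact equality of the pushforward with $\kappa_\epsilon$.
\end{itemize}
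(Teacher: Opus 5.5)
Your proof is correct and follows the same route as the paper. You define $\phi(s)=[h]_{\interface}$ for any admissible history $h$ with $q_T(s\mid h)>0$, get well-definedness from predictivity, and show via positivity of $q_T(s'\mid h(a,o))$ that all the $o$-mass of $\kappa(\cdot,o\mid s,a)$ lands on $[h(a,o)]_{\interface}$. Along the way you also check details the paper leaves implicit: that $\kappa_\epsilon$ does not depend on the representative (via $\interface^{h(a,o)}=(\interface^h)^{(a,o)}$), the telescoping of the product of one-step laws, and that reachability yields an admissible compatible history.
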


\begin{example}
    \normalfont The transducer from Figure~\ref{fig:example_transducer} is predictive. Moreover, it is also the minimal predictive transducer for that interface.
\end{example}

See Figure~\ref{fig:minimal_models} for a diagram showcasing the structure of the poset of standard, linear and predictive transducers. As already mentioned, due to Lemma~\ref{lem:prediction-map} and Proposition~\ref{prop:epsilon_transducer_minimal} the poset for linear and predictive transducers each has a minimum for every interface. Meanwhile, for the case of standard transducers there are interfaces for which there is no unique minimum.

\begin{figure}[ht]
    \centering
    \includegraphics[width=1\linewidth]{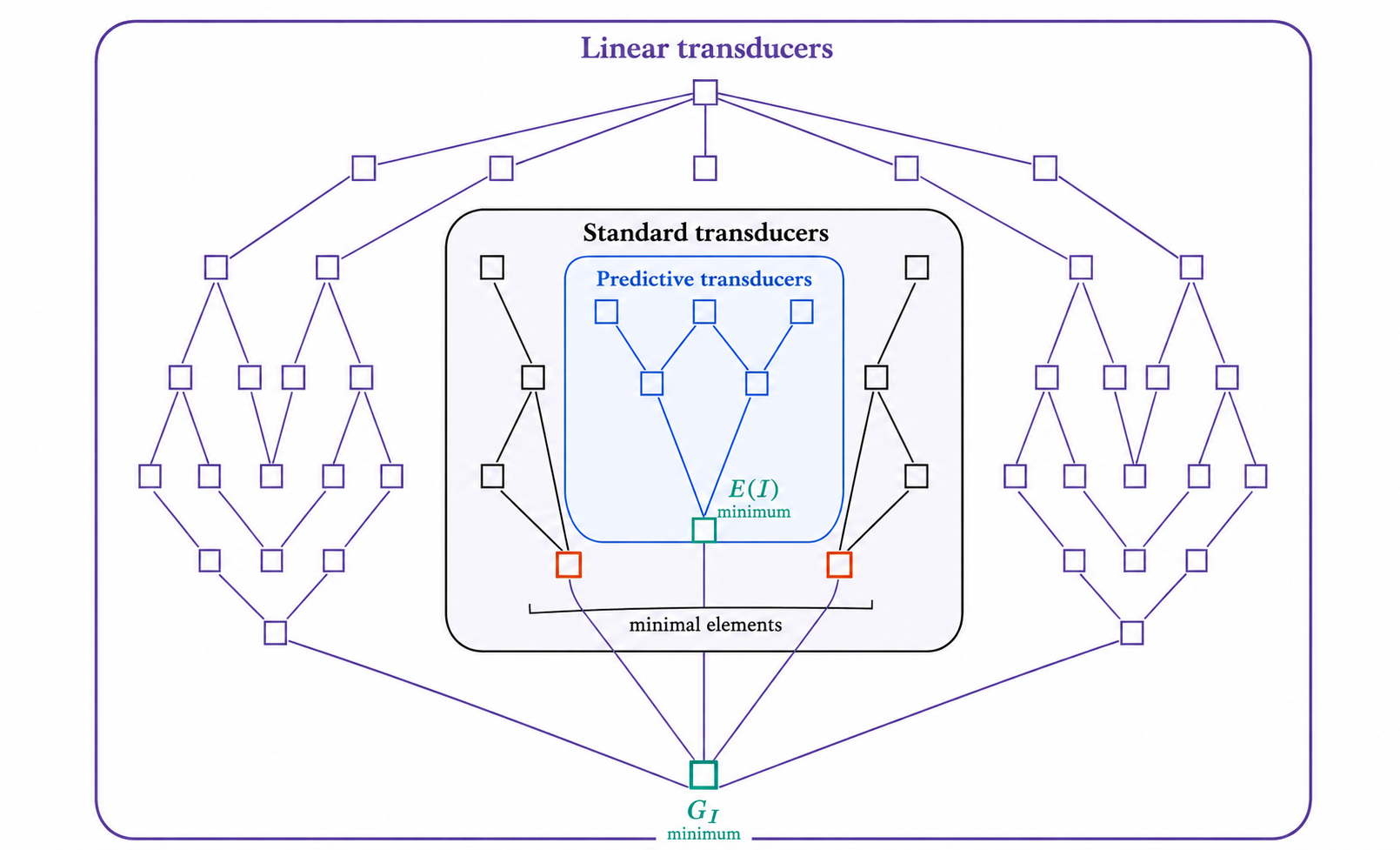}
    \caption{A diagram of the lattice of transducers for standard, linear and predictive implementations. Due to Lemma~\ref{lem:prediction-map} and Proposition~\ref{prop:epsilon_transducer_minimal} the lattice of linear and predictive transducers has a unique minimum, while the one of standard transducers can have more than one minimal element.}
    \label{fig:minimal_models}
\end{figure}

\section{Approximate homomorphisms and the space of interfaces}\label{sec:new_definitions}

In this section we provide approximate variants of the notions of homomorphisms introduced in the previous section, and prove some basic properties.

\subsection{The case of ``standard'' transducers}

The type of coarse-grainings that Definition~\ref{def:transducer_homomorphism} allows is exact in a strong structural sense. It says that the whole one-step mechanism of $T_2$ is obtained by pushing forward the one-step mechanism of $T_1$ along the maps $\phi$, $f$, and $g$. Thus, if two states of $T_1$ are identified by $\phi$, they must have exactly the same coarse-grained output law and exactly the same coarse-grained transition law.

For real world models obtained through learning or other iterative procedures we don't expect them to be structurally identical. Thus, the purpose of approximate homomorphisms is to introduce some degree of error in this notion. We keep the maps $\phi$, $f$, and $g$; but we now allow the push-forward dynamics to differ by some $\varepsilon > 0$.

\begin{definition}\label{def:trans_approx_hom}
    Given two transducers $T_1 = (\states_1, \inputalph_1, \outputalph_1, \kappa_1, p_1)$ and $T_2 = (\states_2, \inputalph_2, \outputalph_2, \kappa_2, p_2)$, a \textbf{$\varepsilon$-homomorphism} is given by three mappings $\langle \phi: \states_1 \to \states_2, f:\inputalph_1 \to \inputalph_2, g: \outputalph_1 \to \outputalph_2\rangle$ satisfying
    \begin{align}
        \left\|(\phi\times g)_*\kappa_1(\cdot,\cdot|s_1,a_1)
        -
        \kappa_2(\cdot,\cdot|\phi(s_1),f(a_1))\right\|_{\mathrm{TV}}
        &\leq \varepsilon
        \label{eq:approx_kernel_pushforward}\\
        \left\|\phi_*p_1-p_2\right\|_{\mathrm{TV}}
        &\leq \varepsilon,\label{eq:approx_initial_pushforward}
    \end{align}
    for every $s_1\in \states_1$ and $a_1\in \inputalph_1$\footnote{Here $(\phi\times g)_*$ and $\phi_*$ denote the push-forwards of the distributions. See Appendix~\ref{subsec:tv_preliminaries} for a precise definition.}.
\end{definition}

The choice of total variation is not completely arbitrary: we will see that due to its properties (which are enumerated in the Appendix~\ref{subsec:tv_preliminaries}) approximate homomorphisms are composable.

\begin{example}\label{ex:approximate_homomorphism}
    \normalfont Consider the actionless transducers from Figure~\ref{fig:example_transucers_and_aprox_homo} with $\varepsilon \in (0,1]$. Each edge has a label $(o, p)$ indicating the probability $p$ of transitioning using that edge and outputting $o$ in the process. There is a $\varepsilon$-reduction from the transducer on the left to the one on the right: take $\phi(s_0) = t_0$ and $\phi(s_1) = \phi(s_2) = t_1$. Meanwhile, there is no 0-reduction (i.e. exact reduction) between them.

    \begin{figure}
    \centering
    \begin{tikzpicture}[>=stealth, thick, every node/.style={font=\large}]
    
      \node[draw, circle, minimum size=1cm] (L1) at (0,0) {$s_0$};
      \node[draw, circle, minimum size=1cm] (T1) at (5,2.6) {$s_1$};
      \node[draw, circle, minimum size=1cm] (B1) at (5,-2.2) {$s_2$};
    
      \draw[->, bend left=20] (L1) to node[above left] {\$, $\frac12$} (T1);
      \draw[->, bend left=20] (T1) to node[below right] {$0,1$} (L1);
    
      \draw[->, bend left=10] (L1) to node[above] {\$, $\frac12$} (B1);
    
      \draw[->, bend left=30] (B1) to node[pos=0.25, above] {$0,1-\varepsilon$} (L1);
      \draw[->, bend left=60] (B1) to node[below] {$1,\varepsilon$} (L1);

      \draw[thick] (6.5,3.5) -- (6.5,-3.2);

      \node[draw, circle, minimum size=1cm] (L2) at (8,0) {$t_0$};
      \node[draw, circle, minimum size=1cm] (M2) at (13,0) {$t_1$};
    
      \draw[->, bend left=22] (L2) to node[above] {\$, $1$} (M2);
      \draw[->, bend left=22] (M2) to node[above] {$0,\,1$} (L2);
    
    \end{tikzpicture}
    \caption{Two actionless transducers (or rather, transducers with a single action $a$), with outputs $\outputalph = \{0, 1, \$\}$. An edge from $s$ to $s'$ with label $o,p$ indicates that $\kappa(s', o | s, a) = p$.}
    \label{fig:example_transucers_and_aprox_homo}
\end{figure}
\end{example}

This notion of approximate homomorphism ensures each state $s \in \states_1$ gets mapped to a state whose one-step dynamics are similar after coarse-graining. Thus, if we look at \textit{approximate reductions} (enforcing that $\inputalph_1 = \inputalph_2$, $\outputalph_1 = \outputalph_2$, $f = g = \idfun{}$, and $\phi$ is surjective), one transducer $T_1$ can be approximately reduced to another one $T_2$ only if their states are \textit{locally} similar. Does this imply that the interfaces they induce are also similar? We recall that for exact homomorphisms this is the case.

\begin{observation}
    If there is a reduction from $T_1$ to $T_2$ then $\interface_{T_1} = \interface_{T_2}$ \cite{rosas2025ai}[Lemma 5].
\end{observation}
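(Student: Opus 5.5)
The plan is to prove, by induction on the length $n$ of the history, that the joint distribution of the first $n$ outputs (and the state reached) in $T_1$, pushed forward through $\phi$, coincides with the corresponding joint distribution in $T_2$. The claim for traces then follows by marginalizing out the final state. Fix an action sequence $a_1\ldots a_n$. For $n\geq 0$ define the joint measures
\[
\mu^1_n(s_n,o_1\ldots o_n)=\sum_{s_0\ldots s_{n-1}}p_1(s_0)\prod_{t=1}^n\kappa_1(s_t,o_t|s_{t-1},a_t),
\]
and define $\mu^2_n$ analogously for $T_2$. The claim to prove is that, for every $n$,
\[
\sum_{s\in\phi^{-1}(s')}\mu^1_n(s,o_{1:n})=\mu^2_n(s',o_{1:n}).
\]
Since a reduction has $f=g=\idfun{}$, the output coarse-graining plays no role here.

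For the base case $n=0$, the claim is exactly the initial pushforward condition~\eqref{eq:exact_initial_pushforward}.

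For the inductive step, write
\[
\mu^1_{n+1}(s'',o_{1:n+1})=\sum_s \mu^1_n(s,o_{1:n})\,\kappa_1(s'',o_{n+1}|s,a_{n+1}).
\]
Sum over $s''\in\phi^{-1}(t'')$ and exchange the order of summation. Condition~\eqref{eq:exact_kernel_pushforward}, with $g=\idfun{}$, replaces $\sum_{s''\in\phi^{-1}(t'')}\kappa_1(s'',o|s,a)$ by $\kappa_2(t'',o|\phi(s),a)$. This quantity depends on $s$ only through $\phi(s)$. We can therefore group the outer sum by $t=\phi(s)$ and apply the induction hypothesis, which gives $\sum_t\mu^2_n(t,o_{1:n})\kappa_2(t'',o|t,a)=\mu^2_{n+1}(t'',o_{1:n+1})$.

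Finally, summing over $t''\in\states_2$, and using that the fibres of $\phi$ partition $\states_1$, gives $\Pr_{T_1}(o_{1:n}|a_{1:n})=\Pr_{T_2}(o_{1:n}|a_{1:n})$ via~\eqref{eq:induced_interface_by_transducer}. Anticipation-freeness then extends the equality to infinite action sequences.

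The only delicate point I expect is justifying the interchange of sums when $\states_1$ is countably infinite. All terms are nonnegative, so Tonelli's theorem applies, and the rearrangement is valid.
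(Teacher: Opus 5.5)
Your proof is correct. The paper states this observation only by citation to \cite{rosas2025ai}, but your induction on the fibre-summed joint law of the output prefix and the current state is exactly the $\varepsilon=0$ case of the argument the paper gives in the proof of Theorem~\ref{teo:approximate_red_maintains_discounted}, where the analogous laws $P_{\mathbf a}$ and $Q_{\mathbf a}$ are shown to be within $(n+1)\varepsilon$ in total variation. In the exact setting you can skip the paper's fibre-conditional distributions $\lambda_{\mathbf a,\mathbf o,u}$, because $\kappa_2(\cdot,\cdot\mid\phi(s),a)$ is constant on fibres and factors out directly, as you do.
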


To approach this question in the approximate setting we need a way to compare different interfaces, i.e. a metric over this space.

Observe that an interface $\interface$ is given essentially by a map $D_\interface:\inputalph^* \to \Delta(\outputalph^*)$ such that $D_\interface(a_1\ldots a_n)$ represents the distribution $\Pr{}_{\interface}\left(\cdot | a_1 \ldots a_n\right)$ which has support over $\outputalph^n$. Then, to define a metric for interfaces we can pick any metric for distributions and then aggregate it over all the possible action sequences in $\inputalph^*$. For instance, we can consider total variation to compare the distributions and aggregate them with the supremum, obtaining
\begin{align}\label{eq:sup_distance_for_interfaces}
    d_\infty(\interface_1, \interface_2) = \sup_{\textbf{a} \in \inputalph^*} ||D_{\interface_1}(\textbf{a}) - D_{\interface_2}(\textbf{a})||_{\mathrm{TV}}
\end{align}

We could also weight each sequence of actions according to its length, reflecting the choice to place less weight on long-horizon discrepancies. Thus, we can consider
\begin{align}\label{eq:discounted_distance_for_interfaces}
    d_\gamma(\interface_1, \interface_2) = \sum_{n=0}^\infty \gamma^n \sup_{\textbf{a} \in \inputalph^n} ||D_{\interface_1}(\textbf{a}) - D_{\interface_2}(\textbf{a})||_{\mathrm{TV}}
\end{align}
for some $\gamma \in (0,1)$. Note that the distances in Eqs.~\eqref{eq:sup_distance_for_interfaces} and~\eqref{eq:discounted_distance_for_interfaces} are indeed well-defined metrics over the set of interfaces.

Since each transducer $T$ induces an interface $\interface_T$ through Eq.~\eqref{eq:induced_interface_by_transducer}, any metric between interfaces can be seen as a pseudometric\footnote{It is a pseudo metric because different transducers implementing the same interface are at distance 0.} between transducers as
\begin{align*}
    d(T_1,T_2) = d(\interface_{T_1}, \interface_{T_2}).
\end{align*}

Are these metrics ``continuous'' with respect to the notion of reduction? Namely, is there a metric $d$ and a function $f: \mathbb{R}_{>0} \to \mathbb{R}_{\geq 0}$ with $f(x) \underset{x \to 0}{\rightarrow} 0$ such that, if there is an $\varepsilon$-reduction from $T_1$ to $T_2$, then $d(\interface_{T_1}, \interface_{T_2}) \leq f(\varepsilon)$? We first observe that this is not the case for the supremum distance in Eq.~\eqref{eq:sup_distance_for_interfaces}.

\begin{example}
    \normalfont Pick $d_\infty$ as in Eq.~\eqref{eq:sup_distance_for_interfaces}, and consider the transducers from Example~\ref{ex:approximate_homomorphism}. Then, if $T_1$ is the transducer on the left and $T_2$ the one on the right, it can be seen that $d(\interface_{T_1}, \interface_{T_2}) = 1$ for every $\varepsilon > 0$.
\end{example}

Intuitively, the supremum distance is not controlled by the approximate homomorphism notion because the error bound applies only to the one-step dynamics. Thus, the interfaces implemented by the two transducers at long horizons (i.e. the distribution $\Pr\left( \cdot | \textbf{a}\right)$ for $\textbf{a} \in \inputalph^n$ with $n \to \infty$) can be arbitrarily far away in metrics such as total variation.

Nonetheless, this observation suggests that the discounted metrics from Eq.~\eqref{eq:discounted_distance_for_interfaces} might be preserved by the approximate homomorphism notion, and indeed this is the case.

\begin{theorem}\label{teo:approximate_red_maintains_discounted}
    Suppose there is an $\varepsilon$-reduction from $T_1$ to $T_2$. Then, if $d_{\gamma}$ is the distance from Eq.~\eqref{eq:discounted_distance_for_interfaces}, it holds that
    \begin{align}\label{eq:discounted_result}
        d_\gamma(\interface_{T_1}, \interface_{T_2}) \leq \frac{\varepsilon}{(1-\gamma)^2}.
    \end{align}
\end{theorem}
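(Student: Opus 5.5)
The plan is to first prove a horizon-wise bound, and then sum it against the discount weights. Concretely, I would show that for every $n$ and every action sequence $\mathbf{a}=a_1\ldots a_n$,
\[
    \|D_{\interface_{T_1}}(\mathbf{a})-D_{\interface_{T_2}}(\mathbf{a})\|_{\mathrm{TV}}\le (n+1)\varepsilon .
\]
Granting this, the conclusion is immediate: $d_\gamma(\interface_{T_1},\interface_{T_2})\le \varepsilon\sum_{n\ge 0}(n+1)\gamma^n=\varepsilon/(1-\gamma)^2$, which is exactly Eq.~\eqref{eq:discounted_result}.

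To prove the horizon-wise bound, I would compare joint laws on the pair (state sequence, output sequence), not just on outputs. Let $P_1^n$ be the law of $(S_0,O_1,S_1,\dots,O_n,S_n)$ under $T_1$ driven by $\mathbf{a}$. Let $\tilde P_1^n$ be its image under $\phi$ applied coordinatewise to the states, with outputs left unchanged (recall $g=f=\idfun{}$ for a reduction). Let $P_2^n$ be the corresponding law under $T_2$. Since TV can only decrease under push-forward, the output marginals satisfy $\|D_{\interface_{T_1}}(\mathbf{a})-D_{\interface_{T_2}}(\mathbf{a})\|_{\mathrm{TV}}\le\|\tilde P_1^n-P_2^n\|_{\mathrm{TV}}$. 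It therefore suffices to show that $\|\tilde P_1^n-P_2^n\|_{\mathrm{TV}}\le(n+1)\varepsilon$.

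I would establish this by a hybrid (telescoping) argument. For $k=0,\dots,n+1$, define the hybrid law $Q_k$ as follows: run $T_1$ for the first $k$ steps, i.e.\ the initial state plus the first $k-1$ transitions, and pass the current $T_1$-state through $\phi$. From the resulting $T_2$-state, run $T_2$ for the remaining steps. Then $Q_{n+1}=\tilde P_1^n$ and $Q_0=P_2^n$. Consecutive hybrids differ at only one step. For $Q_0$ versus $Q_1$, the difference is the initial law, $p_2$ versus $\phi_*p_1$. For later $k$, the difference is one transition from a $T_1$-state $s$: one hybrid uses $(\phi\times\idfun{})_*\kappa_1(\cdot,\cdot|s,a_k)$, the other uses $\kappa_2(\cdot,\cdot|\phi(s),a_k)$. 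Both hybrids share the same prefix law and apply the same continuation kernel afterwards, namely $T_2$ run from the pushed state. By the standard TV facts (Appendix~\ref{subsec:tv_preliminaries}), the distance between them is therefore at most the supremum over $s$ of the one-step TV discrepancy. By Eq.~\eqref{eq:approx_initial_pushforward} and Eq.~\eqref{eq:approx_kernel_pushforward}, this is at most $\varepsilon$. The triangle inequality over the $n+1$ differing steps then gives $(n+1)\varepsilon$.

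I expect the main obstacle to be carrying out the hybrid step carefully when $T_1$'s state is not recoverable from $\phi(s)$. To handle this, the hybrid $Q_k$ must be defined as a law on the $T_1$-state history up to step $k-1$, together with the subsequent $T_2$-states and outputs, and only at the end marginalized onto $\states_2$ and output sequences. With this definition, the two hybrids share the $T_1$ prefix and differ only in the transition kernel applied at a single step. The required fact is that mixing a kernel against a common measure and composing with a common Markov kernel are both TV-nonexpansive, which is standard and which I would cite from the appendix.
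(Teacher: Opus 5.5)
Your proof is correct. It reaches the same horizon-wise bound $(n+1)\varepsilon$ and the same final summation $\varepsilon\sum_{n}(n+1)\gamma^n=\varepsilon/(1-\gamma)^2$ as the paper, but by a different decomposition.

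The paper runs a forward induction on the joint law of (output prefix, current coarse-grained state). To turn $T_1$'s evolution into a kernel on $\outputalph^n\times\states_2$, it disintegrates over each fibre $\phi^{-1}(u)$. This introduces the conditional weights $\lambda_{\mathbf a,\mathbf o,u}$, together with an arbitrary section of $\phi$ on null events. It then applies the kernel perturbation bound $\|\mu K-\nu L\|_{\mathrm{TV}}\le\|\mu-\nu\|_{\mathrm{TV}}+\varepsilon$ once per step, with the one-step error controlled by convexity over the fibre.

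Your hybrid telescoping avoids that disintegration entirely. You keep the fine $T_1$ history in the shared prefix and switch to $T_2$ at a single step. Each increment then needs only the one-step condition at an actual $T_1$-state, plus kernel and push-forward contraction. This is exactly the issue you flag about $T_1$'s state not being recoverable from $\phi(s)$, and you resolve it correctly.

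The price is bookkeeping with trajectory laws on varying spaces. You handle this by projecting every hybrid onto coarse-grained state sequences and outputs before comparing. In exchange you get a slightly stronger intermediate statement: the full coarse-grained state-and-output trajectory laws are within $(n+1)\varepsilon$, not just the (output prefix, final state) marginals that the paper tracks.
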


We write $T_1 \overset{\varepsilon}{\rightarrow{}} T_2$ to indicate that there is an $\varepsilon$-homomorphism from $T_1$ to $T_2$. As already noted, exact homomorphisms can be composed, and thus the reduction relation is transitive. For approximate homomorphisms we can prove the following additive version.

\newcommand{\reducesTo}[1]{\overset{#1}{\rightarrow{}}}

\begin{proposition}\label{prop:compose_approx_homo}
    If $T_1 \reducesTo{\varepsilon_1} T_2$ and $T_2 \reducesTo{\varepsilon_2} T_3$, then $T_1 \reducesTo{\varepsilon_1 + \varepsilon_2} T_3$.
\end{proposition}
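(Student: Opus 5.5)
The natural candidate is the composite triple $\langle \phi_2\circ\phi_1,\ f_2\circ f_1,\ g_2\circ g_1\rangle$, where $\langle \phi_1,f_1,g_1\rangle$ witnesses $T_1 \reducesTo{\varepsilon_1} T_2$ and $\langle \phi_2,f_2,g_2\rangle$ witnesses $T_2 \reducesTo{\varepsilon_2} T_3$. We must check the two inequalities of Definition~\ref{def:trans_approx_hom}, namely \eqref{eq:approx_kernel_pushforward} and \eqref{eq:approx_initial_pushforward}, with error $\varepsilon_1+\varepsilon_2$. We will use two standard properties of total variation (those collected in Appendix~\ref{subsec:tv_preliminaries}). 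First, pushforwards compose: $(\psi\circ\chi)_*\mu=\psi_*(\chi_*\mu)$. Second, pushforward along a measurable map is a contraction, $\|\psi_*\mu-\psi_*\nu\|_{\mathrm{TV}}\le\|\mu-\nu\|_{\mathrm{TV}}$.

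For the initial distributions, write $\phi=\phi_2\circ\phi_1$ and apply the triangle inequality:
\[
\|\phi_*p_1-p_3\|_{\mathrm{TV}}\le \|(\phi_2)_*((\phi_1)_*p_1)-(\phi_2)_*p_2\|_{\mathrm{TV}}+\|(\phi_2)_*p_2-p_3\|_{\mathrm{TV}}.
\]
The first term is at most $\|(\phi_1)_*p_1-p_2\|_{\mathrm{TV}}\le\varepsilon_1$ by contraction. The second term is at most $\varepsilon_2$ by hypothesis.

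For the kernel, fix $s_1\in\states_1$ and $a_1\in\inputalph_1$, and set $s_2=\phi_1(s_1)$ and $a_2=f_1(a_1)$. The identity $(\phi\times g)=(\phi_2\times g_2)\circ(\phi_1\times g_1)$ gives
\[
(\phi\times g)_*\kappa_1(\cdot,\cdot|s_1,a_1)=(\phi_2\times g_2)_*\big[(\phi_1\times g_1)_*\kappa_1(\cdot,\cdot|s_1,a_1)\big].
\]
We insert the intermediate term $(\phi_2\times g_2)_*\kappa_2(\cdot,\cdot|s_2,a_2)$ and use the triangle inequality. The first difference is bounded by contraction and the $\varepsilon_1$-hypothesis at $(s_1,a_1)$. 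The second is exactly the $\varepsilon_2$-hypothesis applied at the state $s_2=\phi_1(s_1)$ and the action $a_2$. Since $\phi_2(s_2)=\phi(s_1)$ and $f_2(a_2)=f(a_1)$, this compares with the correct target $\kappa_3(\cdot,\cdot|\phi(s_1),f(a_1))$. Summing gives $\varepsilon_1+\varepsilon_2$.

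The only step needing care is the contraction property of pushforward under TV, which we will justify in one line. With $\|\mu-\nu\|_{\mathrm{TV}}=\sup_A|\mu(A)-\nu(A)|$ on countable spaces,
\[
|\psi_*\mu(B)-\psi_*\nu(B)|=|\mu(\psi^{-1}B)-\nu(\psi^{-1}B)|,
\]
and taking the supremum over $B$ gives the claim.

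If the statement is also meant to cover reductions, the extra conditions are immediate. Surjectivity of the $\phi_i$ implies surjectivity of $\phi$, and identity maps compose to identities. So approximate reductions compose with the same error bound.
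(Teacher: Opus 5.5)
Your proposal is correct and follows the paper's proof essentially step for step. Both use the composite triple $\langle \phi_2\circ\phi_1, f_2\circ f_1, g_2\circ g_1\rangle$, insert the intermediate term $(\phi_2\times g_2)_*\kappa_2(\cdot,\cdot\mid\phi_1(s),f_1(a))$, bound the two pieces by push-forward contraction and the two hypotheses, handle the initial distributions the same way, and add the same closing remark about surjectivity for reductions.
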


This proposition states the existence of the dashed arrow in the following diagram:

\[
\begin{tikzcd}[column sep=large]
T_1
  \arrow[r, "{\varepsilon_1}"]
  \arrow[rr, bend left=35, dashed, "{\varepsilon_1+\varepsilon_2}"]
&
T_2
  \arrow[r, "{\varepsilon_2}"]
&
T_3
\end{tikzcd}
\]

Theorem~\ref{teo:approximate_red_maintains_discounted} and Proposition~\ref{prop:compose_approx_homo} suggest that this notion of approximate homomorphism is natural and algebraically convenient. We recall that composability can be shown because we use total variation to compare the one-step dynamics: a different choice of distance to compare the distributions may not preserve this property. 

\subsection{The case of linear transducers}

To introduce an approximation error in the exact linear reduction we will equip each state vector space with a
norm, which we will use to measure the distance between different vectors (mainly, between the vector obtained through the reduction and the vectors from the transducer itself). Throughout this subsection, we assume that the output alphabet $O$ is finite.  If $V$
is a normed vector space, we denote its dual norm by $\|\cdot\|_{V^*}$. Given a linear operator $C$, we write $||C||_{a \to b}$ to denote the norm $\sup_{x : ||x||_a = 1}||Cx||_{b}$.

An arbitrary linear transducer does not necessarily implement an interface. In particular, there are some transducers for which the norm of the state vector tends to infinity as the transducer reads symbols. Such a behaviour troubles our notion of approximate reduction, since a small margin of error in the one-step dynamics can be amplified arbitrarily in the subsequent steps. Thus, to rule out this situation, we introduce the notion of \textit{contractive} transducer.

\begin{definition}
\label{def:symbolwise-contractive}
A linear transducer $G=(V,\xi,\lambda,\{M_{a,o}\}_{(a,o)\in A\times O})$
is \emph{contractive} if
\[
    \|\xi\|_V\leq 1,
    \qquad
    \|\lambda\|_{V^*}\leq 1,
    \qquad
    \|M_{a,o}x\|_V\leq \|x\|_V,
\]
for every $a\in A$, $o\in O$, and $x\in V$.
\end{definition}

The last condition ensures that after applying an evolution operator the norm of the vector state does not increase. Note that the canonical representation given by the Hankel matrix satisfies this definition.  Indeed, on the Hankel row space
$V_{\mathcal I}$, the prediction norm
\[
    \|r\|_{\mathrm{pred}}=\sup_{v\in\Sigma^*}|r(v)|
\]
makes every shift $R^{\mathcal I}_{a,o}$ nonexpansive, because
\[
    \|R^{\mathcal I}_{a,o}r\|_{\mathrm{pred}}
    =\sup_{v\in\Sigma^*}|r((a,o)v)|
    \leq \|r\|_{\mathrm{pred}}.
\]
Moreover, $\|\xi_{\mathcal I}\|_{\mathrm{pred}}=1$, while
$\|\lambda_{\mathcal I}\|_{V_{\mathcal I}^*}\leq1$ because $\lambda_{\mathcal I}(r)=r(\epsilon)$.

We now introduce our notion of approximate linear reduction.

\begin{definition}
\label{def:symbolwise-approx-linear-reduction}
Let $G=(V,\xi,\lambda,\{M_{a,o}\}_{a,o})$ and $G'=(W,\xi',\lambda',\{N_{a,o}\}_{a,o})$ be contractive linear transducers over the same input and output
alphabets, and let $\varepsilon\geq0$.  A bounded surjective linear map $L:V\to W$ is a
\emph{$\varepsilon$-linear reduction} from $G$ to $G'$ if
\begin{align*}
    \|L\xi-\xi'\|_W
        &\leq \varepsilon,\\
    \|LM_{a,o}x-N_{a,o}Lx\|_W
        &\leq \varepsilon\|x\|_V,\\
    \|\lambda'L-\lambda\|_{V^*}
        &\leq \varepsilon,
\end{align*}
for every $(a,o)\in A\times O$ and $x\in V$. We write $G\xrightarrow{\varepsilon}G'$ when such a map exists.
\end{definition}

When $G$ and $G'$ are contractive, the conditions for $\varepsilon=0$
are precisely the equations defining an exact linear reduction. This definition extends approximate
reductions between standard transducers.

\begin{proposition}
\label{prop:ordinary-to-linear-approx}
Let $T_1$ and $T_2$ be finite standard transducers over the same alphabets,
and suppose that a surjective state map $\phi:\states_1\to \states_2$ is an
$\varepsilon$-reduction. Then, if $G_1$ and $G_2$ are the linear implementations corresponding to $T_1$ and $T_2$ equipped with their $\ell_1$ norms, there exists a
$2\varepsilon$-linear reduction from $G_1$ to $G_2$ induced by $\phi$.
\end{proposition}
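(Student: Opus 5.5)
We plan to write down the standard linear implementations explicitly, take $L$ to be the linear push-forward along $\phi$, and check the three conditions of Definition~\ref{def:symbolwise-approx-linear-reduction} one at a time. The factor $2$ will come from the relation $\|\mu-\nu\|_1 = 2\|\mu-\nu\|_{\mathrm{TV}}$.

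\textbf{Setup.} For $i=1,2$ let $V_i=\mathbb{R}^{\states_i}$ with the $\ell_1$ norm, and use column-vector conventions:
\[
(M^{(i)}_{a,o}x)(s')=\sum_{s}x(s)\,\kappa_i(s',o\mid s,a),\qquad \xi_i=p_i,\qquad \lambda_i(x)=\sum_s x(s).
\]
This is the transpose of the row-vector formula $pM_{a_1,o_1}\cdots M_{a_n,o_n}\mathbf{1}$, so $G_i$ implements $\interface_{T_i}$.

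\textbf{Contractivity.} First we check that each $G_i$ is contractive, as Definition~\ref{def:symbolwise-approx-linear-reduction} requires.
\begin{itemize}
\item $\|p_i\|_1=1$, since $p_i$ is a probability distribution.
\item $\|\lambda_i\|_{V^*}=\|\mathbf{1}\|_\infty=1$, since the dual of $\ell_1$ is $\ell_\infty$.
\item For the transition maps,
\[
\|M^{(i)}_{a,o}x\|_1\le\sum_s|x(s)|\Pr{}_{T_i}(o\mid s,a)\le\|x\|_1 .
\]
\end{itemize}

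\textbf{The map $L$.} Define $L:V_1\to V_2$ by $(Lx)(t)=\sum_{s\in\phi^{-1}(t)}x(s)$. It is bounded with $\|L\|_{1\to1}\le 1$, and it is surjective because $\phi$ is surjective: each basis vector $e_t$ equals $Le_s$ for any $s\in\phi^{-1}(t)$.

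\textbf{The three conditions.}
\begin{itemize}
\item Initial vector: $L\xi_1=\phi_*p_1$. Hence
\[
\|L\xi_1-\xi_2\|_1=2\|\phi_*p_1-p_2\|_{\mathrm{TV}}\le 2\varepsilon
\]
by Eq.~\eqref{eq:approx_initial_pushforward}.
\item Readout: $\lambda_2L=\lambda_1$ holds exactly, since $L$ preserves total mass.
\item Dynamics: for each $s$ set
\[
\Delta_s(t,o)=\sum_{s'\in\phi^{-1}(t)}\kappa_1(s',o\mid s,a)-\kappa_2(t,o\mid\phi(s),a).
\]
Here $L e_s=e_{\phi(s)}$ is what identifies the second term with $N_{a,o}Le_s$. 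Expanding in the basis gives
\[
(LM^{(1)}_{a,o}x-M^{(2)}_{a,o}Lx)(t)=\sum_s x(s)\Delta_s(t,o).
\]
Therefore
\[
\|LM^{(1)}_{a,o}x-M^{(2)}_{a,o}Lx\|_1\le\sum_s|x(s)|\sum_t|\Delta_s(t,o)|.
\]
For fixed $o$, the inner sum $\sum_t|\Delta_s(t,o)|$ is at most the full $\ell_1$ distance over pairs $(t,o')$. That distance is $2\|(\phi\times\idfun)_*\kappa_1(\cdot,\cdot\mid s,a)-\kappa_2(\cdot,\cdot\mid\phi(s),a)\|_{\mathrm{TV}}\le 2\varepsilon$ by Eq.~\eqref{eq:approx_kernel_pushforward}. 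This yields the bound $2\varepsilon\|x\|_1$.
\end{itemize}

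\textbf{Expected obstacle.} The main point needing care is the normalization of total variation fixed in Appendix~\ref{subsec:tv_preliminaries}. With the convention $\mathrm{TV}=\tfrac12\ell_1$ the constant is exactly $2\varepsilon$. With the convention $\mathrm{TV}=\ell_1$ the same argument gives $\varepsilon$, which a fortiori gives $2\varepsilon$. The other point is keeping the column/row transpose convention consistent so that $L$ intertwines the maps in the correct order.
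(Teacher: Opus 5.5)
Your proposal is correct and follows essentially the same route as the paper: the same $\ell_1$ linearizations, the same map $Le_s=e_{\phi(s)}$, exact preservation of the readout, and the factor $2$ coming from $\|\cdot\|_1=2\|\cdot\|_{\mathrm{TV}}$ in both the initial-vector and dynamics conditions. The only cosmetic difference is that you bound $\|LM^{(1)}_{a,o}x-M^{(2)}_{a,o}Lx\|_1$ for a general $x$ via the triangle inequality, whereas the paper checks basis vectors $e_s$ and takes the maximum over columns; both compute the same $\ell_1\to\ell_1$ operator norm.
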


Approximate linear reductions can be composed in the same way as the standard approximate reductions. From now on, for a bounded linear map $L$,
write $c(L)=\max\{1,\|L\|\}$.
The following holds.

\begin{proposition}
\label{prop:linear-approx-composition}
Suppose that $G_0\xrightarrow{\varepsilon_1}G_1$ through the linear map $L$ and $G_1\xrightarrow{\varepsilon_2}G_2$ through the linear map $K$. Then $KL$ is a $c(K)\varepsilon_1+c(L)\varepsilon_2$ linear reduction from $G_0$ to $G_2$.
\end{proposition}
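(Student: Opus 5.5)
We verify the three defining conditions of an approximate linear reduction for the composite map $KL:V_0\to V_2$. Here $G_i=(V_i,\xi_i,\lambda_i,\{M^{(i)}_{a,o}\})$, and operator norms are taken with respect to the norms on the $V_i$. Two preliminary facts are immediate. First, $KL$ is bounded and linear, with $\|KL\|\le\|K\|\|L\|$. Second, $KL$ is surjective because $L$ and $K$ are. The remaining work is to insert an intermediate term in each defect and apply the triangle inequality. Throughout we use that every $G_i$ is contractive, so $\|\xi_i\|\le1$, $\|\lambda_i\|_{V_i^*}\le1$, and each $M^{(i)}_{a,o}$ is nonexpansive.

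For the initial vector, we split
\[
\|KL\xi_0-\xi_2\|\le\|K(L\xi_0-\xi_1)\|+\|K\xi_1-\xi_2\|\le\|K\|\varepsilon_1+\varepsilon_2 .
\]
This is at most $c(K)\varepsilon_1+c(L)\varepsilon_2$, since $c(\cdot)\ge1$ and $c(K)\ge\|K\|$.

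For the readout, we split
\[
\lambda_2KL-\lambda_0=(\lambda_2K-\lambda_1)L+(\lambda_1L-\lambda_0).
\]
The first piece has dual norm at most $\varepsilon_2\|L\|$ and the second at most $\varepsilon_1$. The total is again at most $c(L)\varepsilon_2+c(K)\varepsilon_1$.

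For the dynamics, fix $(a,o)$ and $x\in V_0$, and split
\[
KLM^{(0)}x-M^{(2)}KLx=K\bigl(LM^{(0)}x-M^{(1)}Lx\bigr)+\bigl(KM^{(1)}-M^{(2)}K\bigr)Lx .
\]
The first term has norm at most $\|K\|\varepsilon_1\|x\|$. The second has norm at most $\varepsilon_2\|Lx\|\le\varepsilon_2\|L\|\|x\|$. Summing gives at most $(c(K)\varepsilon_1+c(L)\varepsilon_2)\|x\|$.

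The argument involves no real obstacle, only bookkeeping. The one point needing care is choosing which intermediate term to insert, so that $\|K\|$ always multiplies $\varepsilon_1$ and $\|L\|$ always multiplies $\varepsilon_2$. This is also why the bound is stated with $c(\cdot)=\max\{1,\|\cdot\|\}$: the initial-vector and readout estimates each contain a term with no norm factor at all. Contractivity is not actually needed for these estimates. It is only part of the standing hypothesis that all three transducers are contractive, so that the composite reduction is well-defined.
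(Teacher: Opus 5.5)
Your proposal is correct and follows the paper's proof essentially line by line. It inserts the same intermediate terms for the initial vector, the dynamics $K(LM^{0}_{a,o}-M^{1}_{a,o}L)+(KM^{1}_{a,o}-M^{2}_{a,o}K)L$, and the readout $(\lambda_2K-\lambda_1)L+(\lambda_1L-\lambda_0)$, and gets surjectivity and boundedness of $KL$ from those of $L$ and $K$.
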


We next compare the interfaces implemented by approximately reduced linear
transducers. In the case of standard transducers we could prove in Theorem~\ref{teo:approximate_red_maintains_discounted} that the discounted metrics were preserved after an approximate reduction. For linear transducers we obtain a similar result, but with a weaker bound.

\begin{theorem}
\label{thm:linear-discounted-continuity}
Let $G$ and $G'$ be contractive linear transducers implementing
interfaces $\mathcal I_G$ and $\mathcal I_{G'}$.  If
\(
  G\xrightarrow{\varepsilon}G'
\), then, for every $n\geq1$,
\begin{equation}
\label{eq:linear-horizon-bound}
    \sup_{\textbf{a}\in \inputalph^n}
    \|D_{\mathcal I_G}(\textbf{a})-D_{\mathcal I_{G'}}(\textbf{a})\|_{\mathrm{TV}}
    \leq
    \min\left\{1,\frac{n+2}{2}|O|^n\varepsilon\right\}.
\end{equation}
Consequently, for every $\gamma\in(0,1)$,
\begin{equation}
\label{eq:linear-general-modulus}
    d_\gamma(\mathcal I_G,\mathcal I_{G'})
    \leq \sum_{n\geq0}\gamma^n
      \min\left\{1,\frac{n+2}{2}|O|^n\varepsilon\right\},
\end{equation}
which converges to 0 as $\varepsilon \to 0$.
\end{theorem}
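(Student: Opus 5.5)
We first prove a per-word error bound and then aggregate over outputs. Fix a word $w=\sigma_1\cdots\sigma_n\in\Sigma^n$ and write $x_k=M_{\sigma_1\cdots\sigma_k}\xi$ and $y_k=N_{\sigma_1\cdots\sigma_k}\xi'$, with $x_0=\xi$ and $y_0=\xi'$. We track the error $e_k=Lx_k-y_k\in W$. Using the second condition of Definition~\ref{def:symbolwise-approx-linear-reduction},
\[
e_{k}=LM_{\sigma_k}x_{k-1}-N_{\sigma_k}Lx_{k-1}+N_{\sigma_k}e_{k-1}.
\]
Contractivity of $G$ gives $\|x_{k-1}\|_V\le\|\xi\|_V\le 1$, and contractivity of $G'$ gives $\|N_{\sigma_k}e_{k-1}\|_W\le\|e_{k-1}\|_W$. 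Hence $\|e_k\|_W\le\|e_{k-1}\|_W+\varepsilon$. Since $\|e_0\|_W\le\varepsilon$ by the first condition, induction gives $\|e_n\|_W\le(n+1)\varepsilon$.

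Next we compare the readouts:
\[
F_G(w)-F_{G'}(w)=\lambda(x_n)-\lambda'(Lx_n)+\lambda'(e_n).
\]
The first difference is at most $\|\lambda-\lambda'L\|_{V^*}\|x_n\|_V\le\varepsilon$. The second is at most $\|\lambda'\|_{W^*}\|e_n\|_W\le(n+1)\varepsilon$. So for every word of length $n$ we get $|F_G(w)-F_{G'}(w)|\le(n+2)\varepsilon$.

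Now fix $\mathbf a\in\inputalph^n$. The distributions $D_{\mathcal I_G}(\mathbf a)$ and $D_{\mathcal I_{G'}}(\mathbf a)$ are supported on $O^n$, and their masses are the values $F_G$ and $F_{G'}$ on the words $(a_1,o_1)\cdots(a_n,o_n)$. Both are genuine probability distributions because $G$ and $G'$ implement interfaces. Using $\|\cdot\|_{\mathrm{TV}}=\tfrac12\|\cdot\|_1$, summing the per-word bound over the $|O|^n$ output strings gives $\tfrac{n+2}{2}|O|^n\varepsilon$. The trivial bound $1$ on total variation gives the minimum in \eqref{eq:linear-horizon-bound}. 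The $n=0$ term vanishes, since both distributions are the point mass on the empty word.

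Summing $\gamma^n$ times this bound over $n$ gives \eqref{eq:linear-general-modulus}. For convergence to $0$ we use dominated convergence on the series. Each term is bounded by $\gamma^n$, which is summable, and each term tends to $0$ as $\varepsilon\to0$ because $n$ is fixed in that term. Concretely, we can pick $N$ with $\sum_{n>N}\gamma^n<\delta$ and then take $\varepsilon$ small enough that the first $N$ terms are also small.

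The only delicate point is the first step: the error must grow additively rather than multiplicatively. This is exactly where contractivity of both transducers is needed, namely the bound $\|x_{k-1}\|_V\le1$ for $G$ and nonexpansiveness of the $N_\sigma$ for $G'$. The factor $|O|^n$ is the crude price of passing from pointwise to $\ell_1$ bounds. We will not try to improve it, since signed weights in $G$ prevent the telescoping used in Theorem~\ref{teo:approximate_red_maintains_discounted}.
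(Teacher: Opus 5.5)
Your proposal is correct and follows the paper's proof essentially step by step. The steps that match are:
\begin{itemize}
\item the same additive induction giving $\|LM_w\xi-N_w\xi'\|_W\le(n+1)\varepsilon$, using $\|M_w\xi\|_V\le 1$ and nonexpansiveness of the $N_\sigma$;
\item the same split of the readout error into $(\lambda-\lambda'L)(M_w\xi)$ and $\lambda'(LM_w\xi-N_w\xi')$, which gives $(n+2)\varepsilon$ per word;
\item the same summation over the $|O|^n$ output strings, followed by a dominated-convergence argument for the discounted series.
\end{itemize}
Your explicit check that the $n=0$ term vanishes is a small detail the paper leaves implicit.
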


Note that this implies that for small enough $\varepsilon$ both transducers implement a similar interface. The bound is somewhat weaker when compared to the one from Theorem~\ref{teo:approximate_red_maintains_discounted} because the notion of approximate reduction for standard transducers is stronger with respect to the one-step equivalence of the dynamics. For instance, Eq.~\eqref{eq:approx_kernel_pushforward} requires that the overall error (i.e. total variation) is bounded, while in Definition~\ref{def:symbolwise-approx-linear-reduction} we bound each error independently. This is the reason why a term $|\outputalph|$ shows up in the bound. We could fix this by changing the definition of approximate linear reduction, but it would require us to also modify the notion of contractive transducer. Moreover, the required change gives a notion of contractive transducer which does not include the canonical Hankel representations, which we want to use in later proofs.

Nonetheless, we want to highlight the fact that there are many other valid choices regarding these definitions. In our case, we wanted to prioritize the fact that our abstractions should extend the notion of approximate reduction for standard transducers (proven in Proposition~\ref{prop:ordinary-to-linear-approx}), should allow to represent the canonical Hankel constructions and should satisfy the simple and basic properties already seen for standard transducers (composability in Proposition~\ref{prop:linear-approx-composition} and continuity with regard to the discounted metrics in Theorem~\ref{thm:linear-discounted-continuity}).

\section{Approximate reductions between implementations of similar interfaces}\label{sec:approx_reduction_orders}

In this section we will study the set of transducers which implement a given interface $\interface$, and we will try to relate them through approximate homomorphisms. Moreover, we will look at the set of transducers implementing a \textit{similar} interface (using one of the distances for interfaces mentioned previously). Ideally, we would like for this set of transducers to share some property, since that would indicate an \textit{emergent} property related to the representation of the interfaces.

The following definition formalizes this set.

\begin{definition}
    Let $\interface$ be an interface, $\varepsilon\geq 0$ and $d$ some metric over the set of interfaces. We define
    \begin{align*}
        \mathcal{L}_{\interface}^{\varepsilon,d}
        =
        \left\{
            T : T \text{ is a transducer and }d(\interface_T,\interface)\leq \varepsilon
        \right\}.
    \end{align*}
    as the set of transducers that $\varepsilon$-approximate $\interface$.
\end{definition}

For every set $\mathcal{L}_{\interface{}}^{\varepsilon,d}$ we would like to understand whether there is some $T \in \mathcal{L}_{\interface{}}^{\varepsilon,d}$ such that, for every other $T'\in \mathcal{L}_{\interface{}}^{\varepsilon,d}$, it holds that $T'\overset{\delta}{\rightarrow} T$ for some small $\delta$, ideally scaling as $\delta = O(\varepsilon)$. We call such a transducer a $\delta$-minima of  $\mathcal{L}_{\interface}^{\varepsilon, d}$. We define
\begin{align*}
    \delta_{\varepsilon}(\interface) 
    =
    \inf\left\{
        \delta\in \mathbb{R}_{\geq 0}:
        \mathcal{L}_{\interface}^{\varepsilon,d}
        \text{ has a $\delta$-minima}
    \right\}.
\end{align*}
With this notation, our goal is to find bounds for $\delta_{\varepsilon}(\interface)$ in terms of $\varepsilon$. Is there a subset of interfaces which is well-behaved in this sense? Does it matter which type of transducers we consider? See Figure~\ref{fig:minimize_close} for a sketch of the type of behaviour that we aim for.

In the next subsections we will consider the set $\mathcal{L}_{\interface}^{\varepsilon,d}$ restricted to different types of transducers. To avoid cluttering the notation we won't add any more indices to this symbol, but rather take the convention that in each respective subsection this set is restricted to the set of transducers studied in the corresponding subsection.

\begin{figure}[ht]
    \centering
    \includegraphics[width=1\linewidth]{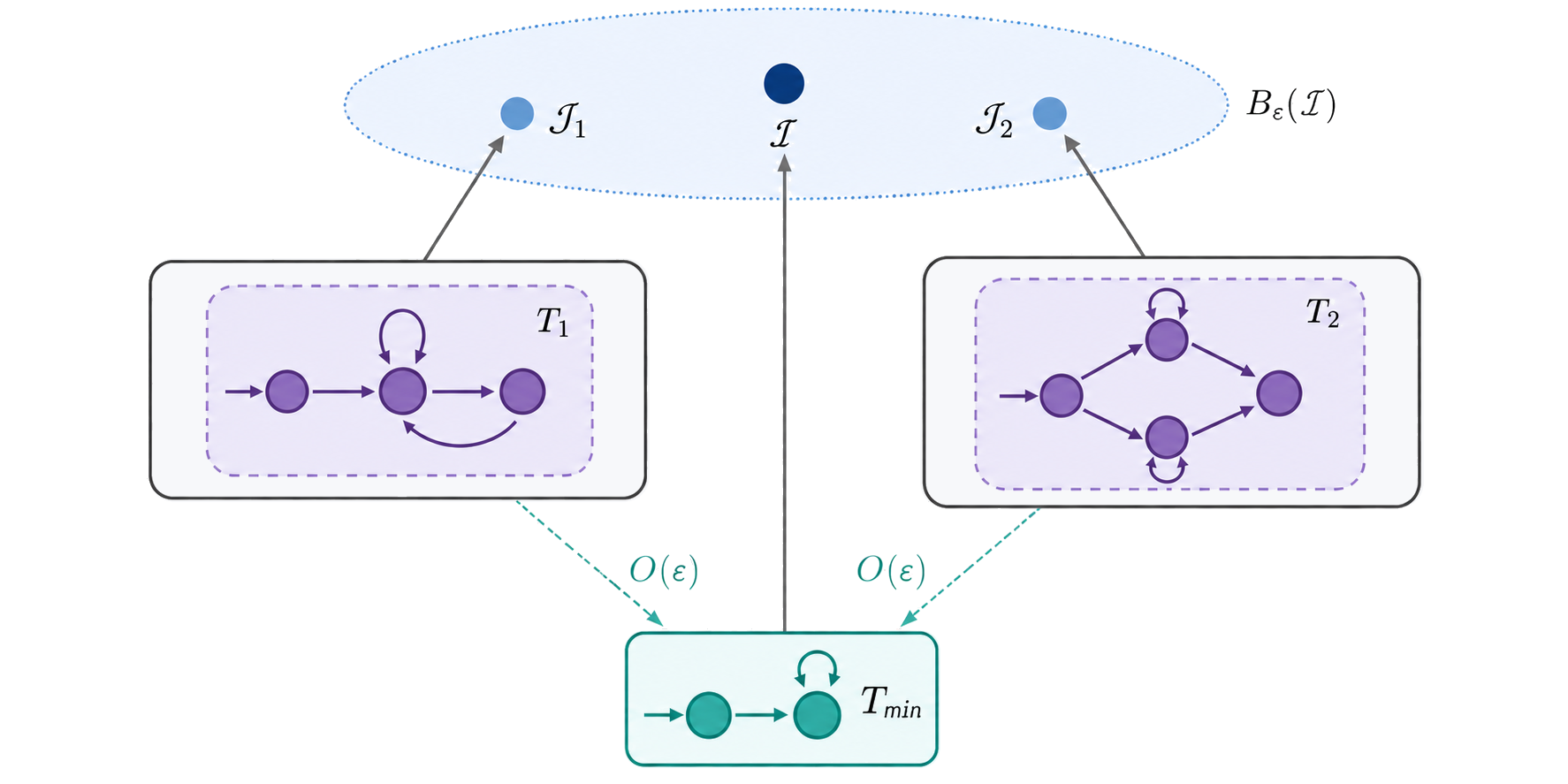}
    \caption{Schematic description of the type of convergence result we would like to prove. After fixing an interface of interest $\interface$, we look at all interfaces $\varepsilon$-close to $\interface$ in some distance. For each of these interfaces (such as $\interfaceJ_1$ and $\interfaceJ_2$) there are many transducers implementing the dynamics (respectively, $T_1$ and $T_2$). We say that there is a $O(\varepsilon)$-minimum if there is some transducer $T_{min}$ implementing an interface from $B_\varepsilon(\interface)$ such that for any transducer $T$ implementing an interface in $B_\varepsilon(\interface)$ it holds that $T \overset{O(\varepsilon)}{\rightarrow} T_{min}$. In the diagram this is represented by the transducer $T_{min}$ implementing the interface $\interface$, and there are $O(\varepsilon)$-reductions from both $T_1$ and $T_2$.}
    \label{fig:minimize_close}
\end{figure}

\subsection{Non-existence of \texorpdfstring{$\delta$}{delta}-minima for standard transducers}

As mentioned before, there exist an $\interface$ such that $\mathcal{L}_{\interface}^{0,d}$ does not have a 0-minimum when considering only standard transducers\footnote{Note that if $\varepsilon=0$ the choice of distance is irrelevant.}. Can this situation be avoided using $\delta$ reductions? Note that as $\delta$ increases we allow more reductions (in the limit, taking $\delta = 1$ allows every possible reduction), and thus it should make it simpler for convergent structures to arise.

In the next proposition we show that this is not the case.

\begin{proposition}
    \label{prop:no-exact-universal-target-below-one}
    There exists an interface $\interface$ such that, for every
    $\delta<1$, the set
    $\mathcal{L}_{\interface}^{0,d}$ does not have a $\delta$-minima.
\end{proposition}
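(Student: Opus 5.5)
The plan is to build an explicit interface with two exact implementations, $T_1$ and $T_2$, that are structurally incompatible. We then show that any candidate $\delta$-minimum $T\in\mathcal L^{0,d}_{\interface}$ is forced, by the reduction $T_1\reducesTo{\delta}T$, to look like $T_1$, and that the reduction $T_2\reducesTo{\delta}T$ then costs at least $1-\eta$ for an $\eta$ we can make as small as we like. Since $\delta<1$ is fixed, choosing $\eta<1-\delta$ gives a contradiction. Only actionless transducers (a single action) are used, so $f=\idfun{}$ is automatic and a $\delta$-minimum means: for every $T'$ implementing $\interface$ there is a surjective $\phi$ satisfying \eqref{eq:approx_kernel_pushforward} and \eqref{eq:approx_initial_pushforward} with error $\delta$.

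\textbf{Step 1 (basic gadget).} Fix $m\geq2$ and let $\interface_m$ be the interface that outputs a uniformly random symbol $b\in\{1,\dots,m\}$ at the first step and then repeats $b$ forever. Two implementations are natural:
\begin{itemize}
\item $T_1$ has states $c_1,\dots,c_m$, initial distribution uniform, and deterministic loops $c_b\to c_b$ emitting $b$. It has all randomness in the initial distribution.
\item $T_2$ has a start state $s$ with $p_2=\delta_s$, moving to $u_b$ while emitting $b$ with probability $1/m$, and $u_b$ loops emitting $b$. It has all randomness in the first kernel step.
\end{itemize}

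\textbf{Step 2 (the minimum must look like $T_1$).} First I would show that any $T$ implementing $\interface_m$ with $T_1\reducesTo{\delta}T$ must essentially be $T_1$. The reduction is surjective, so $T$ has at most $m$ states. Every reachable state after time $1$ must emit a single symbol deterministically forever, since the interface is exact. With at most $m$ states and $m$ distinct repeated symbols, every state of $T$ must be such a repeater $c'_b$. Hence the randomness of $T$ lies entirely in its initial law, and each kernel $\kappa_T(\cdot,\cdot\mid c'_b)$ is the point mass on $(c'_b,b)$.

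\textbf{Step 3 (cost of reducing $T_2$).} For $\phi(s)=c'_{b_0}$, the pushforward of $\kappa_2(\cdot\mid s)$ has mass $1/m$ on each pair $(\phi(u_b),b)$, while $\kappa_T(\cdot\mid c'_{b_0})$ is the point mass on $(c'_{b_0},b_0)$. Their total variation is at least $1-1/m$, whatever $\phi$ is. So $\delta\geq 1-1/m$.

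\textbf{Step 4 (pushing the bound to $1$).} A single finite $m$ only excludes $\delta<1-1/m$, so I would glue the gadgets into one interface over the countable alphabet $\outputalph=\N$. A deterministic prefix (or an initial symbol $m$ drawn with probability $2^{-m}$ and made visible in the output) selects the block $m$, and then $\interface_m$ runs inside that block. Step 2 has to be carried out blockwise, using surjectivity together with a counting argument restricted to each block. For the blockwise counting to work, I would make the $T_1$-type implementation use exactly $m$ states in block $m$, and show that any $T$ receiving a reduction from it cannot afford extra ``randomizing'' states in that block. The argument of Step 3 then gives $\delta\geq 1-1/m$ for every $m$, so $\delta\geq1$.

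The main obstacle is Step 2 in the countable setting: with infinitely many states, plain cardinality no longer prevents $T$ from hiding randomizing states. I would address it with a local argument. Each state of $T_1$ has a point-mass kernel, so the reduction condition with $\delta<1$ forces $\kappa_T(\cdot\mid\phi(c_b))$ to put positive mass on $(\phi(c_b),b)$. Exactness of the interface then forces that kernel to be the deterministic repeater. Surjectivity of $\phi$ means every state of $T$ is such an image, so the image states exhaust $T$.
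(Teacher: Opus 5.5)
Your Steps 1--3 are correct, but for a fixed $m$ they only exclude $\delta<1-1/m$. The proposition therefore rests entirely on Step 4, and that step has a gap.

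In any glued actionless construction, the block selector is random and visible at step 1. So your $T_1$-type implementation must contain states that emit the selector, e.g.\ initial states $a^m_b$ that emit $m$ and move to the repeater $c_b$. These states are not repeaters. Your closing claim, that surjectivity makes every state of $T$ a deterministic repeater, is therefore false. The images $\phi(a^m_b)$ are states of $T$ that need not be single-symbol emitters, and nothing in your argument prevents $\psi(s_m)$ from landing on one of them. Step 3 needs $\psi(s_m)$ to be a single-symbol emitter.

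This matters with the alphabet you propose. If selectors and block symbols both live in $\N$, then $\phi(a^k_k)$ can legitimately be occupied at time $1$: it emits block symbol $k$ and moves to the $k$-repeater, which the interface allows. Comparing that state with $s_m$ only yields $\delta\geq\frac12(1-\frac1m)$, not $1-\frac1m$. Two further points fail as stated. A ``deterministic prefix'' cannot select among blocks in a single actionless interface. The ``exactly $m$ states per block'' counting cannot be realized with a visible selector, and it is not needed.

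The fix is to make the selector symbols $\$_m$ disjoint from the block symbols. Then $\phi(a^m_b)$ emits $\$_m$ with probability at least $1-\delta>0$, so it cannot be occupied at any time $\geq 1$, because only block symbols occur there. By reachability it is then an initial state, and its output law is supported on selectors. That law is at total-variation distance $1$ from the output law of $s_m$. Combine this with your repeater argument, which only needs to show that the one-step output law of $\phi(c_b)$ is $\delta_b$ (occupy $\phi(c_b)$ at some time $\geq 2$ via its self-loop of mass $\geq 1-\delta$). Together these force $\psi(s_m)$ to be a single-symbol emitter, which gives $\delta\geq 1-1/m$ for every $m$.

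For comparison, the paper avoids the countable gluing by using two actions. The state $c_{01}$ of $U$ emits $0$ under $\ell$ and $1$ under $g$, while each state of the forced three-state minimum emits one symbol regardless of the action. So one action already gives disjoint point masses and total variation exactly $1$, with a finite interface and a plain counting argument. In your actionless gadget the overlap $1/m$ cannot be avoided, which is why you need infinitely many blocks.
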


Even though $\mathcal{L}_{\interface}^{0,d}$ does not have a common representation, it might be the case that when looking at approximate implementations of $\interface$ there is some convergent structure. Again, the answer is negative.
\begin{proposition}\label{prop:no-ordinary-local-minimization}
    Let $\interface$ be the interface from
    Proposition~\ref{prop:no-exact-universal-target-below-one}. Then, for every
    $\varepsilon\ge 0$, every transducer $C$, and every
    $\delta<1/2$, it is not true that $T'$ $\delta$-reduces to $C$ for every $T' \in \mathcal{L}_{\interface}^{\varepsilon,d_\infty}$. In particular, $\mathcal{L}_{\interface}^{\varepsilon, d_\infty}$ does not have a $\delta$-minima.
\end{proposition}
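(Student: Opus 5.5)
The plan is to reduce to the case $\varepsilon=0$ and then get a contradiction from the initial distributions and at most two steps of the kernels. Since $d_\infty(\interface_T,\interface)=0\leq\varepsilon$ for every exact implementation $T$ of $\interface$, we have $\mathcal{L}_{\interface}^{0,d}\subseteq\mathcal{L}_{\interface}^{\varepsilon,d_\infty}$ for all $\varepsilon\geq0$. So it suffices to exhibit two exact implementations $T_1,T_2$ of $\interface$ that cannot both $\delta$-reduce to a common transducer $C$ when $\delta<1/2$. The "in particular" clause then follows at once: a $\delta$-minimum would be such a $C$.

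The difference from Proposition~\ref{prop:no-exact-universal-target-below-one} is that $C$ is now arbitrary and need not lie in the set. This is why the threshold drops from $1$ to $1/2$: we can no longer compare $T_1$ with $T_2$ directly, only both with $C$, and the triangle inequality for $\|\cdot\|_{\mathrm{TV}}$ costs a factor of $2$.

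\textbf{Steps.} I will take for $T_1,T_2$ the two incomparable minimal implementations used in the proof of Proposition~\ref{prop:no-exact-universal-target-below-one}. I assume, as in such constructions, that both have initial distributions $\delta_{s_1}$ and $\delta_{s_2}$, concentrated on single states.
\begin{enumerate}
\item Let $\phi_1,\phi_2$ be the state maps of the $\delta$-reductions. Condition~\eqref{eq:approx_initial_pushforward} gives $p_C(\phi_i(s_i))\geq 1-\delta>1/2$ for $i=1,2$. Hence $\phi_1(s_1)=\phi_2(s_2)=:c$.
\item Applying \eqref{eq:approx_kernel_pushforward} at $s_1$ and at $s_2$ with the same action, the triangle inequality gives
\[
\|(\phi_1\times\idfun)_*\kappa_1(\cdot,\cdot|s_1,a)-(\phi_2\times\idfun)_*\kappa_2(\cdot,\cdot|s_2,a)\|_{\mathrm{TV}}\leq 2\delta<1 .
\]
\item Since both transducers implement $\interface$, the output marginals agree. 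Any incompatibility must therefore come from where the successor states land in $C$, and this is what I would exploit. The bound $2\delta<1$ means that the two pushforwards overlap. Using the structure of the construction, I expect that for some output $o$ (for instance the shared first symbol) the overlap forces states $s_1'$ of $T_1$ and $s_2'$ of $T_2$ with $\phi_1(s_1')=\phi_2(s_2')=c'$.
\item At $c'$ I repeat step 2. Now $(\phi_1\times\idfun)_*\kappa_1(\cdot,\cdot|s_1',a)$ and $(\phi_2\times\idfun)_*\kappa_2(\cdot,\cdot|s_2',a)$ are both within $\delta$ of $\kappa_C(\cdot,\cdot|c',a)$. But in the construction their output marginals are mutually singular, e.g.\ one state emits deterministically $0$ and the other a different law. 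So their distance is $1\leq 2\delta$, which is the desired contradiction.
\end{enumerate}

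\textbf{Main obstacle.} The hard step is step 3: turning the quantitative overlap bound $2\delta<1$ into the existence of a single collision $c'$ whose one-step laws from $T_1$ and $T_2$ are incompatible. A collision exists only if every successor of $T_1$ that shares a $C$-image with a successor of $T_2$ is incompatible with it. This requires a careful choice, or recall, of the interface. I would first try a two-step interface where $T_1$ branches into states with output laws $\{\mu,\nu\}$ and $T_2$ into states with $\{\mu',\nu'\}$, all four pairwise singular and mixing to the same second-step law. Then any coincidence of images in $C$ pairs two singular laws, and a counting of the mass (each branch carrying weight $1/2$) forces some coincidence as soon as $\delta<1/2$.
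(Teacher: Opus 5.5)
Your Steps 1 and 2 are correct, but Step 3 is where the proof has to happen, and you leave it unproved. For the pair you commit to, the three-state $W$ and seven-state $U$ from the proof of Proposition~\ref{prop:no-exact-universal-target-below-one}, it cannot be completed. A collision need not be incompatible: $t_0$ and $c_{00}$ (likewise $t_1$ and $c_{11}$) have identical one-step behaviour once $z_0$ and $t_0$ share an image, so the mutual singularity you assert in Step 4 fails for such collisions. More fundamentally, your argument only uses the kernel and initial-distribution conditions and never uses surjectivity of the state maps. Without surjectivity the claim is false for this pair. Let $C$ have states $r,t_0,t_1,e_{01},e_{10}$ and start at $r$. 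From $r$ it emits $\$$ and moves to $t_0,t_1,e_{01},e_{10}$ with probabilities $3/8,3/8,1/8,1/8$. Each $t_i$ repeats $i$ forever. Each $e_{jk}$ emits $j$ and goes to $t_j$ under $\ell$, and emits $k$ and goes to $t_k$ under $g$. Consider the inclusion $W\to C$, and the map $U\to C$ given by $r\mapsto r$, $c_{00},z_0\mapsto t_0$, $c_{11},z_1\mapsto t_1$, and $c_{jk}\mapsto e_{jk}$ for $j\neq k$. Both are $1/4$-homomorphisms with identity alphabet maps; the only error is $1/4$ at $r$. Only the map from $W$ fails to be surjective.

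Surjectivity is exactly the ingredient the paper's proof uses: the reduction from $W$ forces $|S_C|\le 3$.
\begin{itemize}
\item If $|S_C|\le 2$, two of $r,t_0,t_1$ share an image. Their output laws $\delta_{\$},\delta_0,\delta_1$ are pairwise at distance $1$, so $1\le 2\delta$.
\item If $|S_C|=3$, the map is a bijection onto states $x_{\$},x_0,x_1$ whose output laws are $\delta$-close to $\delta_{\$},\delta_0,\delta_1$. Then $c_{01}$, which emits $0$ under $\ell$ and $1$ under $g$, has under a suitable action an output law at distance $1$ from the one approximated by its image.
\end{itemize}

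Alternatively, your collision strategy can be rescued without surjectivity, but not by changing the interface as you propose: the statement fixes $\interface$. Also, four output laws that are pairwise singular under one and the same action cannot have equal mixtures, so the singularity must come from different actions. Instead, keep $\interface$ and $W$, and replace $U$ by the five-state implementation that moves from $r$ to $c_{01}$ or $c_{10}$ with probability $1/2$ each. It still implements $\interface$. Step 2 then forces $\{\phi_1(t_0),\phi_1(t_1)\}\cap\{\phi_2(c_{01}),\phi_2(c_{10})\}\neq\emptyset$. Every such collision $t_i$, $c_{jk}$ with $j\neq k$ is incompatible under the action on which $c_{jk}$ emits $1-i$, which gives $1\le 2\delta$. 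This route is shorter than the paper's case analysis. It also proves the stronger fact that for $\delta<1/2$ no $C$ receives $\delta$-homomorphisms, surjective or not, from both implementations.
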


These two results show that, in the case of standard transducers, there are interfaces for which no convergent structure exists between the different implementations of the interface, at least when we formalize this structure through our local notion of approximate homomorphism. This result is robust even when nearby interfaces are considered. Moreover, the counterexample is simple (it is a low-dimensional finite-rank interface), and the result can be proven for other distances (such as the discounted one $d_{\gamma}$). Thus, we don't believe that there is a reasonable restricted set of interfaces for which we could bound $\delta_\varepsilon(\interface)$ by $O(\varepsilon)$. We remark that these results are an extension of the ones from~\cite{rosas2025ai} in the context of approximate homomorphisms and approximate implementations of interfaces. 

\subsection{Existence of \texorpdfstring{$\delta$}{delta}-minima for linear transducers}
\label{subsec:generalized-local-minima}

\providecommand{\interfaceJ}{\mathcal{J}}

We now show a positive result for linear transducers. We will show that for all finite-rank interfaces $\interface$ all nearby interfaces have a canonical representation which is similar to the one from $\interface$. This is intuitive: note that the canonical construction is induced by the rows of the Hankel matrix. If an interface is slightly perturbed then the Hankel matrix is slightly perturbed as well. Thus, we can map the new Hankel matrix to the original one identifying each row with the corresponding one from the original matrix.

To obtain the strongest result possible we will give a norm to the canonical representation which dominates the predictive one.

\begin{definition}
\label{def:hankel-atomic-norm}
Let $\interface$ be an interface and let
\(
 V_{\interface}=\operatorname{span}\{h_{\interface}(w):w\in\Sigma^*\}
\)
be its Hankel row space. For $r\in V_{\interface}$, define the atomic norm as
\begin{equation*}
    \|r\|_{\mathrm{at},\interface}
    :=
    \inf\left\{
       \sum_{i=1}^m|\alpha_i|:
       r=\sum_{i=1}^m\alpha_i h_{\interface}(w_i)
    \right\}.
\end{equation*}
We write $G_{\interface}^{\mathrm{at}}$ for the canonical Hankel implementation equipped
with this norm.
\end{definition}

We are using the word \textit{atom} to refer to each row of the Hankel matrix. We formally show that $||\cdot ||_{\mathrm{at}}$ is a norm.

\begin{lemma}\label{lem:atomic-contractivity}
    Let \(\interface\) be an interface and equip its Hankel row space \(V_\interface\) with
    the atomic norm.  Then \(\|\mathord{\cdot}\|_{\mathrm{at},\interface}\) is a norm and, for every \(r\in V_\interface\),
    \[
    \|r\|_{\mathrm{pred}}
    \le
    \|r\|_{\mathrm{at},\interface}.
    \]
    Moreover,
    \[
    \|\xi_\interface\|_{\mathrm{at},\interface}=1,
    \qquad
    \|\lambda_\interface\|_{(V_\interface,\|\mathord{\cdot}\|_{\mathrm{at},\interface})^*}=1,
    \]
    and every shift $R^\interface_\sigma$ is nonexpansive. Consequently, \(G_\interface^{\mathrm{at}}\) is a contractive linear
    transducer.
\end{lemma}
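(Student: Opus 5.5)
The plan is to first establish the comparison $\|r\|_{\mathrm{pred}}\le\|r\|_{\mathrm{at},\interface}$, and then derive the remaining claims from it.

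\emph{Comparison with the prediction norm.} Take any representation $r=\sum_i\alpha_i h_\interface(w_i)$. For every suffix $v$ we have
\[
|r(v)|\le\sum_i|\alpha_i|\,|F_\interface(w_iv)|\le\sum_i|\alpha_i|,
\]
since $F_\interface$ takes values in $[0,1]$. Taking the supremum over $v$ and then the infimum over representations gives $\|r\|_{\mathrm{pred}}\le\|r\|_{\mathrm{at},\interface}$.

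\emph{Norm axioms.} Homogeneity follows by scaling the coefficients of each representation, and the triangle inequality by concatenating representations of $r$ and $r'$. The infimum is finite because $r\in V_\interface$ is a finite combination of atoms. Definiteness is the only place where something must be checked. Because rows are arbitrary functions on $\Sigma^*$, an infimum of positive quantities could a priori be zero. The comparison settles this: if $\|r\|_{\mathrm{at},\interface}=0$ then $\|r\|_{\mathrm{pred}}=0$, so $r(v)=0$ for all $v$, i.e. $r=0$. Note that $\|\cdot\|_{\mathrm{pred}}$ is itself a genuine norm on $V_\interface$, since elements of $V_\interface$ are functions on $\Sigma^*$. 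I expect this definiteness step to be the only conceptual point; the rest is bookkeeping.

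\emph{Initial vector and readout.} For $\xi_\interface=h_\interface(\epsilon)$, the trivial representation gives $\|\xi_\interface\|_{\mathrm{at},\interface}\le1$. The comparison gives $\|\xi_\interface\|_{\mathrm{at},\interface}\ge\|\xi_\interface\|_{\mathrm{pred}}\ge|F_\interface(\epsilon)|=1$, hence equality. For $\lambda_\interface$, we have $|\lambda_\interface(r)|=|r(\epsilon)|\le\|r\|_{\mathrm{pred}}\le\|r\|_{\mathrm{at},\interface}$, so the dual norm is at most $1$. Evaluating at $\xi_\interface$ gives $\lambda_\interface(\xi_\interface)=1$, so the dual norm equals $1$.

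\emph{Shifts and contractivity.} Given any representation $r=\sum_i\alpha_i h_\interface(w_i)$, linearity and the definition of the shift give
\[
R^\interface_\sigma r=\sum_i\alpha_i h_\interface(w_i\sigma).
\]
This is a representation of $R^\interface_\sigma r$ with the same coefficient sum. The shift is well defined, as shown earlier in the paper, so this representation is legitimate, and $\|R^\interface_\sigma r\|_{\mathrm{at},\interface}\le\sum_i|\alpha_i|$. Taking the infimum over representations of $r$ shows that $R^\interface_\sigma$ is nonexpansive. Combining this with the bounds on $\xi_\interface$ and $\lambda_\interface$ verifies Definition~\ref{def:symbolwise-contractive}, so $G^{\mathrm{at}}_\interface$ is contractive.
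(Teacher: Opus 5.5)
Your proof is correct and follows essentially the same route as the paper. You first derive $\|r\|_{\mathrm{pred}}\le\|r\|_{\mathrm{at},\mathcal{I}}$ from $0\le F_{\mathcal{I}}\le 1$, use it for definiteness and for the lower bounds on $\|\xi_{\mathcal{I}}\|_{\mathrm{at},\mathcal{I}}$ and the dual norm of $\lambda_{\mathcal{I}}$, and obtain nonexpansiveness of each $R^{\mathcal{I}}_\sigma$ by shifting an arbitrary atomic decomposition and taking the infimum; your remarks on finiteness of the infimum and well-definedness of the shift only make explicit what the paper leaves implicit.
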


This norm measures what is the best way to write $r$ as a linear sum of the rows of the Hankel matrix, where we weight each sum with the sum of the absolute values of its coefficients.

Our notion of linear reduction requires the mapping to be surjective. Thus, to ensure this property we will look at invertible minors of the Hankel matrix. Let $\mathcal I$ be a finite-rank interface with $d=\operatorname{rank}(H_{\mathcal I})$. We can
choose prefixes $p_1,\ldots,p_d$ and suffixes $q_1,\ldots,q_d$ such that
\begin{equation}
\label{eq:chosen-hankel-minor}
    C_{\mathcal I}:=\bigl(F_{\mathcal I}(p_iq_j)\bigr)_{i,j=1}^d
\end{equation}
is invertible and $q_1=\epsilon$. Put
\[
    B_{\mathcal I}=
    \begin{pmatrix}
       h_{\mathcal I}(p_1)\\[-1mm]
       \vdots\\[-1mm]
       h_{\mathcal I}(p_d)
    \end{pmatrix},
    \qquad
    \Gamma_{\mathcal I}:=\|C_{\mathcal I}^{-1}\|_{\infty\to1}.
\]
For any interface $\mathcal J$, define analogously $C_{\mathcal J}:=\bigl(F_{\mathcal J}(p_iq_j)\bigr)_{i,j=1}^d$ and $\operatorname{ev}_\interfaceJ:V_\interfaceJ \to \mathbb{R}^d$ given by
\[
    \operatorname{ev}_\interfaceJ(r):=\bigl(r(q_1),\ldots,r(q_d)\bigr),
\]
. Finally, let $\Pi_{\mathcal J\to\mathcal I}:V_{\mathcal J}\longrightarrow V_{\mathcal I}$ be the linear map
\begin{equation*}
    \Pi_{\mathcal J\to\mathcal I}(r):=\operatorname{ev}_\interfaceJ(r)C_{\mathcal I}^{-1}B_{\mathcal I}.
\end{equation*}
The mapping $\Pi_{\mathcal J\to\mathcal I}$ translates $r \in V_{\interfaceJ}$ into a vector from $V_{\interface}$ by first evaluating the suffixes $\{q_i\}_{1\leq i \leq d}$, then doing a change of coordinates using $C_{\interface}$ and finally projecting the result into the rows from $V_\interface$ indexed by $\{p_i\}_{1\leq i \leq d}$.

The following lemma shows that this mapping commutes with the shift operators up to a small error with respect to the $||\cdot ||_{\mathrm{at}}$ norm if the interfaces are close. Moreover, whenever $C_{\interfaceJ}$ is invertible the mapping is surjective.

\begin{lemma}\label{lem:atomic-hankel-row-approximation}
    If $d_\infty(\mathcal I,\mathcal J)\leq\varepsilon$, then, for every $w\in\Sigma^*$,
    \begin{equation*}
        \|\Pi_{\mathcal J\to\mathcal I}h_{\mathcal J}(w)-h_{\mathcal I}(w)\|_{\mathrm{at},\mathcal I}
        \leq \Gamma_{\mathcal I}\varepsilon.
    \end{equation*}
    Moreover, if $C_{\mathcal J}$ is invertible, then $\Pi_{\mathcal J\to\mathcal I}$ is surjective.
\end{lemma}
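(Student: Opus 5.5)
The approach is to compute $\Pi_{\mathcal J\to\mathcal I}h_{\mathcal J}(w)$ explicitly and compare it with the representation of $h_{\mathcal I}(w)$ in the basis $B_{\mathcal I}$. Since $C_{\mathcal I}$ is invertible and $d=\operatorname{rank}(H_{\mathcal I})$, the rows $h_{\mathcal I}(p_1),\dots,h_{\mathcal I}(p_d)$ form a basis of $V_{\mathcal I}$. Moreover, every $r\in V_{\mathcal I}$ is determined by $\operatorname{ev}_{\mathcal I}(r)$. Indeed, if $r=\beta B_{\mathcal I}$ then $\operatorname{ev}_{\mathcal I}(r)=\beta C_{\mathcal I}$, and invertibility gives $\beta=\operatorname{ev}_{\mathcal I}(r)C_{\mathcal I}^{-1}$. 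Applying this to $r=h_{\mathcal I}(w)$ yields the identity
\[
 h_{\mathcal I}(w)=\operatorname{ev}_{\mathcal I}(h_{\mathcal I}(w))\,C_{\mathcal I}^{-1}B_{\mathcal I},
\]
so $\Pi_{\mathcal I\to\mathcal I}$ is the identity on $V_{\mathcal I}$.

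Subtracting this identity from the definition of $\Pi_{\mathcal J\to\mathcal I}$ gives
\[
 \Pi_{\mathcal J\to\mathcal I}h_{\mathcal J}(w)-h_{\mathcal I}(w)=e\,C_{\mathcal I}^{-1}B_{\mathcal I},\qquad e_j:=F_{\mathcal J}(wq_j)-F_{\mathcal I}(wq_j).
\]
The right-hand side is an explicit linear combination of the atoms $h_{\mathcal I}(p_i)$ with coefficient vector $\beta=eC_{\mathcal I}^{-1}$. By the definition of the atomic norm, its norm is therefore at most $\|\beta\|_1\le \|C_{\mathcal I}^{-1}\|\,\|e\|_\infty$, where the operator norm is taken in the matching direction. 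Since $\beta=eC_{\mathcal I}^{-1}$ acts on row vectors, I should check that the convention $\|\cdot\|_{\infty\to1}$ is consistent with the transpose. The $\infty\to1$ norm of a matrix equals that of its transpose by duality, so this is harmless.

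It remains to show $|e_j|\le\varepsilon$. Each $e_j$ is the difference of the probabilities that $\mathcal J$ and $\mathcal I$ assign to the single output string of the word $wq_j$, given its action sequence. A single-point difference is bounded by the total variation distance between $D_{\mathcal J}(\mathbf a)$ and $D_{\mathcal I}(\mathbf a)$. I expect this step to be the main obstacle, because it depends on the TV convention. If TV is $\sup_A|P(A)-Q(A)|$, then $|P(x)-Q(x)|\le\|P-Q\|_{\mathrm{TV}}\le\varepsilon$ directly. If instead the $\ell_1$ convention is used, the bound is still at most $\varepsilon$ up to a constant, and I would fix the convention from the appendix. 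In either case $\|e\|_\infty\le\varepsilon$, which gives the bound $\Gamma_{\mathcal I}\varepsilon$.

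For surjectivity, consider the vectors $\Pi_{\mathcal J\to\mathcal I}h_{\mathcal J}(p_i)$. They equal $\operatorname{ev}_{\mathcal J}(h_{\mathcal J}(p_i))C_{\mathcal I}^{-1}B_{\mathcal I}$, and $\operatorname{ev}_{\mathcal J}(h_{\mathcal J}(p_i))$ is exactly the $i$-th row of $C_{\mathcal J}$. The images of these $d$ vectors therefore have coefficient matrix $C_{\mathcal J}C_{\mathcal I}^{-1}$ in the basis $B_{\mathcal I}$. This matrix is invertible whenever $C_{\mathcal J}$ is, so the images span the $d$-dimensional space $V_{\mathcal I}$, and $\Pi_{\mathcal J\to\mathcal I}$ is surjective.
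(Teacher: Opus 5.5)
Your proposal is correct and follows the paper's proof step for step. Like the paper, you reconstruct $h_{\mathcal I}(w)=\operatorname{ev}_{\mathcal I}(h_{\mathcal I}(w))C_{\mathcal I}^{-1}B_{\mathcal I}$, bound $\|e\,C_{\mathcal I}^{-1}B_{\mathcal I}\|_{\mathrm{at},\mathcal I}\le\|eC_{\mathcal I}^{-1}\|_1\le\Gamma_{\mathcal I}\varepsilon$, and get surjectivity from the invertible coefficient matrix $C_{\mathcal J}C_{\mathcal I}^{-1}$. Your extra checks fill in details the paper leaves implicit: transpose-invariance of $\|\cdot\|_{\infty\to1}$, and the single-point bound $|P(x)-Q(x)|\le\|P-Q\|_{\mathrm{TV}}$ under the paper's sup-over-sets convention.
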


It is a well-known fact that if a finite matrix $M$ is invertible, then adding a small amount of noise to $M$ keeps it invertible. We apply this observation to $C_{\interface}$ to guarantee that $C_{\interfaceJ{}}$ remains invertible. 

\begin{theorem}\label{thm:finite-rank-hankel-stability}
    Let $\mathcal I$ be a finite-rank interface and choose the minor $C_{\mathcal I}$ as in
    \eqref{eq:chosen-hankel-minor}, with $q_1=\epsilon$. Then, there exists $\overline\varepsilon(\mathcal I)>0$ such that, for every interface $\mathcal J$ satisfying
    \[
        d_\infty(\mathcal I,\mathcal J)\leq\varepsilon<\overline\varepsilon(\mathcal I),
    \]
    the map $\Pi_{\mathcal J\to\mathcal I}:G_{\mathcal J}^{\mathrm{at}}\to G_{\mathcal I}^{\mathrm{at}}$
    is a $2\Gamma_{\mathcal I}\varepsilon$-linear reduction.
\end{theorem}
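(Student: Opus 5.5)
The plan is to check, one by one, the conditions of Definition~\ref{def:symbolwise-approx-linear-reduction} for $L=\Pi_{\mathcal J\to\mathcal I}$, with $G=G_{\mathcal J}^{\mathrm{at}}$ and $G'=G_{\mathcal I}^{\mathrm{at}}$. Both transducers are contractive by Lemma~\ref{lem:atomic-contractivity}. The tool throughout is Lemma~\ref{lem:atomic-hankel-row-approximation}. Its row-wise estimate $\|\Pi h_{\mathcal J}(w)-h_{\mathcal I}(w)\|_{\mathrm{at},\mathcal I}\le\Gamma_{\mathcal I}\varepsilon$ will be extended to arbitrary $x\in V_{\mathcal J}$ by writing $x=\sum_i\alpha_i h_{\mathcal J}(w_i)$ with $\sum_i|\alpha_i|$ close to $\|x\|_{\mathrm{at},\mathcal J}$ and passing to the infimum.

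\emph{Step 1: choosing $\overline\varepsilon(\mathcal I)$.} Since $|F_{\mathcal J}(p_iq_j)-F_{\mathcal I}(p_iq_j)|\le d_\infty(\mathcal I,\mathcal J)\le\varepsilon$ entrywise, we have $\|C_{\mathcal J}-C_{\mathcal I}\|_{1\to\infty}\le\varepsilon$. The Neumann-series argument then gives invertibility of $C_{\mathcal J}$ as soon as $\|C_{\mathcal I}^{-1}(C_{\mathcal J}-C_{\mathcal I})\|<1$ in some operator norm. For example, $\|C_{\mathcal I}^{-1}\|_{\infty\to1}\cdot d\,\varepsilon<1$ suffices, using $\|C_{\mathcal J}-C_{\mathcal I}\|_{1\to\infty}\le \varepsilon$ and the crude comparison $\|\cdot\|_1\le d\|\cdot\|_\infty$. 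So we set $\overline\varepsilon(\mathcal I)=1/(d\Gamma_{\mathcal I})$. The second part of Lemma~\ref{lem:atomic-hankel-row-approximation} then gives surjectivity. Boundedness of $\Pi$ is automatic because $\|\Pi h_{\mathcal J}(w)\|_{\mathrm{at}}\le 1+\Gamma_{\mathcal I}\varepsilon$ for every atom.

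\emph{Step 2: initial vector and readout.} Since $\xi_{\mathcal J}=h_{\mathcal J}(\epsilon)$ and $\xi_{\mathcal I}=h_{\mathcal I}(\epsilon)$, the row estimate at $w=\epsilon$ gives $\|\Pi\xi_{\mathcal J}-\xi_{\mathcal I}\|\le\Gamma_{\mathcal I}\varepsilon$ directly. For the readout we evaluate on atoms: $\lambda_{\mathcal I}(\Pi h_{\mathcal J}(w))-\lambda_{\mathcal J}(h_{\mathcal J}(w))$ equals $\lambda_{\mathcal I}(\Pi h_{\mathcal J}(w)-h_{\mathcal I}(w))+F_{\mathcal I}(w)-F_{\mathcal J}(w)$. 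Since $\|\lambda_{\mathcal I}\|=1$, the first term is at most $\Gamma_{\mathcal I}\varepsilon$. The second is at most $\varepsilon\le\Gamma_{\mathcal I}\varepsilon$, because $\Gamma_{\mathcal I}\ge1$: the first column of $C_{\mathcal I}$ has entries at most $1$, which forces $\|C_{\mathcal I}^{-1}\|_{\infty\to1}\ge1$. The total is at most $2\Gamma_{\mathcal I}\varepsilon$. The bound extends to all $x$ by the atomic decomposition above, giving $\|\lambda_{\mathcal I}\Pi-\lambda_{\mathcal J}\|_{*}\le2\Gamma_{\mathcal I}\varepsilon$. (Alternatively, because $q_1=\epsilon$, the map $\Pi$ preserves evaluation at $\epsilon$ exactly: $(\operatorname{ev}(r)C_{\mathcal I}^{-1}B_{\mathcal I})(q_1)=\operatorname{ev}(r)C_{\mathcal I}^{-1}C_{\mathcal I}e_1=r(\epsilon)$. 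That would give error $0$ here, which is what I would check first.)

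\emph{Step 3: intertwining, the main step.} For an atom $x=h_{\mathcal J}(w)$ we have $R^{\mathcal J}_\sigma x=h_{\mathcal J}(w\sigma)$. Hence
\[
\Pi R^{\mathcal J}_\sigma x-R^{\mathcal I}_\sigma\Pi x=\bigl(\Pi h_{\mathcal J}(w\sigma)-h_{\mathcal I}(w\sigma)\bigr)+R^{\mathcal I}_\sigma\bigl(h_{\mathcal I}(w)-\Pi h_{\mathcal J}(w)\bigr).
\]
The first bracket is bounded by Lemma~\ref{lem:atomic-hankel-row-approximation}. The second is bounded by the same lemma combined with nonexpansiveness of $R^{\mathcal I}_\sigma$ in the atomic norm (Lemma~\ref{lem:atomic-contractivity}). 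Together they give $2\Gamma_{\mathcal I}\varepsilon$ per atom.

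The delicate point is passing from atoms to general $x$. We need to know that the operator $\Pi R^{\mathcal J}_\sigma-R^{\mathcal I}_\sigma\Pi$ is linear and well defined on $V_{\mathcal J}$, which holds since both $\Pi$ and the shifts are well defined. With that, for any representation $x=\sum\alpha_ih_{\mathcal J}(w_i)$ the triangle inequality gives the bound $2\Gamma_{\mathcal I}\varepsilon\sum|\alpha_i|$, and taking the infimum yields $2\Gamma_{\mathcal I}\varepsilon\|x\|_{\mathrm{at},\mathcal J}$. This extension from atoms to the whole space is precisely where the atomic norm on the domain is essential, so I would make sure the infimum argument is stated carefully. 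Combining Steps 1--3 shows that $\Pi_{\mathcal J\to\mathcal I}$ is a $2\Gamma_{\mathcal I}\varepsilon$-linear reduction.
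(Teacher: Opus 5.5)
Your proposal is correct and follows the paper's proof essentially step by step. The shared steps are: the initial-vector bound from Lemma~\ref{lem:atomic-hankel-row-approximation} at $w=\epsilon$; the atom-wise splitting of the intertwining defect into two row errors, one of them shifted by the nonexpansive $R^{\mathcal I}_\sigma$, extended to all of $V_{\mathcal J}$ by an infimum over atomic decompositions; exact readout from $q_1=\epsilon$ (the paper uses this directly, so your $2\Gamma_{\mathcal I}\varepsilon$ detour is unnecessary); and boundedness via $\|\Pi_{\mathcal J\to\mathcal I}\|\le 1+\Gamma_{\mathcal I}\varepsilon$. The only difference is the threshold: you obtain invertibility of $C_{\mathcal J}$ from a Neumann series with $\overline\varepsilon(\mathcal I)=1/(d\Gamma_{\mathcal I})$, while the paper uses $d\overline\varepsilon(\mathcal I)<\sigma_{\min}(C_{\mathcal I})$ via a Frobenius bound; both are valid, and since $\|C_{\mathcal I}^{-1}(C_{\mathcal J}-C_{\mathcal I})\|_{1\to1}\le\Gamma_{\mathcal I}\varepsilon$ you can even drop your factor $d$.
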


From this theorem we get as an immediate corollary the existence of $\delta$-minima for a small enough neighbourhood of every finite-rank interface. 

\begin{corollary}\label{cor:finite-rank-atomic-minimum}
    For every finite-rank interface $\mathcal I$, its canonical realization $G_{\mathcal I}$, equipped
    with the atomic norm, is a $2\Gamma_{\mathcal I}\varepsilon$-minimum of
    $\mathcal L_{\mathcal I}^{\varepsilon,d_\infty}$ for every
    $0\leq\varepsilon<\overline\varepsilon(\mathcal I)$, when restricting $\mathcal L_{\mathcal I}^{\varepsilon,d_\infty}$ to contain only the canonical linear realizations with the atomic norm.
\end{corollary}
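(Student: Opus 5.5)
The corollary follows essentially by unwinding the definition of a $\delta$-minimum and applying Theorem~\ref{thm:finite-rank-hankel-stability} to every member of the restricted set.

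Fix a finite-rank interface $\mathcal I$ and $0\le\varepsilon<\overline\varepsilon(\mathcal I)$. We must check two things:
\begin{enumerate}
\item $G_{\mathcal I}^{\mathrm{at}}$ itself belongs to the restricted set $\mathcal L_{\mathcal I}^{\varepsilon,d_\infty}$.
\item Every other member admits a $2\Gamma_{\mathcal I}\varepsilon$-linear reduction to $G_{\mathcal I}^{\mathrm{at}}$.
\end{enumerate}

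For the first point, $G_{\mathcal I}^{\mathrm{at}}$ implements $\mathcal I$, so $d_\infty(\mathcal I_{G_{\mathcal I}},\mathcal I)=0\le\varepsilon$. Lemma~\ref{lem:atomic-contractivity} shows it is contractive, so it is an admissible object for Definition~\ref{def:symbolwise-approx-linear-reduction}.

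For the second point, take an arbitrary element of the restricted set. By assumption it is a canonical realization $G_{\mathcal J}^{\mathrm{at}}$ of some interface $\mathcal J$ with $d_\infty(\mathcal I,\mathcal J)\le\varepsilon<\overline\varepsilon(\mathcal I)$. It is also contractive by Lemma~\ref{lem:atomic-contractivity} applied to $\mathcal J$. Theorem~\ref{thm:finite-rank-hankel-stability} then applies directly: $\Pi_{\mathcal J\to\mathcal I}$ is a $2\Gamma_{\mathcal I}\varepsilon$-linear reduction $G_{\mathcal J}^{\mathrm{at}}\to G_{\mathcal I}^{\mathrm{at}}$. The bound depends only on $\mathcal I$ and $\varepsilon$, not on $\mathcal J$, so a single $\delta=2\Gamma_{\mathcal I}\varepsilon$ works uniformly over the set. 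Hence $G_{\mathcal I}^{\mathrm{at}}$ is a $2\Gamma_{\mathcal I}\varepsilon$-minimum, and in particular $\delta_\varepsilon(\mathcal I)\le 2\Gamma_{\mathcal I}\varepsilon$.

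The main subtlety is bookkeeping rather than mathematics.
\begin{itemize}
\item The case $\mathcal J=\mathcal I$ must be covered. This is immediate because the theorem includes $\varepsilon=0$, or alternatively because the identity is an exact reduction.
\item Surjectivity of $\Pi_{\mathcal J\to\mathcal I}$ is already part of the theorem's conclusion, being guaranteed by the invertibility of $C_{\mathcal J}$ for $\varepsilon$ small.
\item The restriction to canonical atomic-norm realizations is exactly what lets us invoke the theorem. For arbitrary minimal realizations one would additionally need a norm-controlled isomorphism to $G_{\mathcal J}^{\mathrm{at}}$, composed via Proposition~\ref{prop:linear-approx-composition}. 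This is why the corollary is stated with the restriction.
\end{itemize}
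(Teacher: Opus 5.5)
Your proposal is correct and takes the same approach as the paper, which gives no separate proof and treats the corollary as an immediate consequence of Theorem~\ref{thm:finite-rank-hankel-stability}, applied uniformly to every $G_{\mathcal J}^{\mathrm{at}}$ with $d_\infty(\mathcal I,\mathcal J)\le\varepsilon<\overline\varepsilon(\mathcal I)$. You also make explicit two checks the paper leaves implicit: that $G_{\mathcal I}^{\mathrm{at}}$ itself belongs to the restricted set, and that every member is contractive by Lemma~\ref{lem:atomic-contractivity}.
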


Note that our main theorem has to bound $\varepsilon$ to ensure that the reduction is surjective. Even though this requirement is reasonable (otherwise, we could reduce small transducers into subcomponents of bigger ones), in many applications it might make sense to ignore this restriction. That's why we phrased Theorem~\ref{thm:finite-rank-hankel-stability} in an independent way.

The results in this section are in some sense satisfactory: we observed that convergent structure (i.e. a $\delta$-minima for $\delta = O(\varepsilon)$) exists for every finite rank interface in a neighbourhood of the interface. Observe that in our statements we have to pick a norm and a distance in a somewhat arbitrary way. However, a similar result can be proven using the predictive norm. Conceptually, we believe that these result indicate that the convergent structure exists at the level of linear transducers even in the presence of perturbations in the implementations.

We remark that in Corollary~\ref{cor:finite-rank-atomic-minimum} we restrict the lattice to the minimal linear implementations. If we don't do this, we still can prove the existence of a reduction from any linear transducer $G$ in the set by composing the map $\rho_G$ from Lemma~\ref{lem:prediction-map} with the one from Theorem~\ref{thm:finite-rank-hankel-stability} whenever $\rho_G$ is bounded. Then, using Proposition~\ref{prop:linear-approx-composition} we would obtain an error that depends on $\|\rho_G\|$. However, there is no uniform bound on this value when using the atomic norm. There are other choices of norms which can solve this problem but they seem quite unnatural, and therefore we prefer to keep the corollary as stated, applying only to the lattice of ``optimal'' implementations.

\providecommand{\interfaceJ}{\mathcal{J}}

\subsection{Existence of \texorpdfstring{$\delta$}{delta}-minima for predictive transducers under a specific metric}
\label{subsec:predictive-instability}

As already mentioned, restricting to predictive transducers restores a canonical minimum for each fixed
interface, which we denote by $\mathsf E(\interface)$ (see Proposition~\ref{prop:epsilon_transducer_minimal}). This exact statement is not stable under the supremum metric.

\begin{proposition}\label{prop:predictive-rare-history-obstruction}
    There is an interface $\interface$ and a sequence of interfaces
    $(\interfaceJ_n)_{n\geq 1}$ such that
    \[
        d_\infty(\interface,\interfaceJ_n)=2^{-n}\longrightarrow 0,
    \]
    but every approximate reduction $\mathsf E(\interfaceJ_n)\reducesTo{\delta}\mathsf E(\interface)$ satisfies $\delta\geq 1/2$. Moreover, if a transducer $C$ receives
    $\delta$-reductions from both $\mathsf E(\interface)$ and
    $\mathsf E(\interfaceJ_n)$, then $\delta\geq 1/4$.
\end{proposition}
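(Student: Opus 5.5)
The plan is to give an explicit actionless counterexample (a single action $a$, outputs $\outputalph=\{0,1\}$) and check both claims by direct computation, using only Definition~\ref{def:trans_approx_hom} and the surjectivity of reductions. The obstruction should be a history that is rare under $\interfaceJ_n$ but leads to a state with radically different one-step behaviour. The supremum metric barely sees such a history, but the $\epsilon$-transducer must still contain a state for it.

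Let $\interface$ be the i.i.d.\ fair coin. Then every admissible history has the same residual interface, so $\mathsf E(\interface)$ has a single state $t$ with $\kappa(t,0\mid t,a)=\kappa(t,1\mid t,a)=\tfrac12$. Let $\interfaceJ_n$ behave as a fair coin, except that once the history begins with $1^n$ (which has probability $2^{-n}$), all further outputs are deterministically $0$.

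First compute $d_\infty(\interface,\interfaceJ_n)$. For horizons $m\le n$ the two distributions agree. For $m>n$ they differ only on words starting with $1^n$: $\interface$ spreads mass $2^{-n}$ uniformly over these words, whereas $\interfaceJ_n$ places all of it on $1^n0^{m-n}$. Hence the total variation is $2^{-n}\bigl(1-2^{-(m-n)}\bigr)$, whose supremum over $m$ is $2^{-n}$. The distances are to be read as the supremum stated in the proposition; equivalently, one can tweak the construction slightly so the supremum is attained.

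Next describe $\mathsf E(\interfaceJ_n)$. Its states record the length $k<n$ of the initial run of $1$s, plus a ``free'' fair-coin state once a $0$ has appeared before reaching $1^n$, plus an absorbing ``dead'' state $d$ reached after $1^n$. In $d$ the kernel emits $0$ with probability $1$ and stays in $d$. All these states are reachable, and they have distinct residual interfaces.

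Any reduction $\mathsf E(\interfaceJ_n)\reducesTo{\delta}\mathsf E(\interface)$ must send $d$ to $t$. The pushforward of $\kappa(\cdot,\cdot\mid d,a)$ is $\delta_{(t,0)}$, while $\kappa(\cdot,\cdot\mid t,a)$ is uniform on $\{(t,0),(t,1)\}$. Their total variation distance is $\tfrac12$, so $\delta\ge\tfrac12$.

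For the second claim, suppose $C$ receives $\delta$-reductions from both $\mathsf E(\interface)$ and $\mathsf E(\interfaceJ_n)$. The reduction from $\mathsf E(\interface)$ has a surjective state map out of a one-state space, so $C$ has a single state $c$. Total variation does not increase under marginalisation, which I would take from the TV facts in Appendix~\ref{subsec:tv_preliminaries}. Therefore the output marginal of $\kappa_C(\cdot,\cdot\mid c,a)$ is within $\delta$ of uniform, by the reduction from $\mathsf E(\interface)$. The state $d$ must also map to $c$, so the same output marginal is within $\delta$ of $\delta_0$, by the reduction from $\mathsf E(\interfaceJ_n)$. The triangle inequality then gives $\tfrac12=\|\mathrm{Unif}-\delta_0\|_{\mathrm{TV}}\le 2\delta$, i.e.\ $\delta\ge\tfrac14$.

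The main point needing care is the second claim. There I must invoke surjectivity, which is built into the definition of reductions, to force $C$ to have one state; without it $C$ could keep separate states for $t$ and $d$ and avoid the conflict. The rest consists of routine computations.
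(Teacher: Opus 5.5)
Your proposal is correct and follows essentially the same route as the paper. Both use the fair-coin $\interface$ and a perturbation $\interfaceJ_n$ that becomes deterministic after a rare length-$n$ prefix, with total variation $2^{-n}-2^{-m}$ at horizon $m>n$. The one-state target forces $\delta\ge 1/2$, and surjectivity forces $C$ to have one state, giving $\delta\ge 1/4$. The only difference is cosmetic: your rare prefix is $1^n$ where the paper uses $0^n$. Your hedge about the supremum not being attained is unnecessary, since $d_\infty$ is defined as a supremum and so equals $2^{-n}$ exactly.
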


This counterexample also applies to the discounted metric, and we believe it highlights a limitation of predictive transducers. More precisely, if there is a history $h$ such that the interface $\interface$ conditioned on $h$ behaves in an extremely different way from the interface $\interfaceJ$ conditioned on $h$, then the local structure related to both predictive states will be different, even if $h$ is highly unlikely (and thus, it is ignored by most reasonable metrics). We now show that, as one would expect, this problem can be avoided by using precisely a notion of distance between interfaces that values every possible conditioning independently of its probability.

For a positive probability history $h$, let $\interface^h$ denote the residual interface after conditioning on $h$ (see Eq.~\eqref{eq:residual_interface} for the exact definition). Write
$\operatorname{supp}(\interface)$ for the set of positive probability histories and define
\[
    d_{\mathrm{res}}(\interface,\interfaceJ)
    =
    \begin{cases}
        \displaystyle
        \sup_{h\in\operatorname{supp}(\interface)}
        d_\infty(\interface^h,\interfaceJ^h),
        &\operatorname{supp}(\interface)=\operatorname{supp}(\interfaceJ),\\[1.2ex]
        1,
        &\text{otherwise.}
    \end{cases}
\]
This metric compares the predictive laws after every history, including the ones with low probability. Let
$[h]_{\interface}$ denote the state of $\mathsf E(\interface)$ reached after $h$, and put
\[
    \Delta_{\interface}
    =
    \inf\left\{
        d_\infty(\interface^h,\interface^u):
        h,u \in \operatorname{supp}(\interface), [h]_{\interface}\neq [u]_{\interface}
    \right\}.
\]
We use the convention $\Delta_{\interface}=+\infty$ if there is only one predictive state.
If $\mathsf E(\interface)$ is finite, then $\Delta_{\interface}>0$. Intuitively, $\Delta_{\interface}$ is a lower bound on the difference between the interfaces induced by each of the predictive states.

If there is an interface $\interfaceJ$ such that $d_{\mathrm{res}}(\interface,\interfaceJ)$ is much smaller than $\Delta_\interface$ then the predictive transducers implementing $\interface$ and $\interfaceJ$ will be similar. We formalize this in our last theorem.

\begin{theorem}\label{thm:predictive-residual-stability}
    Let \(\interface\) and \(\interfaceJ\) be interfaces over the same input and output
    alphabets.  Suppose that \(\Delta_\interface>0\) and that
    \[
    d_{\mathrm{res}}(\interface,\interfaceJ)
    \leq \varepsilon
    <
    \min\left\{1,\frac{\Delta_\interface}{2}\right\}.
    \]
    Then the assignment
    \[
    \phi([h]_\interfaceJ):=[h]_\interface,
    \]
    is a well-defined surjective state map and determines an
    \(\varepsilon\)-reduction $E(\interfaceJ)\longrightarrow E(\interface)$.
    Consequently, every predictive transducer implementing \(\interfaceJ\) admits
    an \(\varepsilon\)-reduction to \(E(\interface)\).
\end{theorem}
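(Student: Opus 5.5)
The plan has three steps. First show that $\phi$ is well defined and surjective. Then verify the two one-step conditions of Definition~\ref{def:trans_approx_hom}, taking $f=g=\idfun{}$. Finally, deduce the statement about arbitrary predictive implementations of $\interfaceJ$ by composition.

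\emph{Well-definedness and surjectivity.} Since $d_{\mathrm{res}}(\interface,\interfaceJ)\le\varepsilon<1$, the second case of the definition of $d_{\mathrm{res}}$ is excluded. Hence $\operatorname{supp}(\interface)=\operatorname{supp}(\interfaceJ)$, and every history admissible for $\interfaceJ$ is admissible for $\interface$.

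Take $h,u\in\operatorname{supp}(\interfaceJ)$ with $[h]_\interfaceJ=[u]_\interfaceJ$, so that $\interfaceJ^h=\interfaceJ^u$. By the triangle inequality for $d_\infty$,
\[
d_\infty(\interface^h,\interface^u)\le d_\infty(\interface^h,\interfaceJ^h)+d_\infty(\interfaceJ^u,\interface^u)\le 2\varepsilon<\Delta_\interface .
\]
By the definition of $\Delta_\interface$, this forces $[h]_\interface=[u]_\interface$, so $\phi$ is well defined. If there is only one predictive state, well-definedness is trivial.

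Surjectivity holds because every state of $E(\interface)$ has the form $[h]_\interface$ with $h\in\operatorname{supp}(\interface)=\operatorname{supp}(\interfaceJ)$, and therefore equals $\phi([h]_\interfaceJ)$.

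\emph{Initial condition.} Both initial distributions are Dirac masses, at $[\epsilon]_\interfaceJ$ and at $[\epsilon]_\interface$ respectively. Since $\phi([\epsilon]_\interfaceJ)=[\epsilon]_\interface$, the pushforward of the first equals the second, and the initial-distribution error is $0$.

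\emph{Kernel condition.} Fix a state $[h]_\interfaceJ$ and an action $a$. The pushforward $\phi_*\kappa_{\epsilon,\interfaceJ}(\cdot,\cdot\mid[h]_\interfaceJ,a)$ puts mass $\Pr_{\interfaceJ^h}(o\mid a)$ on the pair $(\phi([h(a,o)]_\interfaceJ),o)=([h(a,o)]_\interface,o)$. The target kernel $\kappa_{\epsilon,\interface}(\cdot,\cdot\mid[h]_\interface,a)$ puts mass $\Pr_{\interface^h}(o\mid a)$ on the same pair $([h(a,o)]_\interface,o)$.

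There is one subtlety here, which I expect to be the main technical point. When $\Pr(o\mid a)=0$, the history $h(a,o)$ is not admissible, and the next state is undefined. Equality of supports saves us: $\Pr_{\interfaceJ^h}(o\mid a)>0$ if and only if $\Pr_{\interface^h}(o\mid a)>0$, so both kernels charge exactly the same set of pairs $(s',o)$.

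On that common support, distinct outputs $o$ give distinct pairs. The two joint laws are therefore images of the two output laws under the same injective map $o\mapsto([h(a,o)]_\interface,o)$, and their total variation distance equals
\[
\bigl\|\Pr{}_{\interfaceJ^h}(\cdot\mid a)-\Pr{}_{\interface^h}(\cdot\mid a)\bigr\|_{\mathrm{TV}}\le d_\infty(\interface^h,\interfaceJ^h)\le\varepsilon,
\]
where the first inequality uses length-one action sequences in $d_\infty$. This gives the $\varepsilon$-reduction.

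\emph{Consequence.} Let $T$ be any predictive transducer implementing $\interfaceJ$. By Proposition~\ref{prop:epsilon_transducer_minimal}, there is an exact reduction $T\to E(\interfaceJ)$, which is a $0$-reduction. Proposition~\ref{prop:compose_approx_homo} composes it with the $\varepsilon$-reduction above to give a $(0+\varepsilon)$-reduction $T\to E(\interface)$. Surjectivity is preserved because a composition of surjections is surjective.
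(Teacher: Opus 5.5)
Your proposal is correct and follows essentially the same route as the paper. It gets well-definedness from the triangle inequality against $\Delta_{\mathcal I}$, surjectivity and matching next-state pairs from equality of supports, the kernel bound by reducing the joint-kernel total variation to the one-step output law bounded by $d_\infty(\mathcal I^h,\mathcal J^h)$, the initial condition from exact preservation of the Dirac mass, and the final claim by composing the exact reduction of Proposition~\ref{prop:epsilon_transducer_minimal} via Proposition~\ref{prop:compose_approx_homo}, exactly as the paper does in the proof of Corollary~\ref{coro:existence_delta_min_predictive}.
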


The intuition for the condition $\varepsilon < \frac{\Delta_\interface}{2}$ comes from the triangle inequality: if we want to prove that $\phi$ is well defined, we need to ensure that if $h_1 \sim h_2$ in $\interfaceJ$ then $h_1 \sim h_2$ in $\interface$. This is equivalent to verifying that $\interface^{h_1} = \interface^{h_2}$, and by triangle inequality we see that
\begin{align*}
    d_\infty(\interface^{h_1},\interface^{h_2})
    &\leq
    d_\infty(\interface^{h_1},\interfaceJ^{h_1})
    +
    d_\infty(\interfaceJ^{h_1},\interfaceJ^{h_2})
    +
    d_\infty(\interfaceJ^{h_2},\interface^{h_2})
    \\
    &\leq 2\varepsilon
    <\Delta_\interface.
\end{align*}
Then, by definition of $\Delta_\interface$ it must be the case that $\interface^{h_1} = \interface^{h_2}$.

As a corollary, we obtain the existence of $\delta$-minimums for the set of predictive transducers.

\begin{corollary}\label{coro:existence_delta_min_predictive}
    Let $\interface$ be an interface such that $\Delta_{\interface} > 0$. Then, for every $0 < \varepsilon < \min\left\{1,\frac{\Delta_\interface}{2}\right\}$ and every predictive transducer $T$ implementing an interface $\interfaceJ$ with $d_{\mathrm{res}}(\interface, \interfaceJ) \leq \varepsilon$ there is a $\varepsilon$-reduction from $T$ to $\mathsf E(\interface)$.
\end{corollary}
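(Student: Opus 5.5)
The corollary follows by composing the exact reduction of Proposition~\ref{prop:epsilon_transducer_minimal} with the approximate reduction of Theorem~\ref{thm:predictive-residual-stability}, using Proposition~\ref{prop:compose_approx_homo} to add the errors. Let $T$ be a predictive transducer implementing $\interfaceJ$ with $d_{\mathrm{res}}(\interface,\interfaceJ)\leq\varepsilon$.

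\textbf{Step 1 (exact reduction).} By Proposition~\ref{prop:epsilon_transducer_minimal} there is a reduction $\psi: T\to E(\interfaceJ)$. An exact reduction satisfies \eqref{eq:exact_kernel_pushforward} and \eqref{eq:exact_initial_pushforward} with equality. Hence the TV discrepancies in \eqref{eq:approx_kernel_pushforward} and \eqref{eq:approx_initial_pushforward} vanish, and $\psi$ is a $0$-reduction, so $T\reducesTo{0}E(\interfaceJ)$.

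\textbf{Step 2 (approximate reduction).} The hypotheses $\Delta_\interface>0$ and $\varepsilon<\min\{1,\Delta_\interface/2\}$ are exactly those of Theorem~\ref{thm:predictive-residual-stability}. That theorem gives a surjective $\varepsilon$-reduction $\phi: E(\interfaceJ)\to E(\interface)$ with $\phi([h]_\interfaceJ)=[h]_\interface$.

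\textbf{Step 3 (composition).} Proposition~\ref{prop:compose_approx_homo} applied to $T\reducesTo{0}E(\interfaceJ)\reducesTo{\varepsilon}E(\interface)$ gives $T\reducesTo{\varepsilon}E(\interface)$ via the map $\phi\circ\psi$, with $f=g=\idfun{}$. This map is surjective because both factors are surjective, so it is an $\varepsilon$-reduction and not merely an $\varepsilon$-homomorphism.

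\textbf{Main obstacle.} The only point needing care is whether Proposition~\ref{prop:compose_approx_homo}, stated for homomorphisms, preserves the reduction structure. It does: identity maps on alphabets compose to the identity, and surjections compose to a surjection. The error bound carries over unchanged, since composing pushforwards and applying the triangle inequality for TV gives $0+\varepsilon=\varepsilon$.
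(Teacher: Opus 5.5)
Your proposal is correct and is essentially the paper's own proof: you view the exact reduction $T\to E(\interfaceJ)$ from Proposition~\ref{prop:epsilon_transducer_minimal} as a $0$-reduction and compose it, via Proposition~\ref{prop:compose_approx_homo}, with the $\varepsilon$-reduction $E(\interfaceJ)\to E(\interface)$ of Theorem~\ref{thm:predictive-residual-stability}, for a total error of $0+\varepsilon=\varepsilon$. Your extra check that the composite is a reduction (identity alphabet maps, surjective state map) matches the closing paragraph of the paper's proof of Proposition~\ref{prop:compose_approx_homo}.
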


This is a positive result regarding convergent structure, but its application is restricted to scenarios where the residual distance makes sense. In the following we describe an example of a situation in which two different stochastic systems can induce interfaces which are close according to $d_{\mathrm{res}}$.


\begin{example}
    \normalfont Let $\states=\outputalph=\{0,1,2\}$ and $\inputalph = \{\star\}$ (i.e. the dynamics are actionless because there is a single action). and consider the transition matrix
    \[
        P=
        \begin{pmatrix}
            1/2 & 1/4 & 1/4\\
            1/4 & 1/2 & 1/4\\
            1/4 & 1/4 & 1/2
        \end{pmatrix}.
    \]
    Let state $i$ output $i$ before making a transition. Thus,
    the transition kernel is
    \[
        \kappa(j,o\mid i)
        =
        P(i,j)\mathbf 1_{\{o=i\}}.
    \]
    Consider two transducers $T$ and $T_\varepsilon$ using the kernel
    $\kappa$, but with different initial distributions given by $p=
        \left(\frac13,\frac13,\frac13\right)$ for $T$ and $p_\varepsilon=
        \left(
            \frac13+\varepsilon,
            \frac13-\varepsilon,
            \frac13
        \right)$ for $T_\varepsilon$. See Figure~\ref{fig:ergodic-dres-example} for a visual description.
    
    \begin{figure}[t]
        \centering
        \begin{tikzpicture}[
            >=Latex,
            state/.style={
                circle,
                draw,
                minimum size=9mm,
                inner sep=0pt
            },
            transition/.style={<->,thin},
            self/.style={->,thin},
            model/.style={
                draw,
                rounded corners,
                inner sep=6pt,
                align=center
            },
            edgeLabel/.style={
                fill=white,
                inner sep=1pt,
                font=\small
            }
        ]
            \node[font=\small\bfseries] at (0,2)
                {Transition matrix};
    
            \node[state] (s0) at (0,0.4)      {$s_0$};
            \node[state] (s1) at (-1.25,-1.3) {$s_1$};
            \node[state] (s2) at (1.25,-1.3)  {$s_2$};
    
            \draw[transition]
                (s0) -- node[edgeLabel,left] {$\frac14$} (s1);
            \draw[transition]
                (s0) -- node[edgeLabel,right] {$\frac14$} (s2);
            \draw[transition]
                (s1) -- node[edgeLabel,below] {$\frac14$} (s2);
    
            \path[self]
                (s0) edge[loop above,looseness=6]
                    node[edgeLabel,above] {$\frac12$} (s0)
                (s1) edge[loop left,looseness=6]
                    node[edgeLabel,left] {$\frac12$} (s1)
                (s2) edge[loop right,looseness=6]
                    node[edgeLabel,right] {$\frac12$} (s2);

            \node[font=\small\bfseries] at (5,2)
                {Initial distributions};
    
            \node[model] at (5,0.65) {
                $T$\\[2mm]
                $\displaystyle
                p=\left(\frac13,\frac13,\frac13\right)$
            };
    
            \node[model] at (5,-1.3) {
                $T_\varepsilon$\\[2mm]
                $\displaystyle
                p_\varepsilon=
                \left(
                    \frac13+\varepsilon,
                    \frac13-\varepsilon,
                    \frac13
                \right)$
            };
        \end{tikzpicture}
    
        \caption{We consider two transducers $T$ and $T_\varepsilon$ with the same transition kernel described by the Markov process on the left but different initial distributions given on the right side. It can be seen that for small enough $\varepsilon > 0$ the induced interfaces satisfy $d_{\mathrm{res}}(\interface_T, \interface_{T_\varepsilon}) \leq \varepsilon < \Delta_\interface/2$.}
        \label{fig:ergodic-dres-example}
    \end{figure}
    It can be seen that $d_{\mathrm{res}}(\interface_T, \interface_{T_\varepsilon}) \leq \varepsilon$: note that conditioning on any non-empty history $h$ the interfaces $\interface^h_T$ and $\interface^h_{T_\varepsilon}$ are equal, and moreover $d_\infty(\interface_T, \interface_{T_\varepsilon}) \leq \varepsilon$. Moreover, it can be seen that $\Delta_{\interface_T} = \frac{1}{6}$. Thus, due to Theorem~\ref{thm:predictive-residual-stability} we conclude that if $\varepsilon < \frac{1}{12}$ it holds that the minimal predictive implementation of $\interface_T$ and of $\interface_{T_\varepsilon}$ are $\varepsilon$-close through an approximate reduction.
\end{example}

More generally, different processes with the same transition dynamics but different initial distributions can be close in the $d_{\mathrm{res}}$ metric whenever the effect of conditioning ensures that the current state for both processes is the same.

\section{Conclusion}\label{sec:conclusion}

\textbf{Summary.} In this work we looked for theoretical evidence supporting the empirical observation that different neural network models, sometimes even supported on different architectures, tend to converge to similar representations in their internal layers. To investigate this idea we proposed the approach described in Figure~\ref{fig:diagram_world_model_approach}: we assumed that the reasoning inside the internal layers can be represented through some abstraction, and then we tried to prove some convergence at the level of these objects. In our case, we considered transducers to capture these world models, and to find convergent structure we looked for homomorphisms between them. Previous work had already proven that for the case of linear and predictive transducers there always exists a minimal transducer implementing a given dynamics, and that for all other non-minimal transducers there is a homomorphism to this minimal one \cite{rosas2025ai}. In this paper we improved this result by showing that the existence of such an homomorphism remains even when we consider transducers that do not implement \textit{exactly} the same dynamics.

To do this, we first introduced a notion of approximate homomorphism for standard transducers (Definition~\ref{def:trans_approx_hom}) as well as for the linear ones (Definition~\ref{def:symbolwise-contractive}). We showed that, although there are many ways to define such a family of homomorphisms, the ones proposed here have good algebraic properties: they preserve the dynamics under discounted metrics (Theorems~\ref{teo:approximate_red_maintains_discounted} and~\ref{thm:linear-discounted-continuity}), are composable (Propositions~\ref{prop:compose_approx_homo} and~\ref{prop:linear-approx-composition}), and the linear approximate homomorphism is a direct extension of the standard one (Proposition~\ref{prop:ordinary-to-linear-approx}).

With these tools developed, we looked for convergence theorems in the approximate setting: we looked for conditions under which all transducers implementing similar dynamics share some common structure, which we aimed to capture through approximate homomorphisms (see Figure~\ref{fig:minimize_close} for a visual sketch of the idea). In particular, we showed that (1) For standard transducers, this type of convergence seems to not be possible, even when considering simple dynamics (Propositions~\ref{prop:no-exact-universal-target-below-one} and~\ref{prop:no-ordinary-local-minimization}), (2) For linear transducers, simple enough dynamics (more technically, finite-rank interfaces) always admit a type of convergence between all the minimal linear implementations of $\varepsilon$-close dynamics (Theorem~\ref{thm:finite-rank-hankel-stability}), and, finally, that (3) For predictive transducer, there is a metric such that all transducer implementing dynamics close enough according to this metric will share structural properties between them (Theorem~\ref{thm:predictive-residual-stability}).

We believe these are positive theoretical results regarding the existence of convergent structure, in the context of both both linear and predictive transducers. The former family seems to be the natural model for capturing latent representation in modern neural networks, considering especially that the model parameters live in a vector space. The latter one, although less natural, still has been seen to show up inside the residual stream of transformers \cite{shai2026transformers,shai2024transformers}, and thus understanding these representational properties might shed light into the behaviour of modern LLMs.

\textbf{Limitations.} Through the development of this work we discovered that there are many ways to formalize approximate convergence in the context of transducers. Although the proposals here satisfy good properties and extend previous ones~\cite{ravindran2003smdp} there is still room for developing a more general theory of approximate homomorphisms. Moreover, in the context of linear transducers we had to equip the underlying vector space with a norm to measure distance between vectors, thus introducing another ``parameter'' to our theory. Although many of our main theorems can be proven for other choices of norms and distances (such as Theorem~\ref{thm:finite-rank-hankel-stability}), it would be great to have a more robust understanding of the precise hypothesis required to conclude structural convergence in the approximate setting. 

\textbf{Future work.} We describe some future lines of work starting from the developments in this paper.

\begin{itemize}
    \item \textbf{Experimental validation}: these results give predictions on the structural convergence of deep neural networks under the hypothesis that they use transducers in their latent space. In particular, for models trained on similar data it must be the case that their internal representation can be translated with a linear map (such type of translation scheme is usually referred to as ``stitching'', and has been studied in the literature~\cite{bansal2021revisiting,chen2026transferring}). To validate these hypotheses, it would be interesting to train modern models with data generated from specific linear transducers and then see whether these transducers can be found in the learned representations. To do this, one could reproduce the setting from~\cite{shai2026transformers,shai2024transformers}. 
    \item \textbf{Poset structure}: the original goal of this project was to study the poset of approximate homomorphism between world models. More precisely, we would like to understand how this poset looks like when we order it through the relation induced by the existence of an approximate homomorphism. A central and simple question is: under which hypothesis can we guarantee that this poset has cut-points, in the sense of intermediate models $M$ such that, for any other model $M'$, it holds that $M$ has an homomorphism to $M'$ or the other way around. Given the introduced notions of approximate homomorphisms, this question can now be approached in the context of transducers.
    \item \textbf{Improved abstractions}: Our abstractions are still limited and do not  represent the myriad of forms on which ``abstraction'' and ``reasoning'' can occur inside modern AI models. Two recognizable improvement would be (1) introducing some non-linearity in the notion of linear transducers, to model the effect of the activation functions between layers, and (2) introducing a global error inside the notion of approximate homomorphism to allow homomorphisms that preserve the local structure of most of the states but fail completely in a small subset (such a notion would capture more faithfully what happens during model stitching when the target network is a bigger model than the source network).
\end{itemize}

\section{Acknowledgements}

This work was funded by the Advanced Research + Invention Agency (ARIA) through project code MSAI-SE01-P005. We would like to thank the Dovetail Research team\footnote{\url{https://dovetailresearch.org/}.} for comments and suggestions on the draft of this paper, and we are especially grateful to Alex Altair, Alfred Harwood, Jose Faustino and Neal Batra for fruitful discussions.

\textbf{AI disclosure}: We used ChatGPT 5.5 and 5.6 for proofreading, creating diagrams, writing down simple proofs (such as the proof from Lemma~\ref{lem:atomic-contractivity}) and quickly exploring variants of the results (such as checking whether the proof of Theorem~\ref{thm:finite-rank-hankel-stability} holds for other choices of distances and norms). All content created by AI was revised and rewritten to improve readability and clarity of exposition. 

\appendix

\appendix

\section{Appendix}

\subsection{Comparison of notions of exact homomorphisms}\label{apx:comparison_homo}

We describe the original formulation of homomorphism and see how it differs from ours.

\begin{definition}[Homomorphism from \cite{rosas2025ai}]\label{def:original_conditional_homomorphism}
    Given two transducers $T_1 = (\states_1, \inputalph_1, \outputalph_1, \kappa_1, p_1)$ and $T_2 = (\states_2, \inputalph_2, \outputalph_2, \kappa_2, p_2)$, a homomorphism is given by three mappings $\langle \phi:\states_1\to\states_2,f:\inputalph_1\to\inputalph_2,g:\outputalph_1\to\outputalph_2\rangle$
    satisfying~\eqref{eq:exact_initial_pushforward}, the condition
    \begin{align}\label{eq:original_output_pushforward}
        \Pr{}_{T_2}(o_2|\phi(s_1),f(a_1))
        =
        \sum_{o_1\in g^{-1}(o_2)}
        \Pr{}_{T_1}(o_1|s_1,a_1).
    \end{align}
    for every $s_1 \in \states_1, a_1 \in \inputalph_1$ and $o_2 \in \outputalph_2$, and whenever $\Pr{}_{T_1}(o_1| s_1,a_1) > 0$ the condition
    \begin{align}\label{eq:original_conditional_transition}
        \Pr{}_{T_2}(s_2|\phi(s_1),f(a_1),g(o_1))
        =
        \sum_{s'\in\phi^{-1}(s_2)}
        \Pr{}_{T_1}(s'|s_1,a_1,o_1)
    \end{align}
    for every $o_1 \in \outputalph_1$ and $s_2\in\states_2$.
\end{definition}

We now show that this definition and ours coincide when $g$ is injective.

\begin{proposition}\label{prop:equiv_conditional_homomorphism}
    Definitions~\ref{def:transducer_homomorphism} and~\ref{def:original_conditional_homomorphism} coincide if $g$ is injective.
\end{proposition}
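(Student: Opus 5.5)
Both directions reduce to the chain rule $\kappa(s',o\mid s,a)=\Pr{}_T(o\mid s,a)\,\Pr{}_T(s'\mid s,a,o)$, valid whenever $\Pr{}_T(o\mid s,a)>0$ (the right-hand side is read as $0$ when $\Pr{}_T(o\mid s,a)=0$). The initial condition \eqref{eq:exact_initial_pushforward} is shared by both definitions, so only the kernel conditions need comparison. When $g$ is injective, every fiber $g^{-1}(o_2)$ has at most one element. Hence \eqref{eq:exact_kernel_pushforward} reads
\[
\kappa_2(s_2,o_2\mid\phi(s_1),f(a_1))=\sum_{s'\in\phi^{-1}(s_2)}\kappa_1(s',o_1\mid s_1,a_1)
\]
when $o_2=g(o_1)$, and it reads $\kappa_2(s_2,o_2\mid\phi(s_1),f(a_1))=0$ when $o_2\notin g(\outputalph_1)$.

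\emph{Definition~\ref{def:transducer_homomorphism} $\Rightarrow$ Definition~\ref{def:original_conditional_homomorphism}.} First sum \eqref{eq:exact_kernel_pushforward} over $s_2\in\states_2$. Since the fibers $\phi^{-1}(s_2)$ partition $\states_1$, this gives exactly \eqref{eq:original_output_pushforward}. Next, fix $o_1$ with $\Pr{}_{T_1}(o_1\mid s_1,a_1)>0$. By the output identity just obtained and injectivity of $g$, we get $\Pr{}_{T_2}(g(o_1)\mid\phi(s_1),f(a_1))=\Pr{}_{T_1}(o_1\mid s_1,a_1)>0$. Dividing \eqref{eq:exact_kernel_pushforward} with $o_2=g(o_1)$ by this common positive quantity yields \eqref{eq:original_conditional_transition}.

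\emph{Definition~\ref{def:original_conditional_homomorphism} $\Rightarrow$ Definition~\ref{def:transducer_homomorphism}.} Fix $s_1,a_1,s_2,o_2$ and split into cases.
\begin{itemize}
\item[(i)] $o_2=g(o_1)$ and $\Pr{}_{T_1}(o_1\mid s_1,a_1)>0$. Multiply \eqref{eq:original_conditional_transition} by \eqref{eq:original_output_pushforward}; the fiber of $o_2$ is $\{o_1\}$, so both output probabilities equal $\Pr{}_{T_1}(o_1\mid s_1,a_1)$. The chain rule on both sides then gives \eqref{eq:exact_kernel_pushforward}.
\item[(ii)] The fiber $g^{-1}(o_2)$ is empty, or its unique element has zero probability. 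Then \eqref{eq:original_output_pushforward} gives $\Pr{}_{T_2}(o_2\mid\phi(s_1),f(a_1))=0$, so $\kappa_2(s_2,o_2\mid\cdot)=0$. The right-hand side of \eqref{eq:exact_kernel_pushforward} is also $0$: it is either an empty sum, or a sum of $\kappa_1(s',o_1\mid s_1,a_1)$ terms, each bounded by $\Pr{}_{T_1}(o_1\mid s_1,a_1)=0$.
\end{itemize}

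The only delicate point is the bookkeeping of zero-probability outputs in case (ii), which is where the conditional definition is silent. Injectivity is used precisely so that the fiber sums collapse: without it, the conditional version mixes several $o_1$ with different posteriors, and the equivalence fails.
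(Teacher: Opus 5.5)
Your proof is correct and follows essentially the same route as the paper. You sum the joint-kernel identity over $s_2$ and divide by the output probability, which is positive by injectivity, for one direction. For the converse you multiply the two conditional identities, and handle the empty-fiber and zero-probability cases separately by nonnegativity. You explicitly verify $\Pr{}_{T_2}(g(o_1)\mid\phi(s_1),f(a_1))>0$ before dividing, which the paper leaves implicit.
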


\begin{proof}
    Let $T_1 = (\states_1, \inputalph_1, \outputalph_1, \kappa_1, p_1)$ and $T_2 = (\states_2, \inputalph_2, \outputalph_2, \kappa_2, p_2)$ be transducers, and let $\langle \phi, f, g\rangle$ be maps $\phi:\states_1 \to \states_2$, $f:\inputalph_1 \to \inputalph_2$ and $g:\outputalph_1 \to \outputalph_2$. We will prove that this tuple satisfies Definition~\ref{def:transducer_homomorphism} if and only if it satisfies Definition~\ref{def:original_conditional_homomorphism}.

    Assume first that the joint-kernel condition~\eqref{eq:exact_kernel_pushforward} holds. Summing both sides over $s_2\in\states_2$ gives
    \begin{align*}
        \Pr{}_{T_2}(o_2|\phi(s_1),f(a_1))
        &=
        \sum_{s_2\in\states_2}\kappa_2(s_2,o_2|\phi(s_1),f(a_1))\\
        &=
        \sum_{s_2\in\states_2}
        \sum_{\substack{s'\in\phi^{-1}(s_2)\\ o'\in g^{-1}(o_2)}}
        \kappa_1(s',o'|s_1,a_1)\\
        &=
        \sum_{o'\in g^{-1}(o_2)}
        \Pr{}_{T_1}(o'|s_1,a_1),
    \end{align*}
    which is exactly~\eqref{eq:original_output_pushforward}. Now fix $o_1\in\outputalph_1$ such that $\Pr{}_{T_1}(o_1|s_1,a_1)>0$. Since $g$ is injective, $g^{-1}(g(o_1))=\{o_1\}$. Hence
    \begin{align*}
        \Pr{}_{T_2}(s_2|\phi(s_1),f(a_1),g(o_1))
        &=
        \frac{\kappa_2(s_2,g(o_1)|\phi(s_1),f(a_1))}
        {\Pr{}_{T_2}(g(o_1)|\phi(s_1),f(a_1))}\\
        &=
        \frac{\sum_{s'\in\phi^{-1}(s_2)}\kappa_1(s',o_1|s_1,a_1)}
        {\Pr{}_{T_1}(o_1|s_1,a_1)}\\
        &=
        \sum_{s'\in\phi^{-1}(s_2)}
        \Pr{}_{T_1}(s'|s_1,a_1,o_1).
    \end{align*}
    Thus the original conditional formulation follows. The initial condition is the same in both definitions.

    Conversely, assume Definition~\ref{def:original_conditional_homomorphism}. We prove~\eqref{eq:exact_kernel_pushforward}. If $o_2\notin g(\outputalph_1)$, then the right-hand side of~\eqref{eq:original_output_pushforward} is zero. Therefore $\Pr{}_{T_2}(o_2|\phi(s_1),f(a_1))=0$, and then $\kappa_2(s_2,o_2|\phi(s_1),f(a_1))=0$ for every $s_2$.

    It remains to consider $o_2\in g(\outputalph_1)$. By injectivity there is a unique $o_1\in\outputalph_1$ such that $g(o_1)=o_2$. If $\Pr{}_{T_1}(o_1|s_1,a_1)>0$, multiplying~\eqref{eq:original_conditional_transition} by~\eqref{eq:original_output_pushforward} gives
    \begin{align*}
        \kappa_2(s_2,o_2|\phi(s_1),f(a_1))
        &=
        \Pr{}_{T_2}(o_2|\phi(s_1),f(a_1))
        \Pr{}_{T_2}(s_2|\phi(s_1),f(a_1),o_2)\\
        &=
        \Pr{}_{T_1}(o_1|s_1,a_1)
        \sum_{s'\in\phi^{-1}(s_2)}
        \Pr{}_{T_1}(s'|s_1,a_1,o_1)\\
        &=
        \sum_{s'\in\phi^{-1}(s_2)}
        \kappa_1(s',o_1|s_1,a_1)\\
        &=
        \sum_{\substack{s'\in\phi^{-1}(s_2)\\ o'\in g^{-1}(o_2)}}
        \kappa_1(s',o'|s_1,a_1).
    \end{align*}
    If $\Pr{}_{T_1}(o_1|s_1,a_1)=0$, then~\eqref{eq:original_output_pushforward} gives $\Pr{}_{T_2}(o_2|\phi(s_1),f(a_1))=0$. Again nonnegativity forces both sides of~\eqref{eq:exact_kernel_pushforward} to be zero. The kernel condition therefore holds in all cases, and the initial condition is shared by the two definitions.
\end{proof}

When $g$ is not injective, the two formulations need not agree. In the original definition, the only outputs that can be coarse-grained are those with the same output laws for each state. Meanwhile, our formulation only requires equality after averaging over the whole fiber $g^{-1}(o)$ and $\phi^{-1}(s)$. As already mentioned, since we focus on reductions this distinction is irrelevant.

\subsection{Total variation}\label{subsec:tv_preliminaries}

We collect here some relevant facts about total variation. All probability spaces in the sequel are finite or countable.

\begin{definition}[Total variation]\label{def:total_variation}
    Let $\mu,\nu\in \Delta(X)$. Their total variation distance is
    \[
        \|\mu-\nu\|_{\mathrm{TV}}
        =
        \sup_{A\subseteq X}|\mu(A)-\nu(A)|
        =
        \frac12\sum_{x\in X}|\mu(x)-\nu(x)|.
    \]
\end{definition}

If $h:X\to Y$ is a map and $\mu\in\Delta(X)$, we write $h_*\mu\in\Delta(Y)$ for the push-forward distribution,
\[
    (h_*\mu)(y)=\sum_{x\in h^{-1}(y)}\mu(x).
\]
If $K:X\to \Delta(Y)$ is a Markov kernel and $\mu\in\Delta(X)$, we write $\mu K\in\Delta(Y)$ for the distribution
\[
    (\mu K)(y)=\sum_{x\in X}\mu(x)K(y|x).
\]

\begin{lemma}\label{lem:tv_basic_properties}
    Let $\mu,\nu\in\Delta(X)$, let $h:X\to Y$ and $r:Y\to Z$ be maps, and let $K,L:X\to \Delta(Y)$ be Markov kernels. Then, the following properties hold:
    \begin{enumerate}
        \item \textbf{Convexity.} If $\lambda_i\geq 0$, $\sum_i\lambda_i=1$, and $\mu_i,\nu_i\in\Delta(X)$, then
        \[
            \left\|\sum_i\lambda_i\mu_i-\sum_i\lambda_i\nu_i\right\|_{\mathrm{TV}}
            \leq
            \sum_i\lambda_i\|\mu_i-\nu_i\|_{\mathrm{TV}}.
        \]
        \item \textbf{Push-forwards compose.}
        \[
            (r\circ h)_*\mu=r_*(h_*\mu).
        \]
        \item \textbf{Push-forward contraction.}
        \[
            \|h_*\mu-h_*\nu\|_{\mathrm{TV}}
            \leq
            \|\mu-\nu\|_{\mathrm{TV}}.
        \]
        In particular, marginalization contracts total-variation.
        \item \textbf{Kernel contraction.}
        \[
            \|\mu K-\nu K\|_{\mathrm{TV}}
            \leq
            \|\mu-\nu\|_{\mathrm{TV}}.
        \]
        \item \textbf{Kernel perturbation bound.} If
        \[
            \sup_{x\in X}\|K(\cdot|x)-L(\cdot|x)\|_{\mathrm{TV}}\leq \varepsilon,
        \]
        then
        \[
            \|\mu K-\nu L\|_{\mathrm{TV}}
            \leq
            \|\mu-\nu\|_{\mathrm{TV}}+\varepsilon.
        \]
    \end{enumerate}
\end{lemma}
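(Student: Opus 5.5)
The five properties are standard, and I will prove each one directly from the $\ell_1$ formula $\|\mu-\nu\|_{\mathrm{TV}}=\frac12\sum_x|\mu(x)-\nu(x)|$. Since all spaces are countable, every sum converges absolutely and can be rearranged freely. The only tool needed is the triangle inequality for absolute values together with the nonnegativity of the weights involved.

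For \emph{convexity}, I expand $\sum_x|\sum_i\lambda_i(\mu_i(x)-\nu_i(x))|$. Applying the triangle inequality inside the absolute value bounds it by $\sum_i\lambda_i\sum_x|\mu_i(x)-\nu_i(x)|$, and exchanging the order of summation is justified by Tonelli, since all terms are nonnegative. Multiplying by $\frac12$ gives the claim. \emph{Composition of push-forwards} is a direct computation: $(r\circ h)^{-1}(z)$ is the disjoint union of the sets $h^{-1}(y)$ over $y\in r^{-1}(z)$, so $\sum_{x\in(r\circ h)^{-1}(z)}\mu(x)=\sum_{y\in r^{-1}(z)}(h_*\mu)(y)$. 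For \emph{push-forward contraction}, I write $|(h_*\mu)(y)-(h_*\nu)(y)|=|\sum_{x\in h^{-1}(y)}(\mu(x)-\nu(x))|\le\sum_{x\in h^{-1}(y)}|\mu(x)-\nu(x)|$ and sum over $y$. The fibres partition $X$, so the right-hand side totals $2\|\mu-\nu\|_{\mathrm{TV}}$. Marginalization is the special case in which $h$ is a coordinate projection.

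For \emph{kernel contraction}, I bound $\sum_y|\sum_x(\mu(x)-\nu(x))K(y|x)|$ by $\sum_x|\mu(x)-\nu(x)|\sum_yK(y|x)$. The inner sum equals $1$ because $K(\cdot|x)$ is a probability distribution. The \emph{kernel perturbation bound} then follows from the triangle inequality for the TV norm:
\[
\|\mu K-\nu L\|_{\mathrm{TV}}\le\|\mu K-\nu K\|_{\mathrm{TV}}+\|\nu K-\nu L\|_{\mathrm{TV}}.
\]
The first term is at most $\|\mu-\nu\|_{\mathrm{TV}}$ by kernel contraction. The second term is $\|\sum_x\nu(x)K(\cdot|x)-\sum_x\nu(x)L(\cdot|x)\|_{\mathrm{TV}}$, which by convexity (with weights $\lambda_x=\nu(x)$) is at most $\sum_x\nu(x)\|K(\cdot|x)-L(\cdot|x)\|_{\mathrm{TV}}\le\varepsilon$.

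None of these steps is a real obstacle. The only point that needs care is the countable setting: the interchange of sums in the convexity and kernel arguments is justified because all terms are nonnegative, or because they are absolutely summable and bounded by $2$. Convexity must also be allowed to hold for countably many weights, which is exactly what the perturbation bound uses. I will therefore state and prove convexity directly for countable families.
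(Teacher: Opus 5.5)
Your proposal is correct and follows essentially the paper's proof. Both use the $\ell_1$ formula for convexity and kernel contraction, and both get the perturbation bound from the triangle inequality, kernel contraction and convexity with weights $\nu(x)$. The only difference is in push-forward contraction: the paper uses the supremum-over-events characterization (each $h^{-1}(B)$ is an event in $X$), whereas you sum the $\ell_1$ differences over the fibres of $h$; both are one-line arguments.
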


\begin{proof}
    Convexity follows directly from the $\ell^1$ expression for total variation and the triangle inequality:
    \begin{align*}
        \left\|\sum_i \lambda_i \mu_i - \sum_i \lambda_i \nu_i\right\|_{TV}&=\frac12\sum_x\left|\sum_i\lambda_i(\mu_i(x)-\nu_i(x))\right|\\
        &\leq
        \sum_i\lambda_i\frac12\sum_x|\mu_i(x)-\nu_i(x)|\\
        &= \sum_i \lambda_i \|\mu_i - \nu_i\|_{\mathrm{TV}}.
    \end{align*}
    The composition identity is immediate from the definition of push-forward. For push-forward contraction, use the supremum characterization:
    \[
        \|h_*\mu-h_*\nu\|_{\mathrm{TV}}
        =
        \sup_{B\subseteq Y}|\mu(h^{-1}(B))-\nu(h^{-1}(B))|
        \leq
        \sup_{A\subseteq X}|\mu(A)-\nu(A)|
        =
        \|\mu-\nu\|_{\mathrm{TV}}.
    \]
    Kernel contraction follows from the $\ell^1$ expression:
    \begin{align*}
        \|\mu K-\nu K\|_{\mathrm{TV}}
        &=
        \frac12\sum_y\left|\sum_x(\mu(x)-\nu(x))K(y|x)\right|\\
        &\leq
        \frac12\sum_x|\mu(x)-\nu(x)|\sum_yK(y|x)
        =
        \|\mu-\nu\|_{\mathrm{TV}}.
    \end{align*}
    Finally,
    \[
        \|\mu K-\nu L\|_{\mathrm{TV}}
        \leq
        \|\mu K-\nu K\|_{\mathrm{TV}}+
        \|\nu K-\nu L\|_{\mathrm{TV}}.
    \]
    The first term is bounded by $\|\mu-\nu\|_{\mathrm{TV}}$ by kernel contraction, and the second by
    \[
        \sum_x\nu(x)\|K(\cdot|x)-L(\cdot|x)\|_{\mathrm{TV}}
        \leq \varepsilon
    \]
    by convexity. This proves the perturbation bound.
\end{proof}

\subsection{Deferred proofs}

\begin{proof}[Proof of Lemma~\ref{lem:prediction-map}]
    We first check that $\rho_G$ is well-defined. Suppose that
    $\sum_i \alpha_i M_{w_i}\xi=0$. Then, for every suffix $v\in\Sigma^*$,
    \[
        \sum_i \alpha_i h_\interface(w_i)(v)
        =
        \sum_i \alpha_i F_\interface(w_i v)
        =
        \sum_i \alpha_i \lambda(M_vM_{w_i}\xi)
        =
        \lambda\!\left(M_v\sum_i \alpha_iM_{w_i}\xi\right)
        =0.
    \]
    Hence $\sum_i\alpha_i h_\interface(w_i)=0$, so the assignment is well-defined. Linearity is immediate from the definition.
    It is surjective because $V_\interface$ is spanned by the rows
    $h_\interface(w)$. The identities $\rho_G\xi=\xi_\interface$, $\rho_G M_\sigma=R^\interface_\sigma\rho_G$ and $\lambda_\interface\rho_G=\lambda$
    follow directly.
\end{proof}

\begin{proof}[Proof of Proposition~\ref{prop:epsilon_transducer_minimal}]
    Starting from $[\epsilon]_{\interface}$, Eq.~\eqref{eq:epsilon_kernel} reproduces the conditional output probabilities of $\interface$ after every admissible history. Hence $E(\interface)$ implements $\interface$. Its state after observing $h$ is $[h]_{\interface}$, so the residual future law is determined by the current state. Thus, it is predictive.
    
    Now let $T$ be a predictive transducer implementing $\interface$. For each reachable $s\in\states$, choose an admissible history $h$ such that $q_T(s\mid h)>0$ and define
    \begin{align}
        \phi_T(s)=[h]_{\interface}.
        \label{eq:predictive_reduction_map}
    \end{align}
    This is well defined: if both $h$ and $h'$ are compatible with $s$, predictivity gives
    $\interface^h=\interface_{T,s}=\interface^{h'}$. It is surjective because every admissible history has at least one state in the support of its posterior. Also, every state with positive initial probability is compatible with the empty history, so
    $(\phi_T)_*p=\delta_{[\epsilon]_{\interface}}$.
    
    Fix $s$ and choose a compatible history $h$. Predictivity gives
    $\Pr{}_T(o\mid s,a)=\Pr{}_{\interface^h}(o\mid a)$. Moreover, whenever
    $\kappa(s',o\mid s,a)>0$, the state $s'$ is compatible with the extended history $h(a,o)$, and therefore
    $\phi_T(s')=[h(a,o)]_{\interface}$. Thus all the probability mass associated with output $o$ is pushed forward to the unique state prescribed by Eq.~\eqref{eq:epsilon_kernel}, and
    \[
        (\phi_T\times\idfun)_*\kappa(\cdot,\cdot\mid s,a)
        =
        \kappa_{\epsilon}(\cdot,\cdot\mid\phi_T(s),a).
    \]
    Hence $\phi_T$ is a reduction.
\end{proof}

\begin{proof}[Proof of Theorem~\ref{teo:approximate_red_maintains_discounted}]
    Let \(\phi:\states_1\to \states_2\) be the map of the
    \(\varepsilon\)-reduction. Since a reduction preserves the input
    and output alphabets, let's write the common alphabets of both transducers as \(\inputalph\) and \(\outputalph\).
    Fix a section \(r:\states_2\to \states_1\) of \(\phi\), so that
    \(\phi(r(u))=u\) for every \(u\in \states_2\).
    
    For \(\mathbf a\in \inputalph^n\), define two probability distributions
    \(P_{\mathbf a}\) and \(Q_{\mathbf a}\) on \(\outputalph^n\times \states_2\) by
    \begin{align*}
    P_{\mathbf a}(\mathbf o,u)
    &=
    \sum_{s\in\phi^{-1}(u)}
     \Pr{}_{T_1}\!
     \left(
       O_{1:n}=\mathbf o,\,
       S_n^1=s
       \mid A_{1:n}=\mathbf a
     \right),
    \\
    Q_{\mathbf a}(\mathbf o,u)
    &=
     \Pr{}_{T_2}\!
     \left(
       O_{1:n}=\mathbf o,\,
       S_n^2=u
       \mid A_{1:n}=\mathbf a
     \right),
    \end{align*}
    where $S_n^i$ is a random variable denoting the state of the transducer $T_i$ at step $n$, $O_{1:n}$ denotes the first $n$ observed outputs and $A_{1:n}$ the first $n$ inputs. Thus, \(P_{\mathbf a}\) is the joint law of the output prefix and the
    coarse-grained state \(\phi(S_n^1)\) under \(T_1\) assuming inputs $\textbf{a}$, whereas
    \(Q_{\mathbf a}\) is the corresponding joint law under \(T_2\). We prove by induction on \(n\) that, for every
    \(\mathbf a\in \inputalph^n\),
    \begin{equation}
    \label{eq:repair-joint-law-induction}
     \|P_{\mathbf a}-Q_{\mathbf a}\|_{\mathrm{TV}}
     \le (n+1)\varepsilon.
    \end{equation}
    This is intuitive: initially the two distribution differ by at most $\varepsilon$ because of the error in the initial distribution, and after each step this error increases by at most $\varepsilon$ because the one-step transitions between $T_1$ and $T_2$ differ locally (i.e. when comparing $s\in \states_1$ with $\phi(s)$) by at most $\varepsilon$
    
    For \(n=0\), the output prefix is empty and
    \[
     P_{\epsilon}(\epsilon,u)=(\phi_*p_1)(u),
     \qquad
     Q_{\epsilon}(\epsilon,u)=p_2(u).
    \]
    Consequently, the initial-distribution condition in the definition
    of an \(\varepsilon\)-reduction gives
    \[
     \|P_{\epsilon}-Q_{\epsilon}\|_{\mathrm{TV}}
     =
     \|\phi_*p_1-p_2\|_{\mathrm{TV}}
     \le\varepsilon.
    \]
    
    Now fix \(\mathbf a\in \inputalph^n\), a next action \(b\in \inputalph\), and
    \((\mathbf o,u)\in  \outputalph^n\times \states_2\). Define a probability
    distribution \(\lambda_{\mathbf a,\mathbf o,u}\) on
    \(\phi^{-1}(u)\) as follows. If
    \(P_{\mathbf a}(\mathbf o,u)>0\), let
    \[
     \lambda_{\mathbf a,\mathbf o,u}(s)
     =
     \frac{
       \Pr{}_{T_1}\!
       \left(
          O_{1:n}=\mathbf o,\,
          S_n^1=s
          \mid \inputalph_{1:n}=\mathbf a
       \right)}
       {P_{\mathbf a}(\mathbf o,u)}.
    \]
    If \(P_{\mathbf a}(\mathbf o,u)=0\), set arbitrarily $\lambda_{\mathbf a,\mathbf o,u}
     =\delta_{r(u)}$. Then, $\lambda_{\textbf{a}, \textbf{o}, u}(s)$ represents the probability for the state of transducer $T_1$ to be $s$ at step $n$ conditioned on $T_1$ being at a state in $\phi^{-1}(u)$.
    
    For \(\mathbf o\in \outputalph^n\), let
    \[
     j_{\mathbf o}:\states_2\times \outputalph\longrightarrow \outputalph^{n+1}\times \states_2,
     \qquad
     j_{\mathbf o}(u',o)=(\mathbf o o,u'),
    \]
    where \(\mathbf o o\) denotes concatenation. Consider the Markov kernels
    \(\widehat K_{1,\mathbf a,b}\) and \(\widehat K_{2,b}\) from
    \(\outputalph^n\times \states_2\) to \(\outputalph^{n+1}\times \states_2\) given by
    \begin{align*}
    \widehat K_{1,\mathbf a,b}
     \bigl(\mathord{\cdot}\mid\mathbf o,u\bigr)
    &=
    (j_{\mathbf o})_*
    \left[
     \sum_{s\in\phi^{-1}(u)}
     \lambda_{\mathbf a,\mathbf o,u}(s)\,
     (\phi\times\operatorname{id}_\outputalph)_*
     \kappa_1(\mathord{\cdot},\mathord{\cdot}\mid s,b)
    \right],
    \\
    \widehat K_{2,b}
     \bigl(\mathord{\cdot}\mid\mathbf o,u\bigr)
    &=
    (j_{\mathbf o})_*
     \kappa_2(\mathord{\cdot},\mathord{\cdot}\mid u,b).
    \end{align*}
    The first kernel performs one step of \(T_1\), coarse-grains the next
    state through \(\phi\), and retains the already observed output
    prefix. The second kernel performs the corresponding operation for
    \(T_2\).
    
    For every \((\mathbf o,u)\), push-forward contraction, convexity of
    total variation, and the one-step condition of the
    \(\varepsilon\)-reduction give
    \begin{align}
    &\left\|
     \widehat K_{1,\mathbf a,b}
           (\mathord{\cdot}\mid\mathbf o,u)
     -
     \widehat K_{2,b}
           (\mathord{\cdot}\mid\mathbf o,u)
    \right\|_{\mathrm{TV}}
    \nonumber\\
    &\quad\le
    \left\|
     \sum_{s\in\phi^{-1}(u)}
     \lambda_{\mathbf a,\mathbf o,u}(s)
     (\phi\times\operatorname{id}_\outputalph)_*
           \kappa_1(\mathord{\cdot},\mathord{\cdot}\mid s,b)
     -
           \kappa_2(\mathord{\cdot},\mathord{\cdot}\mid u,b)
    \right\|_{\mathrm{TV}}
    \nonumber\\
    &\quad\le
    \sum_{s\in\phi^{-1}(u)}
     \lambda_{\mathbf a,\mathbf o,u}(s)
     \left\|
      (\phi\times\operatorname{id}_\outputalph)_*
           \kappa_1(\mathord{\cdot},\mathord{\cdot}\mid s,b)
      -
           \kappa_2(\mathord{\cdot},\mathord{\cdot}\mid\phi(s),b)
     \right\|_{\mathrm{TV}}
    \nonumber\\
    &\quad\le\varepsilon.
    \label{eq:repair-row-kernel-bound}
    \end{align}
    Moreover, by construction of the conditional distributions
    \(\lambda_{\mathbf a,\mathbf o,u}\) it follows that
    \[
     P_{\mathbf a b}
     =
     P_{\mathbf a}\widehat K_{1,\mathbf a,b},
     \qquad
     Q_{\mathbf a b}
     =
     Q_{\mathbf a}\widehat K_{2,b}.
    \]
    Thus, applying the kernel perturbation bound from Lemma~\ref{lem:tv_basic_properties} and then the
    induction hypothesis yields
    \begin{align*}
    \|P_{\mathbf a b}-Q_{\mathbf a b}\|_{\mathrm{TV}}
    &\le
     \|P_{\mathbf a}-Q_{\mathbf a}\|_{\mathrm{TV}}
     +
     \sup_{(\mathbf o,u)}
     \left\|
      \widehat K_{1,\mathbf a,b}(\mathord{\cdot}\mid\mathbf o,u)
      -
      \widehat K_{2,b}(\mathord{\cdot}\mid\mathbf o,u)
     \right\|_{\mathrm{TV}}
    \\
    &\le (n+1)\varepsilon+\varepsilon
     = (n+2)\varepsilon.
    \end{align*}
    This completes the induction.
    
    The output distribution
    \(D_{\interface_{T_1}}(\mathbf a)\) is the marginal of
    \(P_{\mathbf a}\) on \(\outputalph^n\), and
    \(D_{\interface_{T_2}}(\mathbf a)\) is the corresponding marginal of
    \(Q_{\mathbf a}\). Since marginalization contracts total variation,
    \[
     \left\|
       D_{\interface_{T_1}}(\mathbf a)
       -
       D_{\interface_{T_2}}(\mathbf a)
     \right\|_{\mathrm{TV}}
     \le (n+1)\varepsilon
    \]
    for every \(\mathbf a\in \inputalph^n\). Therefore
    \begin{align*}
    d_\gamma(\interface_{T_1},\interface_{T_2})
    &=
     \sum_{n=0}^{\infty}
     \gamma^n
     \sup_{\mathbf a\in \inputalph^n}
     \left\|
       D_{\interface_{T_1}}(\mathbf a)
       -
       D_{\interface_{T_2}}(\mathbf a)
     \right\|_{\mathrm{TV}}
    \\
    &\le
     \varepsilon
     \sum_{n=0}^{\infty}(n+1)\gamma^n
     =
     \frac{\varepsilon}{(1-\gamma)^2}.
    \end{align*}
\end{proof}

\begin{proof}[Proof of Proposition~\ref{prop:compose_approx_homo}]
    Let $\langle \phi_1,f_1,g_1\rangle$ be the $\varepsilon_1$-homomorphism from $T_1$ to $T_2$ and $\langle \phi_2,f_2,g_2\rangle$ the $\varepsilon_2$-homomorphism from $T_2$ to $T_3$. Define the homomorphism $\langle \phi=\phi_2\circ\phi_1,f=f_2\circ f_1,g=g_2\circ g_1\rangle$ from $T_1$ to $T_3$. We will prove that this is a $(\varepsilon_1 + \varepsilon_2)$-homomorphism.

    Fix \(s\in \states_1\) and \(a\in \inputalph_1\). By the composition rule for
    push-forwards, the triangle inequality, and contraction of total
    variation under push-forwards,
    \begin{align*}
    &\left\|
      (\phi\times g)_*
           \kappa_1(\mathord{\cdot},\mathord{\cdot}\mid s,a)
      -
           \kappa_3(\mathord{\cdot},\mathord{\cdot}
              \mid \phi(s),f(a))
     \right\|_{\mathrm{TV}}
    \\
    &\quad\le
    \left\|
     (\phi_2\times g_2)_*
     \left(
       (\phi_1\times g_1)_*
           \kappa_1(\mathord{\cdot},\mathord{\cdot}\mid s,a)
       -
           \kappa_2(\mathord{\cdot},\mathord{\cdot}
              \mid\phi_1(s),f_1(a))
     \right)
    \right\|_{\mathrm{TV}}
    \\
    &\qquad\quad+
    \left\|
     (\phi_2\times g_2)_*
           \kappa_2(\mathord{\cdot},\mathord{\cdot}
              \mid\phi_1(s),f_1(a))
     -
           \kappa_3(\mathord{\cdot},\mathord{\cdot}
              \mid\phi_2(\phi_1(s)),f_2(f_1(a)))
    \right\|_{\mathrm{TV}}
    \\
    &\quad\le \varepsilon_1+\varepsilon_2.
    \end{align*}
    The initial distributions satisfy
    \begin{align*}
    \|\phi_*p_1-p_3\|_{\mathrm{TV}}
    &\le
     \|(\phi_2)_*((\phi_1)_*p_1)-(\phi_2)_*p_2\|_{\mathrm{TV}}
     +
     \|(\phi_2)_*p_2-p_3\|_{\mathrm{TV}}
    \\
    &\le
     \|(\phi_1)_*p_1-p_2\|_{\mathrm{TV}}
     +
     \|(\phi_2)_*p_2-p_3\|_{\mathrm{TV}}
    \\
    &\le \varepsilon_1+\varepsilon_2.
    \end{align*}
    Thus
    \(\langle\phi_2\circ\phi_1,f_2\circ f_1,g_2\circ g_1\rangle\)
    is an \((\varepsilon_1+\varepsilon_2)\)-homomorphism.
    
    If both original maps are reductions, then the action and output maps
    are identities and \(\phi_1,\phi_2\) are surjective. Hence
    \(\phi_2\circ\phi_1\) is surjective, so the composition is an
    \((\varepsilon_1+\varepsilon_2)\)-reduction.
\end{proof}

\begin{proof}[Proof of Proposition~\ref{prop:ordinary-to-linear-approx}]
    For $i\in\{1,2\}$, let $V_i=\mathbb{R}^{S_i}$ with the $\ell_1$ norm, let
    $\xi_i=p_i$, and define
    \[
        \lambda_i(x)=\sum_{s\in S_i}x_s,
        \qquad
        M^{i}_{a,o}e_s=\sum_{t\in S_i}\kappa_i(t,o\mid s,a)e_t.
    \]
    Each $M^i_{a,o}$ is nonnegative and column-substochastic, and hence is an
    $\ell_1$ contraction. Moreover,
    $\|\xi_i\|_1=\|\lambda_i\|_{V_i^*}=1$, so these linearizations are contractive.
    
    Define $L_\phi e_s=e_{\phi(s)}$ and extend linearly. Since $\phi$ is
    surjective, so is $L_\phi$, and $\|L_\phi\|_{1\to1}=1$. The initial-state
    condition of the ordinary reduction gives
    \[
        \|L_\phi\xi_1-\xi_2\|_1
        =2\|\phi_*p_1-p_2\|_{\mathrm{TV}}
        \leq 2\varepsilon.
    \]
    For a fixed $s\in \states_1$ and $a \in \inputalph$, the vector
    \(
     (L_\phi M^1_{a,o}-M^2_{a,o}L_\phi)e_s
    \)
    is the $o$-component of the difference between the two joint laws on
    $\states_2\times \outputalph$ appearing in the definition of an ordinary
    $\varepsilon$-reduction. Namely, 
    \begin{align*}
        (L_\phi M^1_{a,o}-M^2_{a,o}L_\phi)e_s &= \sum_{t_1 \in \states_1} \kappa_1(t_1, o | s, a) e_{\phi(t_1)} - \sum_{t_2 \in \states_2} \kappa_2 (t_2, o| \phi(s), a) e_{t_2}\\
        &= \sum_{t_2 \in \states_2} \left[ \sum_{t_1 \in \phi^{-1}(t_2)} \kappa_1(t_1, o| s, a) - \kappa_2 (t_2, o | \phi(s), a)\right] e_{t_2}
    \end{align*}
    Consequently,
    \[
      \|(L_\phi M^1_{a,o}-M^2_{a,o}L_\phi)e_s\|_1
      \leq 2\varepsilon.
    \]
    Taking the maximum over the columns gives
    \[
      \|L_\phi M^1_{a,o}-M^2_{a,o}L_\phi\|_{1\to1}
      \leq 2\varepsilon.
    \]
    Finally, $\lambda_2L_\phi=\lambda_1$. Thus $L_\phi$ is a
    $2\varepsilon$-linear reduction. The factor $2$ shows up because of the normalization $\frac{1}{2}$ in total variation.
\end{proof}

\begin{proof}[Proof of Proposition~\ref{prop:linear-approx-composition}]
    Write
    \(
     G_i=(V_i,\xi_i,\lambda_i,\{M^i_{a,o}\}_{a,o})
    \).
    Since $L$ and $K$ are bounded and surjective, so is $KL$.
    For the initial vectors,
    \[
    \begin{split}
        \|KL\xi_0-\xi_2\|
        &\leq \|K\|\,\|L\xi_0-\xi_1\|+\|K\xi_1-\xi_2\|\\
        &\leq \|K\|\varepsilon_1+\varepsilon_2.
    \end{split}
    \]
    For each symbol $(a,o)$,
    \[
    \begin{split}
        KLM^0_{a,o}-M^2_{a,o}KL
        ={}&K(LM^0_{a,o}-M^1_{a,o}L)\\
           &+(KM^1_{a,o}-M^2_{a,o}K)L,
    \end{split}
    \]
    and hence the operator norm of this difference is at most
    \(
      \|K\|\varepsilon_1+\|L\|\varepsilon_2
    \).
    Finally,
    \[
        \lambda_2KL-\lambda_0
        =(\lambda_2K-\lambda_1)L+(\lambda_1L-\lambda_0),
    \]
    whose dual norm is at most $\|L\|\varepsilon_2+\varepsilon_1$. All three quantities are
    bounded by $c(K)\varepsilon_1+c(L)\varepsilon_2$.
\end{proof}

\begin{proof}[Proof of Theorem~\ref{thm:linear-discounted-continuity}]
    Let $L:V\to W$ be the $\varepsilon$-linear reduction. For
    $a=a_1\cdots a_n\in \inputalph^n$ and $o=o_1\cdots o_n\in \outputalph^n$, write $M_{a,o}=M_{a_n,o_n}\cdots M_{a_1,o_1}$ and $N_{a,o}=N_{a_n,o_n}\cdots N_{a_1,o_1}$. Symbol-wise contractivity implies $\|M_{a,o}\xi\|\leq1$. We claim that
    \begin{equation}
    \label{eq:linear-branch-state-error}
        \|LM_{a,o}\xi-N_{a,o}\xi'\|\leq(n+1)\varepsilon.
    \end{equation}
    For $n=0$, this is the initial-vector condition. If the claim holds at length
    $n$ and $\sigma=(a_{n+1},o_{n+1})$, then
    \[
    \begin{split}
     &\|LM_\sigma M_{a,o}\xi-N_\sigma N_{a,o}\xi'\|\\
     &\quad\leq
     \|(LM_\sigma-N_\sigma L)M_{a,o}\xi\|
     +\|N_\sigma(LM_{a,o}\xi-N_{a,o}\xi')\|\\
     &\quad\leq \varepsilon+(n+1)\varepsilon.
    \end{split}
    \]
    This proves \eqref{eq:linear-branch-state-error} by induction.
    
    For every output word $o\in \outputalph^n$,
    \[
    \begin{split}
     &|\lambda(M_{a,o}\xi)-\lambda'(N_{a,o}\xi')|\\
     &\quad\leq
     |(\lambda-\lambda'L)(M_{a,o}\xi)|
     +|\lambda'(LM_{a,o}\xi-N_{a,o}\xi')|\\
     &\quad\leq (n+2)\varepsilon.
    \end{split}
    \]
    Summing over the $|\outputalph|^n$ output words and dividing by two gives the second
    quantity in the minimum in \eqref{eq:linear-horizon-bound}, the bound by one
    holds because both sides are probability distributions.
    
    Equation~\eqref{eq:linear-general-modulus} follows by summing the finite horizon
    bounds. For each fixed $n$, the remaining summand tends to zero with $\varepsilon$ and is
    bounded by $\gamma^n$. Since $\sum_n\gamma^n<\infty$, we conclude that the right hand side converges to 0.
\end{proof}

\begin{proof}[Proof of Proposition~\ref{prop:no-exact-universal-target-below-one}]
    Fix $\inputalph=\{\ell,g\}$ and $\outputalph=\{\$,0,1\}$. Define $\interface$ as follows: in the first step, the output is the symbol $\$$ with probability 1. Then, a fair coin is thrown, and the output is always $0$ or $1$ for all the next steps, depending on this coin. Thus, for every
    $a_1\cdots a_k\in\inputalph^k$ with $k \geq 2$, we have
    \[
        \Pr{}_{\interface}(\$\mid a_1)=1
    \]
    and
    \[
        \Pr{}_{\interface}(\$0^{k-1}\mid a_1\cdots a_k)
        =
        \Pr{}_{\interface}(\$1^{k-1}\mid a_1\cdots a_k)
        =
        \frac12,
    \]
    Note that the interface is \textit{independent} of the actions taken.
    
    Consider the three-state transducer \(W\) with states $S_W=\{r,t_0,t_1\}$ and initial distribution centred at $r$ and with kernel
    \[
        \kappa_W(t_0,\$\mid r,a)=\frac12,
        \qquad
        \kappa_W(t_1,\$\mid r,a)=\frac12,
    \]
    \[
        \kappa_W(t_0,0\mid t_0,a)=1,
        \qquad
        \kappa_W(t_1,1\mid t_1,a)=1.
    \]
    for every $a \in \inputalph$. Clearly $W$ implements
    $\interface$, and it can be proven that there is no transducer with less than 3 states implementing this interface.
    
    Now, let's define another implementation $U$. Its states are $S_U=\{r,c_{00},c_{01},c_{10},c_{11},z_0,z_1\}$ with initial distribution centered at $r$. We describe the kernel by steps. First, we state that
    \[
        \kappa_U(c_{ij},\$\mid r,a)=\frac14
        \qquad
        (i,j\in\{0,1\}).
    \]
    for every $a \in \inputalph$. Namely, in the first step the transducer transitions with uniform probability to any of the states $c_{ij}$.
    
    From $c_{ij}$, the action $\ell$ reads the first coordinate, while action $g$ reads
    the second one. More precisely, we have
    \[
        \kappa_U(z_i,i\mid c_{ij},\ell)=1,
        \qquad
        \kappa_U(z_j,j\mid c_{ij},g)=1.
    \]
    
    Finally, for every $a\in\inputalph$ we set
    \[
        \kappa_U(z_0,0\mid z_0,a)=1,
        \qquad
        \kappa_U(z_1,1\mid z_1,a)=1.
    \]
    
    It can be checked that the transducer $U$ also implements
    $\interface$: after the first output $\$$, the pair $(i,j)$ is uniformly chosen; and whichever coordinate is read by the second action the final result is a fair bit. Afterwards, the machine moves to $z_0$ or $z_1$, where the same bit is repeated forever.
    
    Now suppose, towards a contradiction, that some $T\in\mathcal L_{\interface}^{0,d}$ receives
    a $\delta$-reduction from every element of $\mathcal L_{\interface}^{0,d}$, with
    $\delta<1$. Since $W\in\mathcal L_{\interface}^{0,d}$, there is a surjective state map
    $S_W\to S_T$. Hence $|S_T|\le |S_W|=3$,
    and the fact that any implementation of $\interface$ must have at least three states implies that $|S_T| = 3$. 
    
    The three states of $T$ can
    be labelled $x_{\$},x_0,x_1$ depending on which node from $W$ is the one mapped to them through the $\delta$-reduction. Note that it must be the case that 
    \[
        \Pr{}_{T}(\star|x_{\star}, a) = 1
    \]
    for every $\star \in \outputalph$. To see this, first note that there must be some state which outputs $\$$ with probability one. Otherwise, it would be impossible for $T$ to implement $\interface$ exactly. With the same reasoning we can see that there must be some state that always outputs $0$ and another one that always outputs $1$. Then, we conclude that there is only one possibility for the reduction from $W$ to $T$ considering that $\delta < 1$.
    
    Since $U\in\mathcal L_{\interface}^{0,d}$, there is a
    $\delta$-reduction $\psi:U\reducesTo{\delta}T$. 
    Consider the state $c_{01}\in S_U$. There are three possible images, and we go through them one by one. 
    
    If $\psi(c_{01})=x_{\$}$ we reach an absurd, since the distributions between those states are at distance $1$: $c_{01}$ assigns 0 probability to outputting $\$$. If $\psi(c_{01})=x_0$, then, under action $g$, the state $c_{01}$ outputs $1$ with
    probability one, while $x_0$ outputs $0$ with probability one. Thus $\psi$ is not a proper $\delta$-reduction with $\delta < 1$. If \(\psi(c_{01})=x_1\), we can argue in the same way.
    
    Therefore, no such transducer \(T\in\mathcal L_{\interface}^{0,d}\) exists.
\end{proof}

\begin{proof}[Proof of Proposition~\ref{prop:no-ordinary-local-minimization}]
    Let \(W\) and \(U\) be the two exact implementations of \(\interface\) constructed in the proof of
    Proposition~\ref{prop:no-exact-universal-target-below-one}. Since $\interface_W=\interface_U=\interface$,
    we have $W,U\in\mathcal L_{\interface}^{\varepsilon, d_{\infty}}$
    for every $\varepsilon\ge 0$. Thus it is enough to prove the following claim: if $W$ $\delta$-reduces to $C$ and $U$ $\delta$-reduces to $C$, then $\delta \geq \frac{1}{2}$.

    Let $\varphi:S_W\to S_C$
    be the state map of a $\delta$-reduction from $W$ to $C$. Since
    ordinary reductions are surjective on states, we have $|S_C|\le |S_W|=3$.
    
    First suppose that \(|S_C|\le 2\). The three states \(r,t_0,t_1\) of \(W\) have one-step
    output marginals $\delta_{\$}$, $\delta_0$ and $\delta_1$ respectively, under every action. Since there are at most two states in \(C\), two of
    \(r,t_0,t_1\) must have the same image \(x\in S_C\). Hence, for two distinct outputs
    \(o\neq o'\), the output marginal \(\operatorname{out}_C(x,a)\) is within total variation
    distance \(\delta\) of both \(\delta_o\) and \(\delta_{o'}\). Marginalization cannot
    increase total variation, so
    \[
        1
        =
        \|\delta_o-\delta_{o'}\|_{\mathrm{TV}}
        \le
        \|\delta_o-\operatorname{out}_C(x,a)\|_{\mathrm{TV}}
        +
        \|\operatorname{out}_C(x,a)-\delta_{o'}\|_{\mathrm{TV}}
        \le
        2\delta.
    \]
    Thus \(\delta\ge 1/2\).
    
    It remains to consider the case $|S_C|=3$, where $\varphi$ is bijective. Write
    \[
        x_{\$}=\varphi(r),
        \qquad
        x_0=\varphi(t_0),
        \qquad
        x_1=\varphi(t_1).
    \]
    as before. The reduction $W\reducesTo{\delta}C$ implies that, for every action $a$,
    \[
        \|\operatorname{out}_C(x_{\$},a)-\delta_{\$}\|_{\mathrm{TV}}\le\delta,
    \]
    \[
        \|\operatorname{out}_C(x_0,a)-\delta_0\|_{\mathrm{TV}}\le\delta,
        \qquad
        \|\operatorname{out}_C(x_1,a)-\delta_1\|_{\mathrm{TV}}\le\delta.
    \]
    
    Let $\psi:S_U\to S_C$ be the map of a $\delta$-reduction from $U$ to $C$. Consider the
    state \(c_{01}\in S_U\). There are three possibilities.
    
    If \(\psi(c_{01})=x_{\$}\), then under action \(\ell\), the state \(c_{01}\) outputs
    \(0\) with probability one. The reduction \(U\reducesTo{\delta}C\) gives
    \[
        \|\delta_0-\operatorname{out}_C(x_{\$},\ell)\|_{\mathrm{TV}}\le\delta.
    \]
    Together with the estimate coming from $\varphi$, we have
    \[
        \|\operatorname{out}_C(x_{\$},\ell)-\delta_{\$}\|_{\mathrm{TV}}\le\delta,
    \]
    and then
    \[
        1
        =
        \|\delta_0-\delta_{\$}\|_{\mathrm{TV}}
        \le
        2\delta.
    \]
    
    If \(\psi(c_{01})=x_0\), then under action \(g\), the state \(c_{01}\) outputs \(1\)
    with probability one. Hence
    \[
        \|\delta_1-\operatorname{out}_C(x_0,g)\|_{\mathrm{TV}}\le\delta.
    \]
    But \(x_0\) is \(\delta\)-close to a \(0\)-state, so
    \[
        \|\operatorname{out}_C(x_0,g)-\delta_0\|_{\mathrm{TV}}\le\delta.
    \]
    Therefore
    \[
        1
        =
        \|\delta_1-\delta_0\|_{\mathrm{TV}}
        \le
        2\delta.
    \]
    
    The last case can be treated in the same way.
\end{proof}

\begin{proof}[Proof of Lemma~\ref{lem:atomic-contractivity}]
    For every \(w,v\in\Sigma^*\) it is the case that $0\le h_\interface(w)(v)=F_\interface(wv)\le1$. Hence, if $r=\sum_{i=1}^m\alpha_i h_\interface(w_i)$, then
    \begin{align*}
    \|r\|_{\mathrm{pred}}
    &=
    \sup_{v\in\Sigma^*}
    \left|
    \sum_{i=1}^m\alpha_i h_\interface(w_i)(v)
    \right|
    \\
    &\le
    \sum_{i=1}^m|\alpha_i|
      \sup_{v\in\Sigma^*}|F_\interface(w_iv)|
    \\
    &\le
    \sum_{i=1}^m|\alpha_i|.
    \end{align*}
    Taking the infimum over all atomic decompositions of \(r\) gives $\|r\|_{\mathrm{pred}}
    \le
    \|r\|_{\mathrm{at},\interface}$.
    
    It is straightforward to prove that $\|\lambda r\|_{\mathrm{at}, \interface} = |\lambda| \|r\|_{\mathrm{at}, \interface}$, and the triangle inequality
    is also easy to prove by concatenating atomic decompositions of the two summands.
    Finally, if \(\|r\|_{\mathrm{at},\interface}=0\), the preceding inequality
    implies that \(\|r\|_{\mathrm{pred}}=0\). Thus \(r(v)=0\) for every
    \(v\in\Sigma^*\), and hence \(r=0\). Therefore
    \(\|\mathord{\cdot}\|_{\mathrm{at},\interface}\) is a norm.
    
   We now check that $\|\xi_\interface\|_{\mathrm{at},\interface}=1$. Since \(\xi_\interface=h_\interface(\epsilon)\) is itself an atom, $\|\xi_\interface\|_{\mathrm{at},\interface}\le1$. On the other hand, we have $\|\xi_\interface\|_{\mathrm{pred}}
    \ge
    |\xi_\interface(\epsilon)|
    =
    F_\interface(\epsilon)
    =
    1$. Thus, using the previous shown relation between the atomic and predictive norms we conclude that $\|\xi_\interface\|_{\mathrm{at},\interface}=1$.
    
    For the readout functional, recall that
    \(\lambda_\interface(r)=r(\epsilon)\). Thus
    \[
    |\lambda_\interface(r)|
    =
    |r(\epsilon)|
    \le
    \|r\|_{\mathrm{pred}}
    \le
    \|r\|_{\mathrm{at},\interface},
    \]
    and consequently $\|\lambda_\interface\|_{(V_\interface,\|\mathord{\cdot}\|_{\mathrm{at},\interface})^*}\le1$.
    Since $\lambda_\interface(\xi_\interface)=F_\interface(\epsilon)=1$ and $\|\xi_\interface\|_{\mathrm{at},\interface}=1$ the reverse inequality also holds, and we obtain $\|\lambda_\interface\|_{(V_\interface,\|\mathord{\cdot}\|_{\mathrm{at},\interface})^*}=1$.
    
    Finally, we prove that the shift operators are nonexpansive in the atomic norm. For any atomic decomposition
    \(r=\sum_{i=1}^m\alpha_i h_\interface(w_i)\), the definition of the shift gives
    \[
    R_\sigma^\interface r
    =
    \sum_{i=1}^m\alpha_i h_\interface(w_i\sigma).
    \]
    Every \(h_\interface(w_i\sigma)\) is again a Hankel-row atom, and hence
    \[
    \|R_\sigma^\interface r\|_{\mathrm{at},\interface}
    \le
    \sum_{i=1}^m|\alpha_i|.
    \]
    Taking the infimum over all atomic decompositions of \(r\) proves $\|R_\sigma^\interface r\|_{\mathrm{at},\interface}
    \le
    \|r\|_{\mathrm{at},\interface}$.
    
    Using all the results, we can conclude that $G_{\interface}^{\mathrm{at}}$ is a contractive transducer.
\end{proof}

\begin{proof}[Proof of Lemma~\ref{lem:atomic-hankel-row-approximation}]
    Because the vectors $h_{\mathcal I}(p_1),\ldots,h_{\mathcal I}(p_d)$ form a basis of $V_{\mathcal I}$, every Hankel row of
    $\mathcal I$ is reconstructed from its values on the selected suffixes. Namely,
    \[
        h_{\mathcal I}(w)=\operatorname{ev}_{\interface}(h_{\mathcal I}(w))C_{\mathcal I}^{-1}B_{\mathcal I}.
    \]
    Consequently,
    \[
        \Pi_{\mathcal J\to\mathcal I}h_{\mathcal J}(w)-h_{\mathcal I}(w)=e_wC_{\mathcal I}^{-1}B_{\mathcal I}
    \]
    where $e_w=\operatorname{ev}_\interfaceJ(h_{\mathcal J}(w))-\operatorname{ev}_{\interface}(h_{\mathcal I}(w))$. The condition $d_\infty(\mathcal I,\mathcal J)\leq\varepsilon$ implies
    $\|e_w\|_\infty\leq\varepsilon$. Since the rows of $B_{\mathcal I}$ are atoms for
    $\|\cdot\|_{\mathrm{at},\mathcal I}$,
    \[
    \begin{split}
        \|e_wC_{\mathcal I}^{-1}B_{\mathcal I}\|_{\mathrm{at},\mathcal I}
        &\leq\|e_wC_{\mathcal I}^{-1}\|_1\\
        &\leq\Gamma_{\mathcal I}\varepsilon.
    \end{split}
    \]
    
    The images of $h_{\mathcal J}(p_1),\ldots,h_{\mathcal J}(p_d)$ have coordinate matrix
    $C_{\mathcal J}C_{\mathcal I}^{-1}$ in the basis given by the rows of $B_{\mathcal I}$. If $C_{\mathcal J}$ is
    invertible, these images span $V_{\mathcal I}$, proving surjectivity.
\end{proof}

\begin{proof}[Proof of Theorem~\ref{thm:finite-rank-hankel-stability}]
    Choose $\overline\varepsilon(\mathcal I)>0$ so that
    $d\overline\varepsilon(\mathcal I)<\sigma_{\min}(C_{\mathcal I})$, where $\sigma_{\min}(C_\interface)$ denotes the minimal singular value of $C_{\interface}$. Then, using standard perturbation arguments we may conclude that $C_{\mathcal J}$ is invertible, and
    Lemma~\ref{lem:atomic-hankel-row-approximation} shows that
    $\Pi_{\mathcal J\to\mathcal I}$ is surjective. More precisely, note that, because $d_{\infty}(\interface, \interfaceJ) \leq \varepsilon$,
    \[
    \|C_\interfaceJ-C_\interface\|_2
    \leq
    \|C_\interfaceJ-C_\interface\|_F
    \leq
    d\varepsilon
    <
    \sigma_{\min}(C_\interface),
    \]
    and therefore, for every $x \neq 0$,
    \begin{align*}
        \|C_{\interfaceJ} x\|_2 &\geq \|C_\interface x\|_2 - \|(C_\interfaceJ - C_\interface)x\|_2\\
        &\geq \sigma_{\min}(C_\interface)\|x\|_2 - \|C_\interfaceJ - C_\interface\|_2 \|x\|_2\\
        &\geq (\sigma_{\min}(C_\interface) - d\varepsilon)\|x\|_2 > 0.
    \end{align*}
    
    Applying Lemma~\ref{lem:atomic-hankel-row-approximation} to the empty word gives
    \[
        \|\Pi_{\mathcal J\to\mathcal I}\xi_{\mathcal J}-\xi_{\mathcal I}\|_{\mathrm{at},\mathcal I}
        \leq\Gamma_{\mathcal I}\varepsilon.
    \]
    For $\sigma\in\Sigma$ and $w\in\Sigma^*$, let
    \(
     E_w=\Pi_{\mathcal J\to\mathcal I}h_{\mathcal J}(w)-h_{\mathcal I}(w)
    \). Since
    \(
     R^{\mathcal I}_\sigma h_{\mathcal I}(w)=h_{\mathcal I}(w\sigma)
    \), we have
    \[
    \begin{split}
     &(\Pi_{\mathcal J\to\mathcal I}R^{\mathcal J}_\sigma-R^{\mathcal I}_\sigma\Pi_{\mathcal J\to\mathcal I})h_{\mathcal J}(w)\\
     &\qquad=E_{w\sigma}-R^{\mathcal I}_\sigma E_w.
    \end{split}
    \]
    By Lemma~\ref{lem:atomic-contractivity}, the shifts of $G_{\mathcal I}$ are non-expansive with respect to the atomic norm, so
    \begin{equation}
    \label{eq:hankel-atomic-dynamics-on-atoms}
        \|(\Pi_{\mathcal J\to\mathcal I}R^{\mathcal J}_\sigma-R^{\mathcal I}_\sigma\Pi_{\mathcal J\to\mathcal I})h_{\mathcal J}(w)\|_{\mathrm{at},\mathcal I}
        \leq2\Gamma_{\mathcal I}\varepsilon.
    \end{equation}
    If $r=\sum_i\alpha_i h_{\mathcal J}(w_i)$, linearity and
    \eqref{eq:hankel-atomic-dynamics-on-atoms} give
    \[
    \begin{split}
     &\|(\Pi_{\mathcal J\to\mathcal I}R^{\mathcal J}_\sigma-R^{\mathcal I}_\sigma\Pi_{\mathcal J\to\mathcal I})r\|_{\mathrm{at},\mathcal I}\\
     &\qquad\leq2\Gamma_{\mathcal I}\varepsilon\sum_i|\alpha_i|.
    \end{split}
    \]
    Taking the infimum over all atomic decompositions of $r$ proves the required
    bound.
    
    The choice $q_1=\epsilon$ makes the readout exact. Indeed, the first column of
    $C_{\mathcal I}$ is $B_{\mathcal I}(\epsilon)$, and hence
    \[
    \begin{split}
        \lambda_{\mathcal I}(\Pi_{\mathcal J\to\mathcal I}r)
        &=(\Pi_{\mathcal J\to\mathcal I}r)(\epsilon)\\
        &=\operatorname{ev}_{\interfaceJ}(r)C_{\mathcal I}^{-1}B_{\mathcal I}(\epsilon)\\
        &=\operatorname{ev}_{\interfaceJ}(r)e_1=r(\epsilon)=\lambda_{\mathcal J}(r).
    \end{split}
    \]
    Thus all three defects are bounded by $2\Gamma_{\mathcal I}\varepsilon$.

    Finally, we show that $\Pi_{\interfaceJ \to \interface}$ is bounded. For every \(w\in\Sigma^*\),
    \[
    \|\Pi_{\interfaceJ\to \interface}h_\interfaceJ(w)\|_{\mathrm{at},\interface}
    \leq
    \|h_\interface(w)\|_{\mathrm{at},\interface}
    +
    \|\Pi_{\interfaceJ\to \interface}h_\interfaceJ(w)-h_\interface(w)\|_{\mathrm{at},\interface}
    \leq
    1+\Gamma_\interface\varepsilon.
    \]
    Consequently, if
    \(r=\sum_{i=1}^m\alpha_i h_\interfaceJ(w_i)\), then
    \[
    \|\Pi_{\interfaceJ\to \interface}r\|_{\mathrm{at},\interface}
    \leq
    (1+\Gamma_\interface\varepsilon)
    \sum_{i=1}^m|\alpha_i|.
    \]
    Taking the infimum over all atomic decompositions of \(r\) we conclude that $\|\Pi_{J\to I}\|\leq1+\Gamma_I\varepsilon$.
\end{proof}

\begin{proof}[Proof of Proposition~\ref{prop:predictive-rare-history-obstruction}]
    Let $\outputalph=\{0,1\}$, $\inputalph = \{\star\}$ (i.e. the dynamics are actionless) and let $\interface$ be the process of independent fair bits.
    Write $D_{\mathcal K}^{(m)}$ for the length-$m$ output law of an interface $\mathcal K$. For $n\geq 1$, define $\interfaceJ_n$ as follows. Its first $n$ outputs are independent
    fair bits. If these outputs are $0^n$, then every later output is $0$; otherwise, all later
    outputs continue to be independent fair bits.
    
    The length-$m$ distributions agree for $m\leq n$. For $m>n$ they differ only on strings
    beginning with $0^n$, and a direct calculation gives
    \[
        \left\|D_{\interface}^{(m)}-D_{\interfaceJ_n}^{(m)}\right\|_{\mathrm{TV}}
        =2^{-n}-2^{-m}.
    \]
    Taking the supremum over $m$ gives
    $d_\infty(\interface,\interfaceJ_n)=2^{-n}$.
    
    The transducer $\mathsf E(\interface)$ has one state, whose output law is
    $\frac12\delta_0+\frac12\delta_1$. In $\mathsf E(\interfaceJ_n)$, the state reached after
    $0^n$ outputs $0$ deterministically. Any state map to the one-state target must send this
    state to the unique state of $\mathsf E(\interface)$. Marginalizing the joint one-step
    kernels to outputs therefore gives
    \[
        \delta
        \geq
        \left\|\delta_0-\left(\tfrac12\delta_0+\tfrac12\delta_1\right)\right\|_{\mathrm{TV}}
        =\frac12.
    \]
    
    For the last claim, surjectivity of a reduction from the one-state transducer
    $\mathsf E(\interface)$ forces $C$ to have one state. Let $\nu$ be its output law. The two
    reductions imply
    \[
        \left\|\nu-\left(\tfrac12\delta_0+\tfrac12\delta_1\right)\right\|_{\mathrm{TV}}
        \leq\delta,
        \qquad
        \|\nu-\delta_0\|_{\mathrm{TV}}\leq\delta.
    \]
    Then, the triangle inequality gives $1/2\leq 2\delta$.
\end{proof}

\begin{proof}[Proof of Theorem~\ref{thm:predictive-residual-stability}]
    Since \(d_{\mathrm{res}}(\interface,\interfaceJ)\leq\varepsilon<1\), the definition of
    \(d_{\mathrm{res}}\) implies that $\operatorname{supp}(\interface)=\operatorname{supp}(\interfaceJ)$. Moreover, for every \(h\) in this common support we have $d_\infty(\interface^h,\interfaceJ^h)\leq \varepsilon$.
    
    We first prove that \(\phi\) is well defined. Suppose that
    \([h]_\interfaceJ=[u]_\interfaceJ\). By definition of predictive equivalence,
    \(\interfaceJ^h=\interfaceJ^u\). Therefore, by the triangle inequality,
    \begin{align*}
    d_\infty(\interface^h,\interface^u)
    &\leq
    d_\infty(\interface^h,\interfaceJ^h)
    +
    d_\infty(\interfaceJ^h,\interfaceJ^u)
    +
    d_\infty(\interfaceJ^u,\interface^u)
    \\
    &\leq 2\varepsilon
    <\Delta_\interface.
    \end{align*}
    If \([h]_\interface\neq [u]_\interface\), the definition of \(\Delta_\interface\) would instead
    give \(d_\infty(\interface^h,\interface^u)\geq\Delta_\interface\), which is a contradiction. Hence
    \([h]_\interface=[u]_\interface\), proving that \(\phi\) is well defined.
    
    The map is surjective. Indeed, every state of \(E(\interface)\) is of the
    form \([h]_\interface\) for some \(h\in\operatorname{supp}(\interface)\). Since the
    interfaces have the same support,
    \(h\in\operatorname{supp}(\interfaceJ)\), and therefore
    \[
    [h]_\interface=\phi([h]_\interfaceJ).
    \]
    
    It remains to verify the approximate one-step condition. For an
    interface \(K\), write
    \[
    \mu_K^h(o\mid a)=\Pr{}_{K^h}(o\mid a)
    \]
    for the one-step output law after \(h\). Fix
    \(h\in\operatorname{supp}(\interface)=\operatorname{supp}(\interfaceJ)\) and \(a\in \inputalph\).
    For every \(o\) having positive conditional probability, the
    canonical transducers move respectively to $[h(a,o)]_\interfaceJ$ and
    $[h(a,o)]_\interface$. By definition of \(\phi\), it holds that $\phi([h(a,o)]_\interfaceJ)=[h(a,o)]_\interface$.
    Furthermore, equality of supports implies that
    \(\mu_\interface^h(o\mid a)>0\) if and only if
    \(\mu_\interfaceJ^h(o\mid a)>0\). Consequently, after pushing the kernel of
    \(E(\interfaceJ)\) forward through \(\phi\), both joint kernels place their
    mass corresponding to \(o\) on the same pair $\bigl([h(a,o)]_\interface,o\bigr)$.
    It follows that
    \begin{align*}
    &\left\|
    (\phi\times\operatorname{id}_\outputalph)_*
    \kappa_{\epsilon,\interfaceJ}
      (\mathord{\cdot},\mathord{\cdot}\mid[h]_\interfaceJ,a)
    -
    \kappa_{\epsilon,\interface}
      (\mathord{\cdot},\mathord{\cdot}\mid[h]_\interface,a)
    \right\|_{\mathrm{TV}}
    \\
    &\hspace{4em}
    =
    \left\|
    \mu_\interfaceJ^h(\mathord{\cdot}\mid a)
    -
    \mu_\interface^h(\mathord{\cdot}\mid a)
    \right\|_{\mathrm{TV}}
    \\
    &\hspace{4em}
    \leq
    d_\infty(\interfaceJ^h,\interface^h)
    \leq\varepsilon.
    \end{align*}
    The initial state is preserved exactly:
    \[
    \phi_*\delta_{[\epsilon]_\interfaceJ}
    =
    \delta_{[\epsilon]_\interface}.
    \]
    Thus \(\phi\) determines an \(\varepsilon\)-reduction
    \(E(\interfaceJ)\to E(\interface)\).
\end{proof}

\begin{proof}[Proof of Corollary~\ref{coro:existence_delta_min_predictive}]
    Let $T$ be a predictive transducer from the statement. By Proposition~\ref{prop:epsilon_transducer_minimal} there is an exact reduction from $T$ to $\mathsf E(\interfaceJ)$. Then, by Theorem~\ref{thm:predictive-residual-stability} there is a $\varepsilon$-reduction from $E(\interfaceJ)$ to $E(\interface)$. Composing them we get the desired result, using Proposition~\ref{prop:compose_approx_homo} to bound the error of the composition.
\end{proof}

\bibliographystyle{plain}
\bibliography{biblio}

\end{document}